\documentclass{article} 
\usepackage{arxiv}  

\usepackage{times}  
\usepackage{helvet}  
\usepackage{courier}  
\usepackage{graphicx} 
\usepackage{natbib}  
\usepackage{caption} 
\frenchspacing  
\setlength{\pdfpagewidth}{8.5in} 
\setlength{\pdfpageheight}{11in} 
%
\usepackage{algorithm}
\usepackage{algorithmic}

%
\usepackage{newfloat}
\usepackage{listings}

\title{A Sample Efficient Alternating Minimization-based Algorithm For Robust Phase Retrieval}
\author{
  Adarsh Barik\\
  Institute of Data Science\\
  National University of Singapore\\
  Singapore, 117602 \\
  \texttt{abarik@nus.edu.sg} \\
  \And
  Anand Krishna\\
  Department of Electrical and Computer Engineering\\
  National University of Singapore\\
  Singapore, 117583 \\
  \texttt{akr@nus.edu.sg} \\
  \And
  Vincent Y. F. Tan\\
  Department of Mathematics\\
  Department of Electrical and Computer Engineering\\
  National University of Singapore\\
  Singapore, 119077 \\
  \texttt{vtan@nus.edu.sg} \\
}

\usepackage{bibentry}

\usepackage{mathtools}
\usepackage{amssymb}
\usepackage{subcaption}
\usepackage{bm}
\usepackage{amsthm}
\usepackage{thm-restate}
\usepackage[inline]{enumitem}
\usepackage{booktabs}
\newcommand{\btheta}{\bm{\theta}}
\newcommand{\bx}{\bm{x}}
\newcommand{\bz}{\bm{z}}
\newcommand{\calK}{\mathcal{K}}
\newcommand{\bmeta}{\bm{\eta}}


\newtheorem{lemma}{Lemma}
\newtheorem{theorem}{Theorem}
\newtheorem{corollary}{Corollary}

\newtheorem{definition}{Definition}
\newtheorem{problem}{Problem}
\newtheorem{assumption}{Assumption}

\newcommand{\inner}[2]{\langle #1, #2 \rangle}
\newcommand{\real}{\mathbb{R}}
\newcommand{\normal}{\mathcal{N}}

\newcommand{\calO}{\mathcal{O}}

\newcommand{\hU}{\hat{U}}

\newcommand{\oracle}{\textsc{LSQ-PHASE-ORACLE}}
\newcommand{\altmin}{\textsc{ALT-MIN-PHASE}}

\newcommand{\E}[1]{\mathbb{E}\left[ #1 \right]}
\newcommand{\prob}[1]{\mathbb{P}\left[ #1 \right]}

\newcommand{\given}{\,\middle\vert\,}
\newcommand{\abs}[1]{\left| #1 \right|}
\newcommand{\newO}[1]{\mathcal{O}\left(#1\right)}

\mathtoolsset{showonlyrefs}

\begin{document}

\maketitle

\begin{abstract}
In this work, we study the robust phase retrieval problem where the task is to recover an unknown signal  $\btheta^* \in \real^d$ in the presence of potentially arbitrarily corrupted magnitude-only linear measurements. We propose an alternating minimization approach that incorporates an oracle solver for a non-convex optimization problem as a subroutine. Our algorithm guarantees convergence to $\btheta^*$ and provides an explicit polynomial dependence of the convergence rate on the fraction of corrupted measurements. We then provide an efficient construction of the aforementioned oracle under a sparse arbitrary outliers model and offer valuable insights into the geometric properties of the loss landscape in phase retrieval with corrupted measurements. Our proposed oracle avoids the need for computationally intensive spectral initialization, using a simple gradient descent algorithm with a constant step size and random initialization instead. Additionally, our overall algorithm achieves nearly linear sample complexity, $\calO(d \, \mathrm{polylog}(d))$.
\end{abstract}

%

\section{Introduction}
\label{sec: introduction}

The problem of phase retrieval consists of recovering an unknown target signal from intensity-only measurements. It has gained wide interest in many areas of engineering, applied physics, and machine learning~\citep{dong2023phase}, such as optics~\citep{walther1963question}, X-ray crystallography~\citep{millane1990phase}, inference of DNA structure~\citep{stefik1978inferring}, and more. Mathematically, the task is to learn an unknown signal $\btheta^* \in \real^d$ from $n$ magnitude-only linear measurements\footnote{While the phase retrieval problem is also studied in the complex domain, we only focus on real signals in this work.}. To ensure the smoothness of the loss function, we describe the data generation process of the phase retrieval problem in the following quadratic form:
\begin{align}
    \label{eq: standard phase retrieval}
    y_i = \inner{\bx_i}{\btheta^*}^2, \quad i \in [n]~,
\end{align}
where $[n]$ is a shorthand to denote the set $\{1, \ldots, n\}$. Borrowing terminology from linear regression literature~\citep{bakshi2021robust}, we term $\bx_i \in \real^d$ to be the $i$-th covariate vector and $y_i \in \real$ to be the $i$-th response in Equation~\eqref{eq: standard phase retrieval}. The tuple $(\bx_i, y_i) \in \real^{d+1}$ is the $i$-th measurement. We study the problem under Gaussian design where each entry $x_{ij}$, for $i \in [n]$ and $j \in [d]$, is drawn i.i.d. from the standard normal distribution $\normal(0, 1)$. Due to the quadratic nature of these measurements, the phase information is lost, making the recovery of the true signal $\btheta^* \in \real^d$ significantly challenging. Consequently, one can only hope to recover the signal up to a variation of its sign. Therefore, the output $\hat{\btheta}$ of any algorithm is measured (in the context of real signals) by evaluating $d(\hat{\btheta}, \btheta^*) \coloneqq \min\big\{ \| \hat{\btheta} - \btheta^* \|, \| \hat{\btheta} + \btheta^* \| \big\}$.
The difficulty is further compounded when some measurements can be {\em   arbitrarily corrupted}. In this work, we aim to develop an algorithm for the phase retrieval problem that is robust to arbitrary corruption in $k$ out of $n$ measurements. To that end, we want to design and analyze algorithms to find $\hat{\btheta}$ and   understand how $d(\hat{\btheta}, \btheta^*)$ depends on $k$ and $n$. Below, we briefly outline the main contributions of this work:
\begin{itemize}
    \item We propose an alternating minimization-based algorithm for phase retrieval with corrupted measurements. With only $n = \Omega\big(\frac{d\,\mathrm{polylog}( d)}{\epsilon^2 \log(\frac{1}{\epsilon})}\big)$ quadratic measurements and a corruption proportion of  $\epsilon = \frac{k}{n}$, our algorithm achieves $d(\hat{\btheta}, \btheta^*) = \tilde{\calO}\big(\sqrt{\epsilon}\big)$ with high probability, even under a strong corruption model~\citep{bakshi2021robust}.
    To the best of our knowledge, this is the first algorithm for robust phase retrieval that provides an explicit expression for $d(\hat{\btheta}, \btheta^*)$ as a function of $k$ and $n$. We also show that our algorithm stops in a finite number of iterations.
    \item Our high-probability guarantees are valid in regimes where $\epsilon \sqrt{\log \epsilon^{-1}} \log^2(\epsilon n) \to 0$
    as $n \to \infty$. This includes the vanishing proportion regimes such as $k = n^{1 - p}$ for $p \in (0, 1]$.
    \item In the first stage of our analysis, we assume the existence of an oracle capable of solving a nonconvex optimization problem to global optimality. In the second stage, we demonstrate that such an oracle can be efficiently constructed if the corruptions are independent of the covariates $\bx_i$. This stage requires only $\calO(d \, \mathrm{polylog}( d))$ quadratic measurements, ensuring that the overall sample complexity is not increased.
\end{itemize}

\section{Solving The Phase Retrieval Problem}
\label{sec: solving phase retrieval}

Considerable effort has been devoted to solving the phase retrieval problem in the uncorrupted setting~\eqref{eq: standard phase retrieval}. The work by \citet{fienup1982phase} surveys many classical methods for addressing phase retrieval. These methods often involve an alternating minimization approach (different from ours), which alternates between recovering signal information and phase information. However, these methods typically either provide only local optimal solutions or lack performance guarantees altogether. Recently, \citet{netrapalli2013phase} proposed an alternating minimization approach and provided global convergence results using $\calO(d \, \mathrm{polylog}( d))$ samples. Modern approaches to the phase retrieval problem can be broadly placed into two main categories.

\subsubsection{Convex Formulations}
Many approaches formulate the problem as a convex optimization problem, often using a semidefinite programming (SDP) formulation. They provide performance guarantees for signal recovery by solving their proposed convex relaxations~\citep{candes2013phaselift, candes2014solving, chen2015exact, demanet2014stable, waldspurger2015phase}. While effective, these approaches suffer from heavy computational costs.
Recently, new convex relaxations have been developed that work in the domain of the original variables, avoiding the high computational complexity associated with SDP relaxation~\citep{goldstein2018phasemax, bahmani2017phase}. These methods offer a more computationally efficient alternative while still providing provable performance guarantees.

\subsubsection{Nonconvex Formulations}

A more natural formulation of the signal recovery problem in phase retrieval leads to a nonconvex program:
    \begin{align}
    \label{eq:nonconvex formulation}
        \hat{\btheta} = \arg\min_{\btheta \in \real^d} \; f(\btheta) \coloneqq  \frac{1}{4n}\sum_{i=1}^n \Big(y_i - \inner{\bx_i}{\btheta}^2\Big)^2~.
    \end{align}
Several methods have been proposed to solve this (or its nonsmooth variant) problem. Based on their initialization method, they can be further divided into two subcategories.
\begin{enumerate}
    \item \emph{Using spectral initialization:} The most notable approaches in this category utilize the Wirtinger flow algorithm or its variants~\citep{candes2015phase, chen2015solving, zhang2016reshaped, wang2017solving}. When initialized using a spectral method, they exhibit global convergence at a linear rate using only $\calO(d \, \mathrm{polylog}( d))$ measurements. However, the spectral initialization requires an eigenvalue decomposition, which can be computationally expensive, involving costs comparable to matrix inversion.
    \item \emph{Using random initialization}: \citet{sun2018geometric} demonstrated that the nonconvex loss landscape of the phase retrieval problem~\eqref{eq:nonconvex formulation} possesses special geometric properties. They proved that with $\calO(d \log^3 d)$ measurements, all the local minima of the loss function are also global minima, and all the saddle points are strict saddle points~\citep{ge2015escaping}. This property allows any saddle point-escaping algorithms, such as Hessian-based methods~\citep{nesterov2006cubic, sun2018geometric}, perturbed gradient descent~\citep{jin2017escape}, stochastic-gradient descent~\citep{ge2015escaping}, and normalized gradient descent~\citep{murray2017revisiting}, to converge to the global minima of the phase retrieval problem~\eqref{eq:nonconvex formulation} without requiring spectral initialization. However, these methods often incur high iteration complexity, typically at least $\calO(d^{2.5})$. Notable exceptions include the work by \citet{tan2023online}, which requires only $\calO(d \, \mathrm{polylog}( d))$ iterations with stochastic gradient updates (on the nonsmooth variant), and the work by \citet{chen2019gradient}, which needs only $\calO(\log d)$ iterations with full gradient updates.
\end{enumerate}

\section{Phase Retrieval With Corruptions}
\label{sec:phase retrieval with corruption}

The primary applications of phase retrieval \citep{walther1963question, millane1990phase} are susceptible to corrupted measurements due to instrument failures, Byzantine sensors, or other measurement errors~\citep{weller2015undersampled}. Therefore, the data generation model in \eqref{eq: standard phase retrieval} can be modified to account for these corruptions as follows:
\begin{align}
    \label{eq: corrupted phase retrieval}
    y_i = \inner{\bx_i}{\btheta^*}^2 + \eta_i, \quad i \in [n]~,
\end{align}
where corruption in $i$-th measurement is denoted as $\eta_i \in \real$ and drawn from an unknown distribution $\mathcal{P}_{\eta}$. We collect all the $\eta_i$'s in a vector $\bmeta \in \real^n$. Furthermore, we assume $\| \bmeta \|_0 = k$. The set $C^* \coloneqq \big\{ i \in [n]\, | \, \eta_i \ne 0 \big\}$ contains the indices of corrupted measurements. The task still remains to learn $\btheta^*$ from the quadratic measurements of the form $(\bx_i, y_i),\ \forall i \in [n]$.
We allow an adversary to arbitrarily corrupt $\epsilon = \frac{k}{n}$ fraction of responses, where $k =k_n$ is allowed to grow with $n$. We adopt a {\em strong corruption model}~\citep{bakshi2021robust} in our setting. Before introducing corruption, the adversary has full information about the measurements and the estimator.
\begin{definition}[Strong Corruption Model]
\label{def: strong corruption model}
    In this corruption model, the data generation process involves two steps:
    \begin{enumerate}
        \item Clean measurements $(\bx_i, y_i), i \in [n]$ are generated according to the noiseless data generation model given by~\eqref{eq: standard phase retrieval}. These measurements are collected in a set $S$.
        \item The adversary selects any subset $C^* \subseteq[n]$ of size $k$. For each $i \in C^*$, the adversary replaces $y_i$ with $y_i + \eta_i$, where $\eta_i$ is drawn from an unknown distribution $\mathcal{P}_{\eta}$. Notably, the adversary can choose $\eta_i$ that depends on the measurements $(\bx_i, y_i)$.
    \end{enumerate}
\end{definition}
This corruption model stands as the most stringent, encompassing a wide array of other response corruption models, including the Huber contamination model and sparse arbitrary outliers model~\citep{huang2023robust}. We note that a strong adversary could consistently set an $\epsilon$ proportion of measurements to be identical, even if those measurements originate from two distinct distributions following \eqref{eq: corrupted phase retrieval}. These distributions could differ by at most $1 - \epsilon$ in terms of total variation distance. Consequently, a high probability exact recovery guarantee becomes unattainable when $\epsilon$ is constant.

\begin{figure*}[!ht]
    \centering
    \begin{subfigure}{0.28\textwidth}
    \includegraphics[scale=0.29]{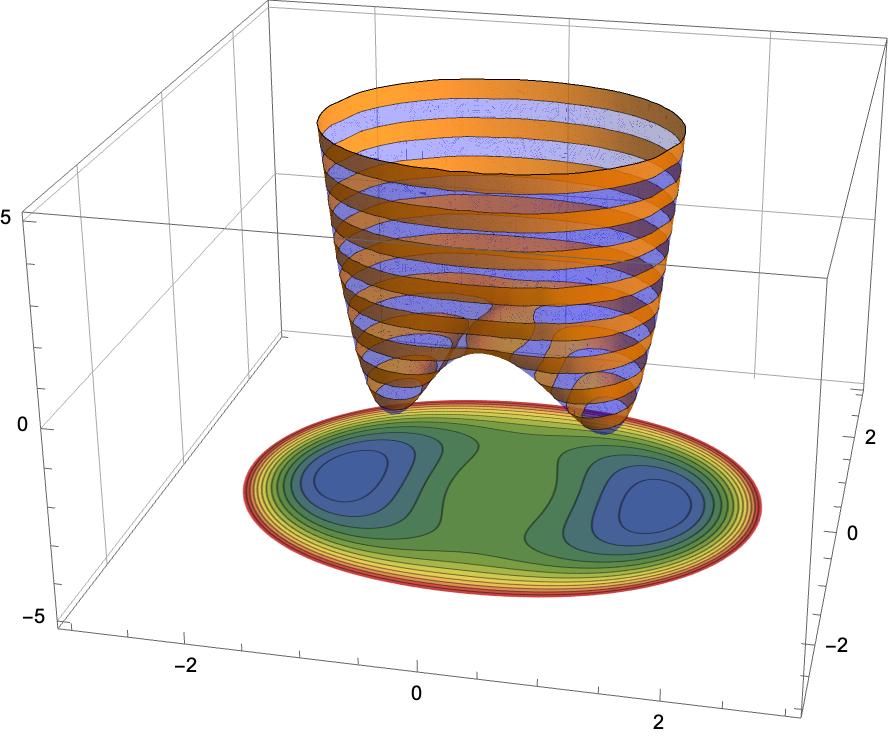}
        \caption{  No corruption }
        \label{fig: geom a}
    \end{subfigure}\hfill
    \begin{subfigure}{0.28\textwidth}
    \includegraphics[scale=0.29]{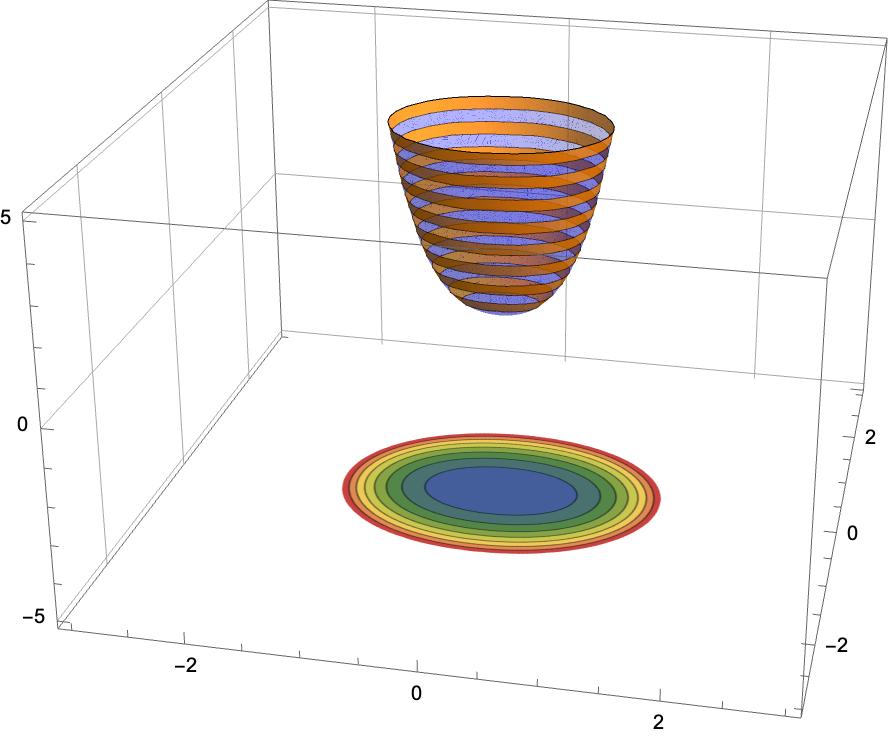}
        \caption{ Average corruption $\bar{\eta} \leq -3 \| \btheta^* \|^2$  }
        \label{fig: geom e}
    \end{subfigure}\hfill
     \begin{subfigure}{0.28\textwidth}
    \includegraphics[scale=0.29]{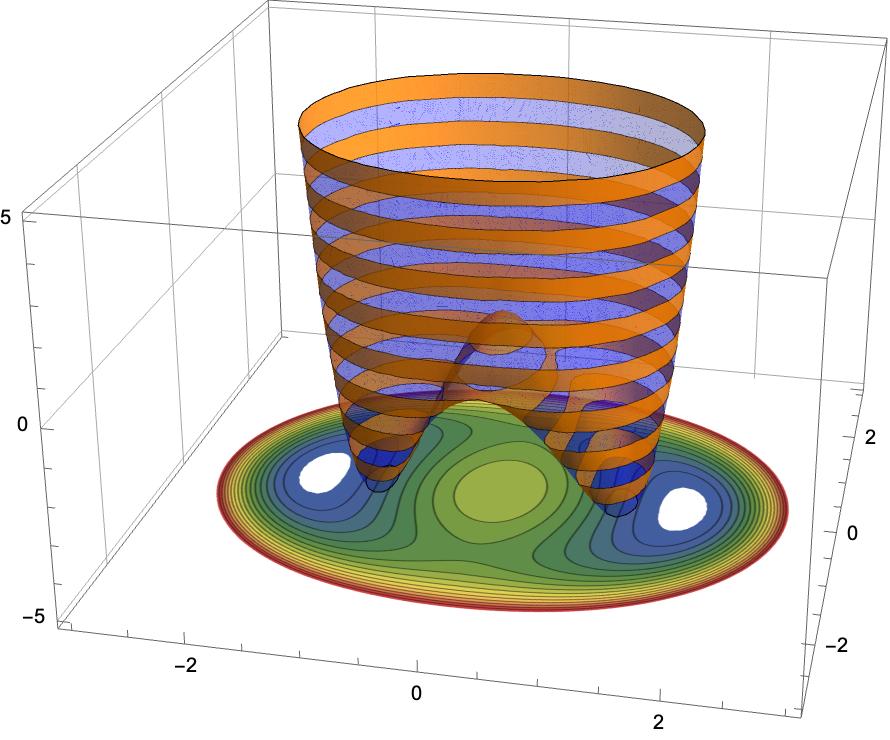}
        \caption{ Average corruption $\bar{\eta} > -3 \| \btheta^* \|^2$ }
        \label{fig: geom f}
    \end{subfigure}
    \caption{ Expected loss landscape for various corruption levels}
    \label{fig:  geometry of loss}
\end{figure*}
\begin{restatable}[Impossibility with constant corruption proportion]{proposition}{propimpossibility}
\label{lem:impossibility with constant corruption}
If the measurements follow the data generation process~\eqref{eq: corrupted phase retrieval} with a corruption proportion $\epsilon > 0$, then for any estimator $\hat{\btheta}$ and any $\delta > 0$:
\begin{align}
    \label{eq: minimax constant corruption}
     \mathbb{P}\big[ d(\hat{\btheta}, \btheta^*) \geq \delta \big]  \geq \frac{\epsilon}{2}~.
\end{align}
\end{restatable}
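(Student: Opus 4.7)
My plan is to invoke Le Cam's two-point method against a pair of well-separated ground-truth signals with matched adversary strategies.

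First, I would fix two candidate signals $\btheta_1, \btheta_2 \in \real^d$ with $d(\btheta_1,\btheta_2) > 2\delta$ (e.g.\ two parallel vectors of sufficiently different norms, chosen so that both $\|\btheta_1-\btheta_2\|$ and $\|\btheta_1+\btheta_2\|$ exceed $2\delta$). By the triangle inequality the events $\{d(\hat\btheta,\btheta_j) < \delta\}$ are then disjoint for $j=1,2$, so any estimator can succeed under at most one of the hypotheses $H_j:\btheta^*=\btheta_j$. Second, I would construct matched corruption laws $\mathcal{P}_\eta^{(1)},\mathcal{P}_\eta^{(2)}$ so that the induced joint distributions $\mathbb{P}_1,\mathbb{P}_2$ of the observed dataset $(\bx_i,y_i)_{i=1}^n$ admit the common-mass decomposition
\begin{equation*}
\mathbb{P}_j \;=\; \epsilon\,\mathbb{Q} \;+\; (1-\epsilon)\,\mathbb{P}_j^{\mathrm{clean}}, \qquad j=1,2,
\end{equation*}
where $\mathbb{P}_j^{\mathrm{clean}}$ is the noise-free law of $(\bx_i,\inner{\bx_i}{\btheta_j}^2)_{i=1}^n$ and $\mathbb{Q}$ is a single ``shared'' distribution that is producible from both clean laws by altering at most $k$ responses. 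Convexity of total variation immediately gives
\begin{equation*}
\|\mathbb{P}_1-\mathbb{P}_2\|_{TV} \;\leq\; (1-\epsilon)\,\|\mathbb{P}_1^{\mathrm{clean}}-\mathbb{P}_2^{\mathrm{clean}}\|_{TV} \;\leq\; 1-\epsilon,
\end{equation*}
which is exactly the TV target informally asserted in the paragraph immediately preceding the proposition.

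Third, a standard Le Cam two-point argument yields
\begin{equation*}
\inf_{\hat\btheta}\max_{j\in\{1,2\}} \mathbb{P}_j\!\left[d(\hat\btheta,\btheta_j) \geq \delta\right] \;\geq\; \tfrac{1}{2}\bigl(1-\|\mathbb{P}_1-\mathbb{P}_2\|_{TV}\bigr) \;\geq\; \tfrac{\epsilon}{2}.
\end{equation*}
Since the adversary is free to realize the worse of the two scenarios by choosing $\btheta^*$ and $\mathcal{P}_\eta$ accordingly, this establishes the bound in the proposition against any estimator $\hat\btheta$.

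\textbf{Main obstacle.} The delicate step is the construction of the common distribution $\mathbb{Q}$ in step two: I must argue that a single law is simultaneously implementable from either clean distribution within the $k$-corruption budget. This is precisely where the strong-corruption model (Definition~\ref{def: strong corruption model}) is indispensable. Because the adversary sees the full covariate sequence and the clean responses before choosing $\bmeta$, it can condition on $(\bx_i)_{i=1}^n$ and coordinate, under each hypothesis $H_j$, which $k$ responses to overwrite so that the resulting conditional law agrees across $j=1,2$. I would finalize this step by explicitly describing the adversary's randomized mapping conditioned on $(\bx_i)$, verifying that at most $k$ entries of $\bmeta$ are nonzero on every realization, and confirming that the marginal of the overwritten coordinates is the same under both hypotheses — which is exactly the ``set an $\epsilon$ proportion of measurements to be identical'' phenomenon highlighted in the paper's discussion.
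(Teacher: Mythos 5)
Your proposal is correct and follows essentially the same route as the paper: a two-point hypothesis test between two well-separated signals whose corrupted laws share an $\epsilon$-mass contamination component, giving $\|\mathbb{P}_1-\mathbb{P}_2\|_{TV}\leq 1-\epsilon$ and hence the $\epsilon/2$ bound via Le Cam/Neyman--Pearson. The paper simply instantiates your abstract shared law $\mathbb{Q}$ concretely (in one dimension, the adversary places the corrupted responses uniformly on $[\sigma,2\sigma]$ under both hypotheses), so the step you flag as the main obstacle is resolved there by an explicit construction.
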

Under the strong corruption model,
Proposition~\ref{lem:impossibility with constant corruption} states that estimation of $\btheta^*$ (up to its sign) fails with a constant probability when $\epsilon$ is a constant. Therefore, this work concentrates on the scenarios where the corruption proportion vanishes with $n$, i.e., $\epsilon=\epsilon_n\to 0$ as $n \to \infty$.

Efforts have been made to extend phase retrieval algorithms to settings involving corrupted measurements. \citet{hand2017phaselift} showed that PhaseLift~\citep{candes2013phaselift} is robust to a sufficiently small proportion of corrupted measurements, although the explicit proportion of corruption it can handle is not provided. Recently, \citet{huang2023robust} proposed a convex relaxation-based approach capable of handling approximately 11.85\% of corrupted measurements. They also demonstrated that this bound cannot be improved for their method. These approaches are computationally expensive due to their reliance on SDP relaxation. In contrast, \citet{zhang2016provable} proposed a computationally efficient approach using truncated Wirtinger flow, which can handle a small number of corrupted measurements with only $\calO(d \, \mathrm{polylog}( d))$ samples and $\calO(d^2 \, \mathrm{polylog}( d))$ iteration complexity. Existing works~\citep{hand2017phaselift,huang2023robust,zhang2016reshaped} in robust phase retrieval provide convergence guarantees for the estimator $\hat{\btheta}$ of the true parameter $\btheta^*$. However, these works do not explicitly address the dependency of convergence on the number of corrupted measurements $k$ relative to the total number of measurements $n$. Consequently, predicting the impact of an increasing proportion of corruption on the convergence rate remains difficult.
This leads to the following critical problem
of interest:
\begin{problem}[Robust Phase Retrieval]
\label{prob: robust phase retrieval}
     Can we propose a sample and computationally efficient algorithm that outputs an estimate $\hat{\btheta}$ of the true parameter $\btheta^*$ that allows us to identify a function $g(k, n)$ such that
     \begin{align*}
            d(\hat{\btheta}, \btheta^*) \leq g(k, n)~,
    \end{align*}
    and characterize regimes of  $\epsilon = \epsilon_n= {k}/{n}$ such that $g(k, n) \to 0$ as $n \to \infty$?
\end{problem}

We provide an affirmative answer to Problem~\ref{prob: robust phase retrieval} by proposing an efficient alternating minimization-based algorithm.

\section{Alternating Minimization Algorithm}
\label{sec: alt min algorithm}

The main intuition behind the alternating minimization approach stems from a couple of key observations.
\subsection{Geometry of the loss function}
\label{subsec: geometry of the loss function}

Our first observation pertains to the geometrical properties of the objective function in the optimization problem~\eqref{eq:nonconvex formulation}. Although the optimization problem~\eqref{eq:nonconvex formulation} is nonconvex, it can be analyzed due to the favorable geometric properties of its objective function~\citep{sun2018geometric}. Indeed, numerous methods~\citep{ge2015escaping, nesterov2006cubic, sun2018geometric, jin2017escape, ge2015escaping, murray2017revisiting} discussed in Section~\ref{sec: solving phase retrieval} leverage this benign geometry to develop efficient algorithms with provable guarantees for solving~\eqref{eq:nonconvex formulation}. If the corrupted measurements do not significantly distort this geometry, we can still apply the methods outlined in Section~\ref{sec: solving phase retrieval} to handle the corruption. To illustrate, consider Figure~\ref{fig:  geometry of loss} where the expected objective function for problem~\eqref{eq:nonconvex formulation}, defined as $F(\btheta, \bmeta) \coloneqq \mathbb{E}_{\bx_1, \ldots, \bx_n} \big[ f(\btheta) \big]$, is graphically visualized for a two-dimensional example. When the corruptions $\eta_i$ are independently chosen from $\bx_i$, the shape of the loss function is influenced by the average corruption $\bar{\eta} = \frac{1}{n}\sum_{i=1}^n \eta_i$. Specifically, the function maintains a similar shape as the uncorrupted case when $\bar{\eta} > -3 \| \btheta^* \|^2$, and becomes convex when $\bar{\eta} \leq -3 \| \btheta^* \|^2$. These insights suggest that \eqref{eq:nonconvex formulation} can be solved even with corrupted measurements. Therefore, we introduce the \oracle{} (Algorithm~\ref{alg: ORACLE}), an oracle algorithm designed to solve the least squares formulation of the phase retrieval problem even when faced with corrupted measurements.
Under mild assumptions regarding corruption, several methods described in Section~\ref{sec: solving phase retrieval} can be utilized for this purpose. We provide an efficient construction of one such oracle in Section~\ref{sec: constructing an oracle}.
\begin{algorithm}[!htb]
\caption{\textsc{LSQ-Phase-Oracle}}
\label{alg: ORACLE}
\textbf{Input}: $U \subseteq [n], S = \{ (\bx_1, y_1), \ldots, (\bx_n, y_n) \}$\\
\textbf{Output}: $\tilde{\btheta}$
\begin{algorithmic}[1] 
\STATE  $\tilde{\btheta} = \arg\min_{ \btheta \in \real^d } \frac{1}{4 |U|} \sum_{i \in U} \Big(y_i - \inner{\bx_i}{\btheta}^2\Big)^2 $.
\STATE \textbf{return} $\tilde{\btheta}$.
\end{algorithmic}
\end{algorithm}

\subsection{Filtering the corrupted measurements}
\label{subsec: filtering the corrupted measurement}

Our second observation provides a counterbalance to our first observation. Despite the favorable geometric properties of the objective function, one cannot hope to reconstruct the true signal $\btheta^*$ by directly solving the nonconvex optimization problem~\eqref{eq:nonconvex formulation} in the presence of corrupted measurements. If we had prior knowledge of which measurements were uncorrupted, we could solve problem~\eqref{eq:nonconvex formulation} using only those uncorrupted measurements. However, in the absence of such information, we must reformulate it into another, potentially more challenging, nonconvex problem. To reduce the impact of obviously corrupted measurements, we initially preprocess the data by discarding measurements with negative $y_i$'s. We further trim the dataset by eliminating measurements with the largest $y_i$ values, ensuring the remaining set consists of $n - k$ measurements.  This refined set of measurements is denoted by $\tilde{S} \subset [n]$ with $|\tilde{S}| = n - k$. It is important to note that even after preprocessing step, $\tilde{S}$ might still contain up to $k$ corrupted measurements, although their values are restricted from being excessively large. We construct the following optimization problem after preprocessing:
\begin{align}
    \big(\hat{\btheta}, \hat{U}\big) = 
         &\mathop{\arg\min}\limits_{\btheta \in \real^d, U} f(U, \btheta) \coloneq \frac{1}{4 |U|}\sum_{i \in U} \Big(y_i - \inner{\bx_i}{\btheta}^2\Big)^2  \nonumber \\*
        &\text{such that } U \subset \tilde{S},\, |U| = n - 2 k\label{eq: nonconvex opt with corruption}
\end{align}
Ideally, we want to select  $n - 2k$ uncorrupted samples from $\tilde{S}$ and use them to estimate $\btheta^*$; this explains the term $n-2k $ in \eqref{eq: nonconvex opt with corruption}. A natural approach of solving problem~\eqref{eq: nonconvex opt with corruption} is to use an alternating minimization strategy. This method iterates between two steps: first, solving for $\btheta$ with a fixed $U$, and then updating $U$ based on the obtained $\btheta$. We formally present this approach as  \altmin{} in Algorithm~\ref{alg:alt min algorithm}.

\begin{algorithm}[!htb]
\caption{\textsc{ALT-MIN-PHASE}}
\label{alg:alt min algorithm}
\textbf{Input}: $S = \{ (\bx_1, y_1), \ldots, (\bx_n, y_n) \}$\\
\textbf{Parameters}: $k, \ \beta > 0$\\
\textbf{Output}: $\hat{\btheta}$ -- An estimate of $\btheta^*$
\begin{algorithmic}[1] 
\STATE  $\btheta^1 = \bm{0} \in \real^d$.
\STATE \textbf{Preprocessing:}
\STATE Discard measurements with negative $y_i$'s
\STATE  From the remaining measurements discard measurements with the largest $y_i$ to construct $\tilde{S}$ with $|\tilde{S}| = n - k$
\FOR{$t=1, 2 \ldots$}
\STATE $U^t = \arg\min_{ U \subset \tilde{S}, |U| = n - 2k } \sum_{i \in U} f_i(\btheta^t)$,\\ where $f_i(\btheta) = \big(y_i - \inner{\bx_i}{\btheta}^2\big)^2$
\STATE $\btheta^{t+1} = \textsc{LSQ-Phase-Oracle}(U^t)$.
\IF {$\frac{1}{4|U^t|} \sum_{i \in U^t} \big(f_i(\btheta^t) - f_i(\btheta^{t+1}) \big) <  \beta$}
\STATE $\hat{U} = U^t, \; \hat{\btheta} = \btheta^t$.
\STATE STOP.
\ENDIF
\ENDFOR
\STATE \textbf{return} $\hat{\btheta}$.
\end{algorithmic}
\end{algorithm}

Leveraging tail bounds of maximum of the chi-squared random variables,  the preprocessing step effectively removes the measurements $i$ for which $\eta_i = \Omega(\log n)$ with high probability. Following this, the algorithm employs an alternating minimization strategy on the remaining measurements, utilizing \oracle{} as a subroutine to solve \eqref{eq:nonconvex formulation} with corruption. The process continues until the decrease of the objective function value is less than a certain predefined threshold $\beta > 0$ (to be fixed later). Intuitively, \altmin{} aims to produce a set $\hat{U}$ that contains indices of either uncorrupted measurements or corrupted measurements with minimal corruption. The key idea is that using such a $\hat{U}$ as input to the \oracle{} will provide a good estimate of $\btheta^*$. However, this guarantee is not obvious. Given the nonconvex nature of problem~\eqref{eq: nonconvex opt with corruption}, \altmin{} is likely to converge to a stationary point. Therefore, before presenting the theoretical guarantees for our approach, we demonstrate its practical efficacy through numerical experiments.

\subsection{Numerical Experiments}
\label{subsec: motivating example}

\begin{figure*}[!ht]
    \centering
    \begin{subfigure}{0.28\textwidth}
    \includegraphics[scale=0.53]{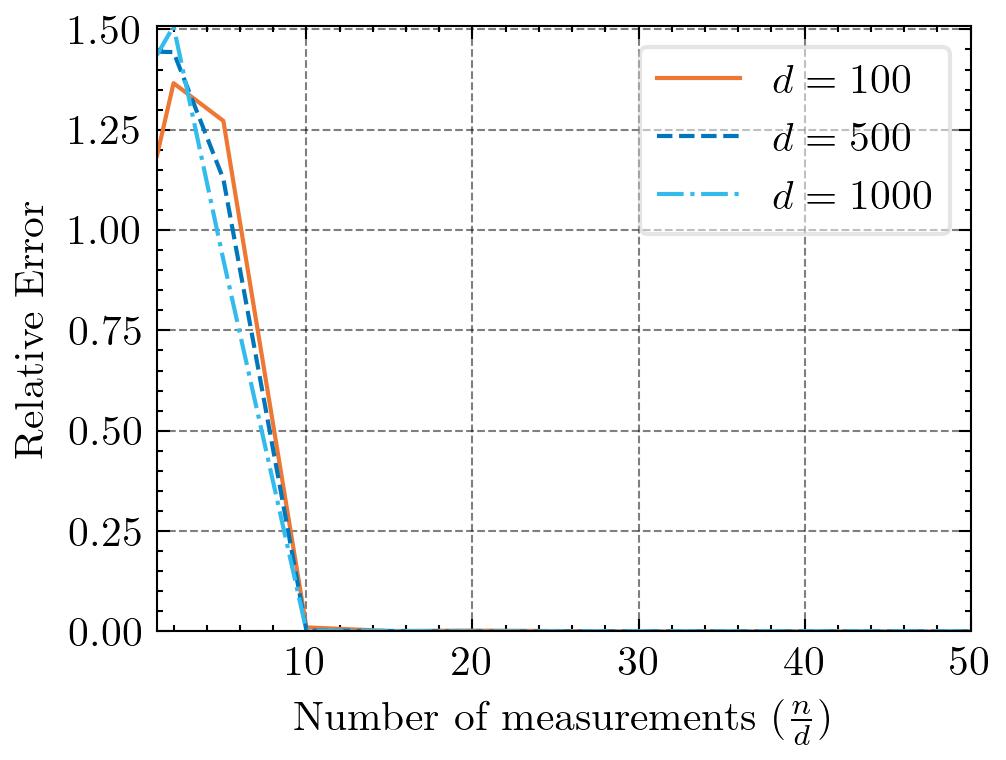}
        \caption{ $k = \sqrt{n} $}
        \label{fig: sqrt n}
    \end{subfigure}\hfill
    \begin{subfigure}{0.28\textwidth}
    \includegraphics[scale=0.53]{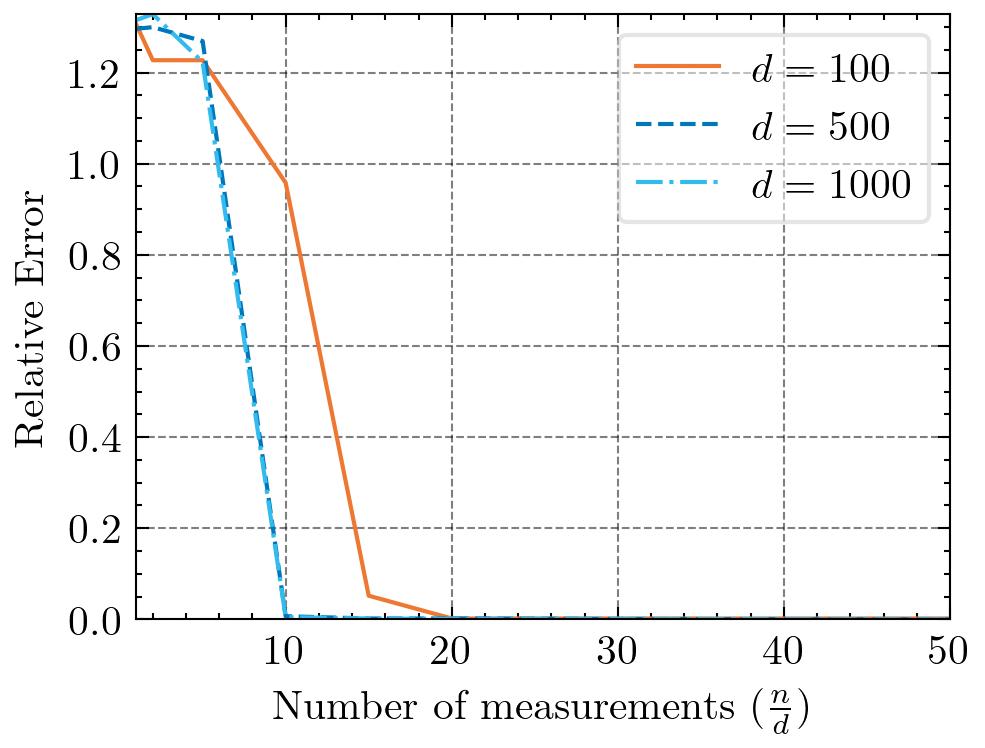}
        \caption{ $k = n^{\frac{2}{3}}$  }
        \label{fig: n 2 3}
    \end{subfigure}\hfill
     \begin{subfigure}{0.28\textwidth}
    \includegraphics[scale=0.53]{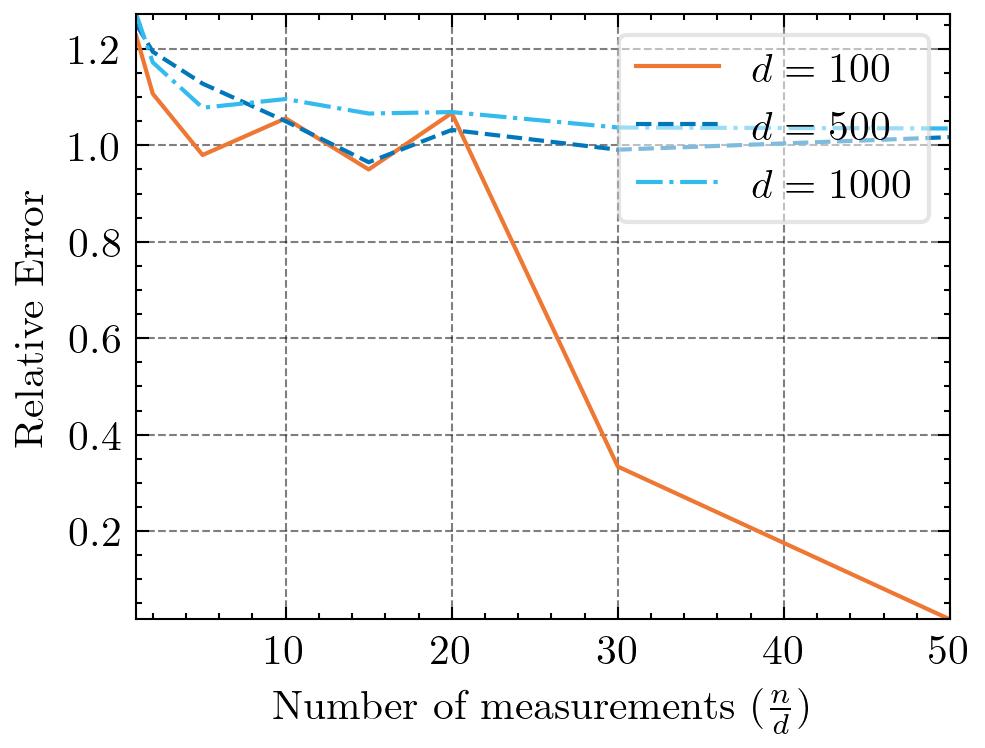}
        \caption{ $k = 0.25 n$ }
        \label{fig: const n}
    \end{subfigure}
    \caption{ Performance of \altmin{} (Algorithm~\ref{alg:alt min algorithm}) with varying degrees of corrupted measurements.}
    \label{fig:  convergence of alt min}
\end{figure*}

We evaluated the practicality of our approach through numerical experiments with varying degrees of corruption. We analyzed the effect of increasing the number of measurements on the relative error, defined as $\frac{d(\hat{\btheta}, \btheta^*)}{\| \btheta^* \|}$. The corruption  values $\eta_i$ were drawn uniformly at random from $[-5, 5]$. We used gradient descent with random initialization~\citep{chen2019gradient} as the oracle algorithm. Figure~\ref{fig: convergence of alt min} presents the results of our experiments for three different corruption proportions, $\frac{k}{n} \in \{\frac{1}{\sqrt{n}}, \frac{1}{n^{1/3}}, 0.25\}$. The plots correspond to $d \in \{100, 500, 1000\}$. We make the following observations:
\begin{enumerate}
    \item For $k \in \{\sqrt{n}, n^{\frac{2}{3}}\}$ (i.e., $\epsilon=o(1)$), Figures~\ref{fig: sqrt n} and \ref{fig: n 2 3} demonstrate that the alternating minimization approach accurately recovers $\btheta^*$ for each $d \in \{100, 500, 1000\}$.
    \item The proposed approach does not always yield a good estimate of $\btheta^*$ when there is a constant proportion of corruptions which is expected from Proposition~\ref{lem:impossibility with constant corruption}.
\end{enumerate}
Next, we present our theoretical results that explain these empirical observations.

\section{Main Theoretical Result}
\label{sec: main result}

In this section, we present a theoretical analysis of \altmin{}. While alternating minimization algorithms are known to converge to a stationary point for nonconvex problems, the resulting solution may not be close to the global minimum. Our analysis aims to characterize the properties of the converged stationary point and demonstrate its proximity to the global minimum.  Although our problem and setting are entirely different, our analysis follows a framework similar to \citet{chen2022online}, which addresses fixed-design linear regression with Huber corruption. The quartic nature of the objective function in \eqref{eq: nonconvex opt with corruption} imposes strong constraints on the handling of the spectral properties of the Hessian of the loss function, as well as on the concentration and tail bounds involved in the analysis. One immediate consequence of these constraints is the limited regime of corruption proportions that can be effectively managed by Algorithm~\ref{alg:alt min algorithm} with $n = \calO(d\, \mathrm{polylog}( d))$ measurements. We begin by characterizing this restricted regime. For a proportion of corruption $\epsilon$, we define
\begin{align}
\label{eq: Delta}
\Delta(k, n) \coloneqq \epsilon \sqrt{\log \epsilon^{-1}} \log^2(\epsilon n).
\end{align}
For brevity, we sometimes use $\Delta$ to denote $\Delta(k, n)$. We define the \emph{favorable corruption} regime $\mathcal{K}$ 
as the set of all increasing sequences $\{k_n\}_{n=1}^\infty$ satisfying
\begin{align}
    \label{eq: favourable regime}
    \mathcal{K}\! \coloneqq \!\Big\{ \{k_n\}_{n=1}^\infty \subseteq \mathbb{N} \, | \, \lim_{n \to \infty} \Delta(k_n, n) \!=\! 0,\ \frac{k_n}{n} \!<\! \frac{1}{2} \Big\}~.
\end{align}
We remark that $\mathcal{K}$ captures all corruption regimes such that $k = \calO\big(n^{1 - p}\big)$ with $p \in (0, 1]$. On the other hand, it does not contain constant corruption proportion regimes in which $k = \Theta(n)$.
With this understanding, we are ready to state the main convergence guarantees for Algorithm~\ref{alg:alt min algorithm}.
\begin{restatable}{theorem}{thmmain}
\label{thm:main result}
    Let $S = \{(\bx_i, y_i) \}_{i=1}^n$  be a set of measurements generated by the strong corruption model with corruption proportion $\epsilon = \frac{k}{n}$. We further assume that $k \in \mathcal{K}$. Let $n = \Omega\big( \frac{d \, \mathrm{polylog}(d) + \log(\frac{1}{\delta})}{\epsilon^2 \log (\frac{1}{\epsilon})} \big)$ for some $\delta \in (0, 1]$. With probability  at least $1 - \delta - \calO(\frac{1}{n})$, Algorithm~\ref{alg:alt min algorithm} with parameters $k$ and $\beta = \epsilon^2$ terminates within $\calO( \frac{\sum_{i=1}^n y_i^2}{4 (n - k)\epsilon^2} )$ iterations, and outputs an estimate $\hat{\btheta}$ such that  for some absolute constants $C_1, C_2, C_3 \!>\! 0$
    \begin{align}
        d(\hat{\btheta}, \btheta^*) \leq
        1.2  \max\Big\{\big( \psi(k, n, \bmeta) \big)^{\frac{1}{2}}, \psi(k, n, \bmeta)  \Big\} \sqrt{\epsilon}~,
    \end{align}
    where $\psi(k, n, \bmeta) = \frac{ \sqrt{ (C_1  + \max_i | \eta_i | ) (1 + \Delta) }   }{(1 - 3\epsilon)(C_2 - \Delta) - C_3 \Delta}$.
\end{restatable}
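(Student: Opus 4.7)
The plan is to split the argument into four pieces: a high-probability ``good event'' setup, a preprocessing guarantee, a monotonicity-based termination bound, and finally a stationarity-to-statistical-error conversion, which is the crux. Throughout I would work on a good event on which all the Gaussian concentration and $\chi^2$ tail bounds hold, uniformly over a sufficiently fine net in $\btheta$-space and over the combinatorial choice of $U \subset \tilde{S}$ of size $n-2k$. The stated sample size $n = \Omega((d\,\mathrm{polylog}(d) + \log(1/\delta))/(\epsilon^2\log(1/\epsilon)))$ is exactly what is needed to drive these union bounds through at total failure probability at most $\delta + \calO(1/n)$.

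For preprocessing, each uncorrupted $y_i = \inner{\bx_i}{\btheta^*}^2$ is a $\|\btheta^*\|^2 \chi^2_1$ variable, so a union bound gives $\max_{i \notin C^*} y_i \le C_0\|\btheta^*\|^2 \log n$ with probability $1 - \calO(1/n)$. Any corrupted index surviving into $\tilde{S}$ must therefore have $y_i \le C_0\|\btheta^*\|^2 \log n$ (otherwise it would lie among the $k$ largest and be discarded), and combined with the discard of all negative $y_i$, this forces $|\eta_i| = \calO(\|\btheta^*\|^2 \log n)$ for every corruption that survives into $\tilde{S}$, and leaves at most $k$ corrupted indices in $\tilde{S}$.

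For termination, the objective $\frac{1}{4|U^t|}\sum_{i \in U^t} f_i(\btheta^t)$ is monotonically non-increasing: the $U$-update minimizes over subsets of fixed size $n - 2k$, and the oracle returns a global minimum of $\sum_{i \in U^t} f_i(\cdot)$. The initial value at $\btheta^1 = \bm{0}$ is at most $\frac{1}{4(n - 2k)}\sum_{i=1}^n y_i^2$, and the stopping rule forces each non-terminating iteration to shave off at least $\beta = \epsilon^2$ from the objective, yielding the claimed iteration bound.

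The main obstacle is converting the stopping condition into a statistical bound for $\hat{\btheta}$. At termination, $\hat{\btheta}$ is a $\beta$-approximate minimizer of $f(\hat{U}, \cdot)$, so I would first prove a deterministic key lemma: for any $U \subset \tilde{S}$ of size $n - 2k$ with at most $k$ corrupted indices whose $|\eta_i|$ are bounded by preprocessing, the exact minimizer $\tilde{\btheta}_U$ of $f(U, \cdot)$ satisfies $d(\tilde{\btheta}_U, \btheta^*) \le C\max\{\sqrt{\psi}, \psi\}\sqrt{\epsilon}$. The proof splits the first-order optimality condition for $\tilde{\btheta}_U$ into an uncorrupted and a corrupted block: Gaussian Hessian concentration, uniform over $\btheta$ and $U$, yields a restricted-strong-convexity modulus of order $(1 - 3\epsilon)(C_2 - \Delta) - C_3 \Delta$ from the uncorrupted block, where the $\Delta = \epsilon\sqrt{\log\epsilon^{-1}}\log^2(\epsilon n)$ factor is exactly the price of the combinatorial union bound over subsets, while the corrupted block contributes a gradient-type perturbation of magnitude $\sqrt{(C_1 + \max_i|\eta_i|)(1 + \Delta)}$, giving the numerator of $\psi$. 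The $\max\{\sqrt{\psi}, \psi\}$ dichotomy arises because the quartic objective produces an inequality of the form $\|\delta\|^2(\alpha_1 + \alpha_2\|\delta\|^2) \lesssim \|\delta\|\cdot(\text{perturbation})$ in $\delta = \tilde{\btheta}_U \mp \btheta^*$, whose solution bifurcates into a linear-dominated regime and a cubic-dominated regime. Finally I would upgrade the bound from the exact minimizer to the $\beta$-approximate $\hat{\btheta}$ by converting a function-value gap of order $\epsilon^2$ into a distance slack of order $\epsilon$ via local smoothness, and this slack is absorbed into the final $1.2$ constant. The hard part is making the restricted strong convexity uniform over the combinatorial choice of $U$ simultaneously with the continuous choice of $\btheta$, which is precisely what forces the dependence on $\Delta$ and hence the favorable-regime constraint $\Delta \to 0$.
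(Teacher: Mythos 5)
Your skeleton (preprocessing via the $\chi^2$ max bound, the monotone-decrease termination count, and a decomposition of a first-order condition into an uncorrupted block that gives restricted curvature of order $(1-3\epsilon)(C_2-\Delta)-C_3\Delta$ and a corrupted block controlled uniformly over subsets, with $\Delta$ arising from the $\binom{n}{\epsilon n}$ union bound) matches the paper's architecture. But your final step contains a genuine gap. You propose to first bound $d(\tilde{\btheta}_U,\btheta^*)$ for the \emph{exact} minimizer of $f(U,\cdot)$ and then ``upgrade'' to the algorithm's output $\hat{\btheta}$ by converting the function-value gap $\beta=\epsilon^2$ into a distance slack ``via local smoothness.'' Smoothness gives the inequality in the wrong direction: it upper-bounds the function gap by the squared distance, and cannot bound $\|\hat{\btheta}-\tilde{\btheta}_{\hat U}\|$ from above by the gap. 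What you would actually need is a quadratic-growth / local strong-convexity lower bound for $f_{\hat U}$ around its global minimizer, together with an argument that $\hat{\btheta}$ already lies in the region where that growth holds. Neither is available for free here: $\hat U$ contains up to $k$ corruptions chosen by a strong adversary who may couple $\eta_i$ with $\bx_i$, so the benign landscape (and any curvature lower bound at the minimizer) of $f_{\hat U}$ is precisely what cannot be assumed — the paper only recovers such structure later, for the oracle, under the extra independence Assumption~\ref{assum: independent corruption}. The paper sidesteps this entirely: the stopping rule plus the contrapositive of a descent lemma (Lemmas~\ref{lem: spectral property of Hessian}--\ref{lem: descent lemma}, which need only an \emph{upper} bound on $(\hat{\btheta}-\btheta^*)^{\top}\nabla^2 f_{\hat U}(\bar{\btheta})(\hat{\btheta}-\btheta^*)$ along the segment) show that $\hat{\btheta}$ itself is a $\gamma$-approximate stationary point with $\gamma = 2\sqrt{L(\hat{\btheta},\btheta^*,\Delta,\bmeta)}\,\epsilon$, and the $\zeta$/$\xi$ decomposition is then run directly at $\hat{\btheta}$.

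Two further points. First, you misattribute the numerator of $\psi$: in the paper the factor $\sqrt{(C_1+\max_i|\eta_i|)(1+\Delta)}$ does not come from the corrupted gradient block; it comes from the stationarity slack $\gamma = 2\sqrt{L\beta}$ with $\beta=\epsilon^2$, where $\max_i|\eta_i|$ enters through the Hessian bound $L$ along the segment. The corrupted block $\xi$ is handled by Cauchy--Schwarz together with the key observation that $\hat U$ minimizes the loss at the fixed $\hat{\btheta}$, so $\xi_1$ can be dominated by a sum over \emph{uncorrupted} indices in $U^*\setminus\hat U$; its contribution is only of order $\Delta$ times powers of $\|\hat{\btheta}-\btheta^*\|$ and is absorbed into the denominator of $\psi$. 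Your ``deterministic key lemma for any $U$'' forgoes this subset-minimality trick, so you must instead bound $\frac{1}{n}\sum_{i\in U\cap C^*}\eta_i(\cdots)$ directly; this is doable via $\max_i|\eta_i|$ times a uniform-over-small-subsets bound (the paper's Lemma~\ref{lem: concentration for corrupted}), but it yields a bound of the form $\max_i|\eta_i|\cdot\Delta$ rather than the specific $\psi$ you claim, so the stated lemma would need reworking to produce the theorem's expression. Second, even granting that lemma, the upgrade step above remains the missing ingredient, and it is the step the paper's descent-lemma argument was designed to avoid.
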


Before we present a proof sketch, several remarks are in order to understand Theorem~\ref{thm:main result}:
\begin{enumerate}
    \item It may appear that $\psi(k, n, \bmeta)$ can grow rapidly if the amount of corruption $\max_i |\eta_i |$ is large. However, it should be remembered that due to the preprocessing step performed by Algorithm~\ref{alg:alt min algorithm}, the growth of $\max_i |\eta_i|$ is not fast. In fact, it can be shown that its value is upper bounded by $\calO(\log n)$ with high probability. Since $\psi(k, n, \bmeta) = \calO(\sqrt{\log n})$, the convergence rate of the Algorithm~\ref{alg:alt min algorithm} is primarily determined by $\sqrt{\epsilon}$.
    \item The dependence of $n$ on $\epsilon$ might initially seem counterintuitive, but it aligns with standard results in robust statistics~\citep{chen2022online,gao2020robust,diakonikolas2019efficient}.
    In practice, one can typically assume $\epsilon$ to be bounded away from zero by artificially considering some of the uncorrupted measurements as corrupted.
    \item The theoretical analysis of \altmin{} can be extended to phase retrieval with additive Gaussian noise, i.e., the uncorrupted measurements are $y_i + e_i$ where $e_i$ is an independent additive Gaussian noise. The fundamental analysis still applies, with the added consideration of concentration inequalities involving the noise variance.
\end{enumerate}

In the following section, we offer a proof sketch highlighting the key elements of the proof for Theorem~\ref{thm:main result} when $k \in \calK$. Due to space limitations, the complete proof is provided in Appendix~\ref{sec: proof of thm: main result}.

\section{Proof Sketch of Theorem~\ref{thm:main result}}
\label{sec: proof sketch}

For a fixed set $U \subseteq [n]$, we define the loss function $f_U(\btheta)$ along with its gradient $\nabla f_U(\btheta)$
as follows:
\begin{align}
    f_U(\btheta) &= \frac{1}{4 |U|}\sum_{i \in U} \Big(\inner{\bx_i}{\btheta}^2 - y_i\Big)^2 \\
    \nabla f_U(\btheta) &= \frac{1}{|U|}\sum_{i \in U} \Big(\inner{\bx_i}{\btheta}^2 - y_i \Big) \bx_i \bx_i^{\top} \btheta
\end{align}
Our analysis of \altmin{} is conducted in two stages. First, we demonstrate that the output $\hat{\btheta}$ from Algorithm~\ref{alg:alt min algorithm} is indeed a $\gamma$-approximate stationary point of $f_{\hat{U}}(\btheta)$ which is defined as:
\begin{align}
    \label{eq: gamma stationary point}
    \big\langle \nabla f_{\hat{U}}(\hat{\btheta}), \hat{\btheta} - \btheta^* \big\rangle \leq \gamma \| \hat{\btheta} - \btheta^* \|~.
\end{align}
After that, we establish that this approximate stationary point is close to the true signal $\btheta^*$ in terms of $d(\hat{\btheta}, \btheta^*)$.

\subsection{Convergence to An Approximate Stationary Point}

We begin by showing that the output of Algorithm~\ref{alg:alt min algorithm} is an approximate stationary point. Before we present the formal statement, we define the following quantity for some absolute constants $C_1, C_2$ and $C_3 > 0$:
\begin{align}
\label{eq:Lipschitz const}
    &  L(\hat{\btheta}, \btheta^*, \Delta, \bmeta) \coloneqq  \frac{1}{2}\Big( (C_1 + \Delta) \|\hat{\btheta} - \btheta^* \|^2 + (C_2 + \Delta) \|\hat{\btheta} - \btheta^* \| + (C_3 + \Delta) + \max_{i \in U} |\eta_i| \big( 1 + \Delta \big) \Big)~.
\end{align}
The following lemma shows that if we set $\gamma = 2 \sqrt{ L(\hat{\btheta}, \btheta^*, \Delta, \bmeta) } \epsilon$, the output of Algorithm~\ref{alg:alt min algorithm} is an $\gamma$-approximate point of $f_{\hat{U}}(\btheta)$.
\begin{restatable}{lemma}{lemconvergeapprox}
    \label{lem: convergence to approximate stationary point}
    If $n = \Omega\big( \frac{d \, \mathrm{polylog}(d)+\log(\frac{1}{\delta})}{\epsilon^2 \log (\frac{1}{\epsilon})} \big)$, then the output $\hat{\btheta}$ Algorithm~\ref{alg:alt min algorithm} is a $2 \sqrt{ L(\hat{\btheta}, \btheta^*, \Delta, \bmeta) } \epsilon$-stationary point of $f_{\hat{U}}(\btheta)$ with probability at least $1 - \delta - \calO(\frac{1}{n})$.
\end{restatable}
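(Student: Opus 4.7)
The plan is to translate the algorithm's stopping criterion into a bound on $\|\nabla f_{\hat{U}}(\hat{\btheta})\|$ via a descent-lemma argument, and then close with Cauchy--Schwarz. At termination, $\hat{\btheta}=\btheta^t$, $\hat{U}=U^t$, and $\btheta^{t+1}=\oracle(U^t)$ is an exact minimizer of $f_{U^t}$. If the quartic loss $f_{U^t}$ has gradient that is $L$-Lipschitz on the segment joining $\btheta^t$ to $\btheta^t-L^{-1}\nabla f_{U^t}(\btheta^t)$, the standard descent inequality together with optimality of $\btheta^{t+1}$ gives
\[
f_{U^t}(\btheta^t)-f_{U^t}(\btheta^{t+1})\;\geq\;f_{U^t}(\btheta^t)-f_{U^t}\big(\btheta^t-\tfrac{1}{L}\nabla f_{U^t}(\btheta^t)\big)\;\geq\;\tfrac{1}{2L}\|\nabla f_{U^t}(\btheta^t)\|^2.
\]
Combining with the stopping condition $f_{U^t}(\btheta^t)-f_{U^t}(\btheta^{t+1})<\beta=\epsilon^2$ yields $\|\nabla f_{\hat{U}}(\hat{\btheta})\|\leq\sqrt{2L}\,\epsilon$, and Cauchy--Schwarz then produces the claimed $2\sqrt{L}\,\epsilon$-stationarity (since $\sqrt{2}\leq 2$).

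The bulk of the work is to certify that the local smoothness constant may be taken to be the quantity $L(\hat{\btheta},\btheta^*,\Delta,\bmeta)$ defined in~\eqref{eq:Lipschitz const}. The Hessian equals $\nabla^2 f_{U^t}(\btheta)=\tfrac{1}{|U^t|}\sum_{i\in U^t}(3\inner{\bx_i}{\btheta}^2-y_i)\bx_i\bx_i^\top$, and I would split $\inner{\bx_i}{\btheta}^2\leq 2\inner{\bx_i}{\btheta-\btheta^*}^2+2\inner{\bx_i}{\btheta^*}^2$ together with $y_i=\inner{\bx_i}{\btheta^*}^2+\eta_i\indicator\{i\in C^*\}$, producing three empirical averages of rank-one matrices: one scaling as $\|\hat{\btheta}-\btheta^*\|^2$, one as $\|\btheta^*\|^2$, and the corruption average $\tfrac{1}{|U^t|}\sum_{i\in U^t\cap C^*}|\eta_i|\bx_i\bx_i^\top$. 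Gaussian concentration for fourth-order chaos controls the first two by their expectations up to multiplicative $(1+\Delta)$ and additive $\Delta$ errors, reproducing the three powers of $\|\hat{\btheta}-\btheta^*\|$ in~\eqref{eq:Lipschitz const}; the corruption term is handled by noting $|U^t\cap C^*|\leq k$ and, thanks to the preprocessing step, $\max_{i\in\tilde{S}}|\eta_i|=\calO(\log n)$, which produces the summand $\max_{i}|\eta_i|(1+\Delta)$. An auxiliary bootstrap step verifies that $\btheta^t,\btheta^{t+1}$ and the gradient-step perturbation all remain in a sublevel set of $f_{U^1}$ whose radius is bounded in terms of $\sum_i y_i^2$, so that the uniform Hessian bound does apply along the relevant segment; monotonicity of $\{f_{U^t}(\btheta^t)\}_t$ (from $\oracle$ optimality and the outer minimization over $U$) is what drives this containment.

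The main obstacle is enforcing these concentration bounds \emph{uniformly} over the admissible choices of $U\subset\tilde{S}$ with $|U|=n-2k$, because Algorithm~\ref{alg:alt min algorithm} picks $U^t$ adversarially from the data and the current iterate, so one cannot condition it away. There are $\binom{n-k}{n-2k}$ such subsets; Stirling gives the entropy estimate $\log\binom{n-k}{n-2k}=\calO(n\epsilon\log\epsilon^{-1})$, which is precisely the factor absorbed by the $\log\epsilon^{-1}$ appearing in the sample-complexity hypothesis $n=\Omega\bigl(\frac{d\,\mathrm{polylog}(d)+\log(1/\delta)}{\epsilon^2\log(1/\epsilon)}\bigr)$ via a union bound, with total failure probability $\delta+\calO(1/n)$. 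Carefully propagating the $(1+\Delta)$ deviations through the three-term Hessian decomposition while respecting the data-dependent choice of $U^t$ and the truncation of $\max_i|\eta_i|$ by the preprocessing step is where the technical work concentrates.
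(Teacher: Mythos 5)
Your overall skeleton (stopping rule, oracle optimality, a descent-type inequality, and concentration made uniform over the admissible subsets via a $\binom{n-k}{n-2k}$ union bound) matches the paper's, but the way you instantiate the descent step has a genuine gap. You bound the full gradient norm by running the standard smoothness inequality along the segment from $\hat{\btheta}$ to $\hat{\btheta}-L^{-1}\nabla f_{\hat{U}}(\hat{\btheta})$ with $L=L(\hat{\btheta},\btheta^*,\Delta,\bmeta)$. But $L(\hat{\btheta},\btheta^*,\Delta,\bmeta)$, as defined in \eqref{eq:Lipschitz const} and as certified by Lemma~\ref{lem: spectral property of Hessian}, bounds the Hessian quadratic form only in the single direction $\bz\propto\hat{\btheta}-\btheta^*$ and only at points on the segment joining $\hat{\btheta}$ and $\btheta^*$, where $\|\bar{\btheta}-\btheta^*\|\le\|\hat{\btheta}-\btheta^*\|$. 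Your gradient-step segment points in the direction $\nabla f_{\hat{U}}(\hat{\btheta})$, not $\hat{\btheta}-\btheta^*$, and its length is $\|\nabla f_{\hat{U}}(\hat{\btheta})\|/L$, which is exactly the quantity you are trying to control; points on it can lie much farther from $\btheta^*$ than $\hat{\btheta}$ does, so the powers of $\|\hat{\btheta}-\btheta^*\|$ appearing in \eqref{eq:Lipschitz const} no longer dominate the curvature there. This is precisely the uniform-smoothness obstruction the paper flags for the quartic loss. Your proposed fix---a bootstrap confinement to a sublevel set of radius governed by $\sum_i y_i^2$---would at best give a Hessian bound in terms of that radius, i.e.\ a different and generally much larger constant than $L(\hat{\btheta},\btheta^*,\Delta,\bmeta)$, so it cannot yield the lemma's claim of $2\sqrt{L(\hat{\btheta},\btheta^*,\Delta,\bmeta)}\,\epsilon$-stationarity with the stated constant (and would propagate a worse $\psi$ into Theorem~\ref{thm:main result}).

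The paper sidesteps this entirely because its notion of $\gamma$-approximate stationarity, \eqref{eq: gamma stationary point}, is only the directional inequality $\inner{\nabla f_{\hat{U}}(\hat{\btheta})}{\hat{\btheta}-\btheta^*}\le\gamma\|\hat{\btheta}-\btheta^*\|$, not a gradient-norm bound. The trial point in the descent lemma is taken toward $\btheta^*$, namely $\btheta=\hat{\btheta}+\lambda(\btheta^*-\hat{\btheta})$ with $\lambda=\gamma/\big(2L\|\hat{\btheta}-\btheta^*\|\big)$, so the Taylor remainder involves the Hessian only in the direction $\hat{\btheta}-\btheta^*$ on the segment $[\hat{\btheta},\btheta^*]$---exactly what Lemma~\ref{lem: spectral property of Hessian} supplies; oracle optimality gives $f_{\hat{U}}(\btheta^{t+1})\le f_{\hat{U}}(\btheta)$, and the contrapositive of the stopping rule with $\beta=\epsilon^2$ delivers the directional bound with $\gamma=2\sqrt{L}\,\epsilon$. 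Your concluding Cauchy--Schwarz step shows you are attempting a strictly stronger statement than the lemma requires; to repair the argument, drop the gradient-norm detour and move along $\btheta^*-\hat{\btheta}$, where the direction-restricted Hessian bound is available with the stated constant.
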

The convergence of the Algorithm~\ref{alg:alt min algorithm} depends on the spectral properties of $\nabla^2 f_{\hat{U}}(\btheta)$. Specifically, one can employ the descent lemma to study the convergence properties of the Algorithm~\ref{alg:alt min algorithm} if the spectral norm of $\nabla^2 f_{\hat{U}}(\btheta)$ is uniformly upper bounded by a constant for all $\btheta \in \real^d$. Unfortunately, due to the quartic nature of $f_{\hat{U}}(\btheta)$, this property does not hold in general. To overcome this challenge, we study the spectral properties of $\nabla^2 f_{\hat{U}}(\btheta)$ for $\btheta$ belonging to a specific set relevant to our setting. This leads to the following lemma.

\begin{restatable}{lemma}{lemspectralhessian}
    \label{lem: spectral property of Hessian}
    If $\btheta$ lies on the line segment connecting $\hat{\btheta}$ and $\btheta^*$ and $n = \Omega\big( \frac{d \, \mathrm{polylog}(d)+\log(\frac{1}{\delta})}{\epsilon^2 \log (\frac{1}{\epsilon})} \big)$ for some $\delta \in (0, 1]$, then with probability at least $1 - \delta - \calO(\frac{1}{n})$, 
    \begin{align}
        \label{eq: spectral property of hessian}
        (\hat{\btheta} - \btheta^*)^{\top} \nabla^2 f_{\hat{U}}(\btheta) (\hat{\btheta} - \btheta^*) \leq 2 L(\hat{\btheta}, \btheta^*, \Delta, \bmeta ) \|\hat{\btheta} - \btheta^* \|^2.
    \end{align}
\end{restatable}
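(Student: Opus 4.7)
The plan is to obtain a closed-form expression for $\nabla^2 f_U(\btheta)$, parameterize any point on the segment as $\btheta = \btheta^* + t\bm{v}$ with $\bm{v} := \hat{\btheta} - \btheta^*$ and $t \in [0,1]$, and then decompose the target quadratic form into four pieces: three Gaussian polynomial empirical averages (from the clean quadratic structure) plus a corruption correction. Differentiating the gradient formula from the paper yields
\begin{align*}
\nabla^2 f_U(\btheta) = \frac{1}{|U|}\sum_{i \in U}\bigl(3\langle \bx_i,\btheta\rangle^2 - y_i\bigr)\bx_i\bx_i^\top.
\end{align*}
Expanding $\langle \bx_i,\btheta\rangle^2 = \langle \bx_i,\btheta^*\rangle^2 + 2t\langle \bx_i,\btheta^*\rangle\langle \bx_i,\bm{v}\rangle + t^2\langle \bx_i,\bm{v}\rangle^2$ and substituting $y_i = \langle \bx_i,\btheta^*\rangle^2 + \eta_i\indicator[i\in C^*]$ (with $\eta_i = 0$ off $C^*$), I would arrive at
\begin{align*}
\bm{v}^\top\nabla^2 f_{\hat{U}}(\btheta)\bm{v} = T_1 + T_2 + T_3 + T_4,
\end{align*}
where $T_1 = \frac{2}{|\hat{U}|}\sum_i\langle \bx_i,\btheta^*\rangle^2\langle \bx_i,\bm{v}\rangle^2$, $T_2 = \frac{6t}{|\hat{U}|}\sum_i\langle \bx_i,\btheta^*\rangle\langle \bx_i,\bm{v}\rangle^3$, $T_3 = \frac{3t^2}{|\hat{U}|}\sum_i\langle \bx_i,\bm{v}\rangle^4$, and $T_4 = -\frac{1}{|\hat{U}|}\sum_{i \in \hat{U}\cap C^*}\eta_i\langle \bx_i,\bm{v}\rangle^2$.

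Next, for $T_1,T_2,T_3$ I would appeal to uniform concentration of Gaussian polynomial empirical averages. Their population expectations under the $\mathcal{N}(0,I_d)$ design are bounded by $6\|\btheta^*\|^2\|\bm{v}\|^2$, $18\|\btheta^*\|\|\bm{v}\|^3$, and $9\|\bm{v}\|^4$ respectively, which after division by $\|\bm{v}\|^2$ align with the $C_3 + C_2\|\bm{v}\| + C_1\|\bm{v}\|^2$ skeleton of $2L(\hat{\btheta},\btheta^*,\Delta,\bmeta)$, with $\|\btheta^*\|$ absorbed into the absolute constants. To upgrade from fixed-subset expectations to empirical averages valid simultaneously for every direction $\bm{v}$ and every admissible subset $\hat{U} \subseteq \tilde{S}$ of size $n-2k$, I would combine Bernstein-type tails for sub-Weibull quartic forms at a fixed $(\bm{v},\hat{U})$ with two union bounds: a $1/n$-net over the unit sphere in $\real^d$ (contributing $\newO{d\log n}$ to the exponent) and an enumeration of the $\binom{n-k}{n-2k} \leq \binom{n}{k}$ admissible subsets (contributing $\newO{k\log(n/k)} = \newO{\epsilon n\log\epsilon^{-1}}$). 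The sample-size hypothesis $n = \Omega\bigl((d\,\mathrm{polylog}(d)+\log(1/\delta))/(\epsilon^2\log\epsilon^{-1})\bigr)$ is precisely what forces the combined deviation to be of order $\Delta(k,n)$, yielding the additive $\Delta$ inflations inside $L$.

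For the corruption correction I would simply use $|T_4| \leq \max_{i \in \hat{U}}|\eta_i|\cdot\frac{1}{|\hat{U}|}\sum_{i \in \hat{U}}\langle \bx_i,\bm{v}\rangle^2$ and invoke the same concentration machinery (for the easier quadratic form) to bound the inner average by $(1+\Delta)\|\bm{v}\|^2$. The key structural observation is that upper-bounding the sum over $\hat{U}\cap C^*$ by the full sum over $\hat{U}$ (rather than extracting $|\hat{U}\cap C^*|/|\hat{U}| \leq 2\epsilon$) produces the clean multiplier $(1+\Delta)$ in the last term of $L$, as stated. Summing $|T_1|+|T_2|+|T_3|+|T_4|$ by the triangle inequality and reintroducing the global factor $\|\bm{v}\|^2 = \|\hat{\btheta}-\btheta^*\|^2$ reproduces the desired bound $2L(\hat{\btheta},\btheta^*,\Delta,\bmeta)\,\|\hat{\btheta}-\btheta^*\|^2$, with $C_1,C_2,C_3$ absorbing Gaussian fourth-moment constants and $\|\btheta^*\|$.

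The main obstacle is the uniform concentration over the adversarial subset $\hat{U}$: since $\hat{U}$ is chosen by Algorithm~\ref{alg:alt min algorithm} from the same realization $\{\bx_i\}$ that appears inside $T_1,T_2,T_3$, a fixed-subset Chernoff bound does not suffice, and one must pay the combinatorial cost $\log\binom{n}{k} \asymp \epsilon n\log\epsilon^{-1}$ of enumerating candidate subsets. Balancing this subset-enumeration penalty against the $d\log n$ covering cost over $\bm{v}$ is exactly what dictates both the $1/(\epsilon^2\log\epsilon^{-1})$ scaling in the sample-size hypothesis and the specific form $\Delta(k,n) = \epsilon\sqrt{\log\epsilon^{-1}}\log^2(\epsilon n)$ once all polylogarithmic factors are tracked.
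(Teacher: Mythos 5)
Your proposal follows essentially the same route as the paper's proof: the segment parameterization with $t=(1-\bar{\lambda})$, the expansion of $3\inner{\bx_i}{\btheta}^2-y_i$ into quartic, cubic, and quadratic Gaussian empirical averages plus the corruption term bounded by $\max_i|\eta_i|$ times a concentrated quadratic form, and the uniform concentration obtained by a sphere net together with a union bound over the $\binom{n}{k}$-type subset choices (the paper packages this in its auxiliary concentration lemmas, using Hoeffding conditioned on a boundedness event rather than sub-Weibull Bernstein tails, which is an immaterial difference). The argument and the resulting $(C+\Delta)$-inflated bound matching $2L(\hat{\btheta},\btheta^*,\Delta,\bmeta)$ are correct.
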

With Lemma~\ref{lem: spectral property of Hessian} in place, a modified version of the descent lemma follows immediately.
\begin{restatable}{lemma}{lemdescent}
    \label{lem: descent lemma}
    Let $\bar{\btheta}$ be any point on the line segment connecting $\hat{\btheta}$ and $\btheta^*$. Assume that the following properties hold:
    \begin{enumerate}
        \item $\inner{\nabla f_{\hat{U}}(\hat{\btheta})}{ \hat{\btheta} - \btheta^* } \geq \gamma \|\hat{\btheta} - \btheta^* \| > 0$ and
        \item $\frac{\hat{\btheta} - \btheta^*}{\|\hat{\btheta} - \btheta^* \|} \nabla^2 f_{\hat{U}}(\bar{\btheta}) \frac{\hat{\btheta} - \btheta^*}{\|\hat{\btheta} - \btheta^* \|} \leq 2 L(\hat{\btheta}, \btheta^*, \Delta, \bmeta )$.
    \end{enumerate}
    Then, there exists  $\btheta \in \real^d$ such that
    \begin{align}
        \label{eq: descent lemma}
        f_{\hat{U}}(\btheta) \leq f_{\hat{U}}(\hat{\btheta}) - \frac{\gamma^2}{4 L(\hat{\btheta}, \btheta^*, \Delta, \bmeta)}~.
    \end{align}
\end{restatable}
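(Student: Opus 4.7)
My plan is to carry out a one-dimensional descent argument along the direction pointing from $\hat{\btheta}$ toward $\btheta^*$, using the two hypotheses in place of the usual full-gradient and uniform-smoothness conditions. Abbreviate $L \coloneqq L(\hat{\btheta}, \btheta^*, \Delta, \bmeta)$. Hypothesis~(1) forces $\hat{\btheta} \neq \btheta^*$, so the unit vector $u \coloneqq (\hat{\btheta} - \btheta^*)/\|\hat{\btheta} - \btheta^*\|$ is well-defined. Parametrize the search by $\btheta(\alpha) \coloneqq \hat{\btheta} - \alpha u$ for $\alpha \geq 0$; for $\alpha \in [0,\|\hat{\btheta}-\btheta^*\|]$, the point $\btheta(\alpha)$ traces the segment from $\hat{\btheta}$ (at $\alpha = 0$) to $\btheta^*$ (at $\alpha = \|\hat{\btheta}-\btheta^*\|$).

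Next, apply the second-order Taylor expansion to the scalar restriction $\phi(\alpha) \coloneqq f_{\hat{U}}(\btheta(\alpha))$, yielding
\begin{align}
\phi(\alpha) = \phi(0) - \alpha \inner{\nabla f_{\hat{U}}(\hat{\btheta})}{u} + \frac{\alpha^2}{2}\, u^\top \nabla^2 f_{\hat{U}}(\tilde{\btheta}_\alpha)\, u
\end{align}
for some $\tilde{\btheta}_\alpha$ on the sub-segment from $\hat{\btheta}$ to $\btheta(\alpha)$. Dividing hypothesis~(1) by $\|\hat{\btheta}-\btheta^*\|$ gives $\inner{\nabla f_{\hat{U}}(\hat{\btheta})}{u} \geq \gamma$. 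Since $\tilde{\btheta}_\alpha$ lies on the segment from $\hat{\btheta}$ to $\btheta^*$ whenever $\alpha \in [0,\|\hat{\btheta}-\btheta^*\|]$, hypothesis~(2) yields $u^\top \nabla^2 f_{\hat{U}}(\tilde{\btheta}_\alpha)\, u \leq 2L$. Plugging these two bounds in produces the scalar quadratic majorant
\begin{align}
\phi(\alpha) \leq \phi(0) - \gamma\, \alpha + L\, \alpha^2 \qquad \text{for all } \alpha \in [0,\|\hat{\btheta}-\btheta^*\|].
\end{align}

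To finish, I would minimize this quadratic over $\alpha$: its unconstrained minimizer is $\alpha^* = \gamma/(2L)$ with optimum $\phi(0) - \gamma^2/(4L)$, so the choice $\btheta \coloneqq \btheta(\alpha^*)$ immediately delivers $f_{\hat{U}}(\btheta) \leq f_{\hat{U}}(\hat{\btheta}) - \gamma^2/(4L)$. The main (and essentially only) obstacle is verifying that $\alpha^* \leq \|\hat{\btheta}-\btheta^*\|$, i.e., that the optimal step does not overshoot $\btheta^*$ and so remains inside the region where the Hessian bound in hypothesis~(2) is legitimate. In the setting where this lemma is applied---namely Lemma~\ref{lem: convergence to approximate stationary point}, where $\gamma$ scales like $\epsilon$ while $\|\hat{\btheta}-\btheta^*\|$ is expected to be of the larger order $\sqrt{\epsilon}$---this condition is automatic. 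In the degenerate range $\|\hat{\btheta}-\btheta^*\| < \gamma/(2L)$, one would either re-run the concentration argument underlying Lemma~\ref{lem: spectral property of Hessian} to extend the directional Hessian bound slightly past the segment endpoint, or observe that such small $\|\hat{\btheta}-\btheta^*\|$ already meets the accuracy target of Theorem~\ref{thm:main result} and terminates the analysis directly.
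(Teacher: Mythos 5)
Your proposal is correct and is essentially the paper's own argument: the paper parametrizes the segment as $\btheta = (1-\lambda)\hat{\btheta} + \lambda\btheta^*$, applies the same second-order Taylor expansion with hypotheses (1) and (2) bounding the linear and quadratic terms, and picks $\lambda = \frac{\gamma}{2L(\hat{\btheta},\btheta^*,\Delta,\bmeta)\,\|\hat{\btheta}-\btheta^*\|}$, which is exactly your step $\alpha^* = \gamma/(2L)$ in a different parametrization. The overshoot condition you flag (that the minimizing step stay within the segment, i.e.\ $\lambda \le 1$) is indeed implicitly needed but not verified in the paper's proof either, so your explicit discussion of it is a point of extra care rather than a deviation.
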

If the Algorithm~\ref{alg:alt min algorithm} stops, then it means that the decrease in the objective function is less than $\beta$.  By the contrapositive, it implies that $\beta \geq \frac{\gamma^2}{4 L(\hat{\btheta}, \btheta^*, \Delta, \bmeta)}$
and
\begin{align}
    \inner{\nabla f_{\hat{U}}(\hat{\btheta})}{ \hat{\btheta} - \btheta^* } \leq 2 \sqrt{L(\hat{\btheta}, \btheta^*, \Delta, \bmeta) \beta} \|\hat{\btheta} - \btheta^* \|~.
\end{align}
Picking $\beta = \epsilon^2$ proves Lemma~\ref{lem: convergence to approximate stationary point}. Observe that Algorithm~\ref{alg:alt min algorithm} must terminate in a finite number of steps. For $\btheta = \bm{0}$, initial value of $f_{U}(\btheta)$ is not more than $\frac{\sum_{i=1}^n y_i^2}{4 (n - 2k)}$, and it decreases by at least $\beta > 0$ at each iteration. Since the objective function cannot assume negative values, the algorithm must terminate after at most $\frac{\sum_{i=1}^n y_i^2}{4 (n - 2 k)\epsilon^2}$ iterations.

\subsection{Proximity to the Ground Truth}
\label{subsec: closeness to the ground truth}

In this subsection, we build on the results from Lemma~\ref{lem: convergence to approximate stationary point} to show that $\hat{\btheta}$ is close to $\btheta^*$. First, observe that $d(\btheta, \btheta^*)$ can be treated as $\| \btheta - \btheta^* \|$ without loss of generality by possibly flipping the sign of $\btheta^*$. By setting $\gamma = 2 \sqrt{L(\hat{\btheta}, \btheta^*, \Delta, \bmeta)} \epsilon$, we can express the results from Lemma~\ref{lem: convergence to approximate stationary point} as follows:
\begin{align}
    \label{eq: approx stationary point 1}
    \frac{1}{|\hat{U}|}\sum_{i \in \hat{U}} \Big(\inner{\bx_i}{\hat{\btheta}}^2 - y_i \Big) \bx_i^{\top} \hat{\btheta} \big( \hat{\btheta} - \btheta^*\big)^{\top} \bx_i \leq \gamma \| \hat{\btheta} - \btheta^* \|~.
\end{align}
Recall that $|\hat{U}| < n$ and it contains both corrupted and uncorrupted measurements. Specifically, the measurements in $\hat{U}$ can be partitioned into two disjoint sets $ \hat{U} \cap U^*$ and $ \hat{U} \cap C^*  $ where $C^*$ is defined in Definition~\ref{def: strong corruption model} and $U^* =  [n] \setminus C^* $. Given these observations, we can rearrange terms to get:
\begin{align}
    \label{eq: approx stationary point 2}
    &\underbrace{\frac{1}{n}\sum_{i \in \hat{U} \cap U^*} \Big(\inner{\bx_i}{\hat{\btheta}}^2 - y_i \Big) \bx_i^{\top} \hat{\btheta} \big( \hat{\btheta} - \btheta^*\big)^{\top} \bx_i}_{ \zeta(\hat{\btheta}, \btheta^*, n, k) }  \leq \gamma \| \hat{\btheta} - \btheta^* \| \underbrace{- \frac{1}{n}\sum_{i \in \hat{U} \cap C^*} \Big(\inner{\bx_i}{\hat{\btheta}}^2 - y_i \Big) \bx_i^{\top} \hat{\btheta} \big( \hat{\btheta} - \btheta^*\big)^{\top} \bx_i}_{\xi(\hat{\btheta}, \btheta^*, n, k, \bmeta)}
\end{align}
Our aim is to provide a lower bound on $\zeta(\hat{\btheta}, \btheta^*, n, k)$  and an upper bound on $\xi(\hat{\btheta}, \btheta^*, n, k, \bmeta)$, both in terms of $\| \hat{\btheta} - \btheta^* \|$. Note that $\zeta(\hat{\btheta}, \btheta^*, n, k)$ does not contain any corrupted measurements. Therefore, we can simply replace $y_i$ with $\inner{\bx_i}{\btheta^*}^2$ using the data generation process in~\eqref{eq: standard phase retrieval}. The following lemma provides a lower bound on $\zeta(\hat{\btheta}, \btheta^*, n, k)$.

\begin{restatable}{lemma}{lemlowerbound}
    \label{lem: lower bound on term 1}
    If $n = \Omega\big( \frac{d \, \mathrm{polylog}(d)+\log(\frac{1}{\delta})}{\epsilon^2 \log (\frac{1}{\epsilon})} \big)$ for some $\delta \in (0, 1]$, then for some absolute constants $C_1, C_2$ and $C_3 > 0$, the following holds with probability at least $1 - \delta - \calO(\frac{1}{n})$:
    \begin{align}
        \label{eq:lower bound on Term 1}
        &\zeta(\hat{\btheta}, \btheta^*, n, k) \geq (1 - 3 \epsilon) \Big( \big( C_1 - \Delta \big)
        \| \hat{\btheta} - \btheta^* \|^4  + \big( C_2 - \Delta \big)
        \| \hat{\btheta} - \btheta^* \|^3 + \big( C_3 - \Delta \big)
        \| \hat{\btheta} - \btheta^* \|^2\Big)~.
    \end{align}
\end{restatable}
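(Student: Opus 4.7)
}

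The plan is to reduce $\zeta$ to an expression that is polynomial of degree four in the Gaussian covariates, compute its expectation exactly, and then concentrate the empirical average around this expectation uniformly over both the data-dependent parameter $\hat{\btheta}$ and the data-dependent index set $\hat{U}$. First, for every $i \in \hat{U} \cap U^*$ we may substitute $y_i = \inner{\bx_i}{\btheta^*}^2$ and use the identity $\inner{\bx_i}{\hat{\btheta}}^2 - \inner{\bx_i}{\btheta^*}^2 = \inner{\bx_i}{\hat{\btheta}-\btheta^*}\inner{\bx_i}{\hat{\btheta}+\btheta^*}$ to rewrite the summand as $\inner{\bx_i}{\hat{\btheta}-\btheta^*}^2 \inner{\bx_i}{\hat{\btheta}}\inner{\bx_i}{\hat{\btheta}+\btheta^*}$. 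Without loss of generality I will assume $\hat{\btheta}^\top \btheta^* \geq 0$ by flipping the sign of $\btheta^*$ if needed, since both $d(\hat{\btheta},\btheta^*)$ and the integrand structure are invariant to that sign flip (this is what allows us to replace $d(\hat{\btheta},\btheta^*)$ by $\|\hat{\btheta}-\btheta^*\|$). A simple counting argument, together with $|C^*|=k$ and $|\hat{U}|=n-2k$, gives $|\hat{U}\cap U^*| \geq n - 3k$, which is the source of the $(1 - 3\epsilon)$ prefactor.

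Next, for a \emph{fixed} direction $\btheta$ (playing the role of $\hat{\btheta}$) and a fresh Gaussian $\bx$, I would compute $\mathbb{E}\bigl[\inner{\bx}{\btheta-\btheta^*}^2 \inner{\bx}{\btheta}\inner{\bx}{\btheta+\btheta^*}\bigr]$ via Isserlis' theorem, which expresses the fourth moment as a sum of pairings of the underlying covariances. Setting $u=\btheta-\btheta^*$, $s=\btheta^{*\top}u$, this expectation equals
\begin{align}
    3\|u\|^4 + 9s\|u\|^2 + 2\|\btheta^*\|^2 \|u\|^2 + 4s^2.
\end{align}
The WLOG assumption yields $s \geq -\|\btheta^*\|\|u\|$, and together with $4s^2 \geq 0$ this produces a lower bound of the form $C_1 \|u\|^4 + C_2\|u\|^3 + C_3\|u\|^2$ with constants depending only on $\|\btheta^*\|$ (absorbed into the ``absolute'' constants as elsewhere in the paper). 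This is the ideal, population-level version of the claimed inequality; the remaining work is to replace the expectation by the empirical average and absorb the resulting deviation into the $\Delta$ terms.

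The main obstacle, and the step I expect to be the most delicate, is establishing this concentration \emph{uniformly} over $\btheta \in \real^d$ and over all admissible subsets $U\subset\tilde S$ with $|U|=n-2k$, since both $\hat{\btheta}$ and $\hat{U}$ depend on the covariates in $U^*$. I would handle the $\btheta$-dependence by a standard $\epsilon$-net argument on a suitable compact region (working with the homogeneous decomposition of the quartic in $\btheta$), controlling each fixed-$\btheta$ deviation by a polynomial concentration inequality such as Hanson--Wright (or a Hermite chaos bound) applied to the degree-four Gaussian polynomial in each summand. To control the dependence on $U$, I would union-bound over all $\binom{n}{n-2k}\leq (en/k)^{2k}=\exp(O(k\log\epsilon^{-1}))$ choices of subset; exactly this combinatorial term is what forces the sample complexity $n=\Omega\bigl(\frac{d\,\mathrm{polylog}(d) + \log(1/\delta)}{\epsilon^2 \log(1/\epsilon)}\bigr)$, since balancing the subgaussian tail against $\exp(O(k\log\epsilon^{-1}))$ requires the deviation level to be $\Delta \asymp \sqrt{\epsilon \log \epsilon^{-1}}\log^2(\epsilon n)$ as defined in \eqref{eq: Delta}. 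The resulting uniform bound shows that the empirical average of each monomial $\|u\|^j$-coefficient lies within $\Delta$ of the population value with probability $1-\delta-\calO(1/n)$.

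Finally, I would assemble the pieces. Multiply the population lower bound by the fraction $|\hat U \cap U^*|/n \geq 1-3\epsilon$, subtract the uniform deviation of order $\Delta$ in each of the $\|u\|^2,\|u\|^3,\|u\|^4$ coefficients, and re-identify absolute constants $C_1,C_2,C_3>0$ so that the expression
\begin{align}
    (1 - 3\epsilon)\bigl((C_1-\Delta)\|u\|^4 + (C_2-\Delta)\|u\|^3 + (C_3-\Delta)\|u\|^2\bigr)
\end{align}
is a valid lower bound on $\zeta(\hat{\btheta},\btheta^*,n,k)$, exactly as stated. The only non-mechanical ingredient after the concentration step is verifying that the deviation terms are cleanly absorbed into $\Delta$ times the appropriate powers of $\|u\|$; this follows because the polynomial concentration bounds for degree-$j$ terms scale with the same $\Delta$ factor once the homogeneous decomposition of the quartic is handled separately.
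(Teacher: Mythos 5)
Your plan follows essentially the same route as the paper's proof: substitute $y_i=\inner{\bx_i}{\btheta^*}^2$ on $\hat{U}\cap U^*$, factor each summand into a homogeneous quartic in $\big(\inner{\bx_i}{\hat{\btheta}-\btheta^*},\inner{\bx_i}{\btheta^*}\big)$, use $|\hat{U}\cap U^*|\ge n-3k$ to get the $(1-3\epsilon)$ prefactor, and then concentrate each homogeneous piece at level $\Delta$ uniformly over the unit direction (net argument) and over all $\binom{n}{\epsilon n}$-type subsets (union bound with $\log\binom{n}{\epsilon n}\le n\epsilon\log\frac{1}{\epsilon}$); this is exactly what the paper's auxiliary lemmas do. Your choice of Hanson--Wright/Gaussian-chaos tails in place of the paper's truncation at $\calO(\log^2 n)$ followed by bounded Hoeffding is a cosmetic difference, and your closed-form Isserlis computation of the expectation is correct and matches the paper's $C_{pq}$ moment constants (the paper's displayed expansion coefficients $1,4,4$ are in fact $1,3,2$, as you have them).

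The one step that does not deliver what the lemma asserts is your population-level lower bound. From $3\|u\|^4+9s\|u\|^2+2\|\btheta^*\|^2\|u\|^2+4s^2$ with $s=\btheta^{*\top}u$, bounding $s\ge-\|\btheta^*\|\|u\|$ and dropping $4s^2$ gives $3\|u\|^4-9\|\btheta^*\|\|u\|^3+2\|\btheta^*\|^2\|u\|^2$, i.e.\ a \emph{negative} coefficient on $\|u\|^3$, not the positive $C_2$ claimed. In fact no uniform-in-$\hat{\btheta}$ bound with all three constants positive can hold: at $\hat{\btheta}=\bm{0}$ every summand of $\zeta$ contains the factor $\bx_i^{\top}\hat{\btheta}=0$, so $\zeta=0$ while $\|u\|=\|\btheta^*\|>0$ (and indeed your expectation formula evaluates to exactly $0$ there), whereas the right-hand side of \eqref{eq:lower bound on Term 1} would be strictly positive for small $\Delta$. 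To be fair, the paper's own proof has the same blemish: its cross-term "constant" $C_{31}=\mathbb{E}\big[\inner{\bx_i}{\bz}^3x_{i1}\big]=3\inner{\bz}{\btheta^*}$ is not a positive absolute constant and is negative precisely in the problematic configurations, so your attempt lands at the same level of rigor as the paper; a genuinely complete argument would have to carry the signed cross term $9s\|u\|^2$ explicitly and control its sign downstream (or restrict the configurations of $\hat{\btheta}$ considered). One small slip to fix: the deviation level is $\Delta=\epsilon\sqrt{\log\epsilon^{-1}}\log^2(\epsilon n)$ as in \eqref{eq: Delta}, not $\sqrt{\epsilon\log\epsilon^{-1}}\log^2(\epsilon n)$.
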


Extra care is needed to handle $\xi(\hat{\btheta}, \btheta^*, n, k, \bmeta)$ as it involves corrupted measurements. 
Using the Cauchy-Schwartz inequality,
\begin{align}
    \xi(\hat{\btheta}, \btheta^*, n, k, \bmeta) &\leq \Big( \underbrace{\frac{1}{n}\sum_{i \in \hat{U} \cap C^*} \big(\inner{\bx_i}{\hat{\btheta}}^2 - y_i \big)^2}_{\xi_1(\hat{\btheta}, \btheta^*, n, k, \bmeta) } \Big)^{\frac{1}{2}} \times \Big( \underbrace{\frac{1}{n}\sum_{i \in \hat{U} \cap C^*}
    \big( \bx_i^{\top} \hat{\btheta} \big( \hat{\btheta} - \btheta^*\big)^{\top} \bx_i \big)^2}_{\xi_2(\hat{\btheta}, \btheta^*, n, k) } \Big)^{\frac{1}{2}}~.
\end{align}
For a fixed $\hat{\btheta}$, Algorithm~\ref{alg:alt min algorithm} outputs $\hat{U}$ that yields the smallest loss. By removing the measurements belonging to $\hat{U} \cap U^*$ from both $\hat{U}$ and $U^*$, we obtain 
\begin{align}
    \label{eq: upper bound on xi_1}
    \xi_1(\hat{\btheta}, \btheta^*, n, k, \bmeta) \leq \frac{1}{n}\sum_{i \in U^* \setminus \hat{U} } \big(\inner{\bx_i}{\hat{\btheta}}^2 - y_i \big)^2.
\end{align}
Note that the right-hand side of~\eqref{eq: upper bound on xi_1} contains terms that represent uncorrupted measurements. This allows us to provide an upper bound on $\xi_1(\hat{\btheta}, \btheta^*, n, k, \bmeta)$ that is independent of $\bmeta$. Similarly, $\xi_2(\hat{\btheta}, \btheta^*, n, k)$ does not involve any $y_i$ and thus remains unaffected by the corrupted measurements. We provide an upper bound on both terms in the following lemma.

\begin{restatable}{lemma}{lemupperbound}
    \label{lem: upper bound on xi}
    Let $n \!=\! \Omega\big( \frac{d \, \mathrm{polylog}(d) +\log(\frac{1}{\delta})}{\epsilon^2 \log (\frac{1}{\epsilon})} \big)$ for some $\delta\! \in\! (0, 1]$ and for some absolute constants $C_1, C_2$ and $C_3 > 0$, define
    \begin{align}
        \label{eq: upper bound expression}
        \upsilon(\hat{\btheta}, \btheta^*, n, k) &\coloneqq C_1 \Delta
        \| \hat{\btheta} - \btheta^* \|^4 + C_2 \Delta
        \| \hat{\btheta} - \btheta^* \|^3  + C_3 \Delta
        \| \hat{\btheta} - \btheta^* \|^2~.
    \end{align}
    Then, with probability at least $1 - \delta - \calO(\frac{1}{n})$:
    \begin{enumerate}
        \item $\xi_1(\hat{\btheta}, \btheta^*, n, k, \bmeta) \leq \upsilon(\hat{\btheta}, \btheta^*, n, k)$
        \item  $\xi_2(\hat{\btheta}, \btheta^*, n, k) \leq \upsilon(\hat{\btheta}, \btheta^*, n, k)$
        \item Consequently, $\xi(\hat{\btheta}, \btheta^*, n, k, \bmeta) \leq \upsilon(\hat{\btheta}, \btheta^*, n, k)$~.
    \end{enumerate}
\end{restatable}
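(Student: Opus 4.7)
All three claims reduce to a single uniform concentration bound for low-degree Gaussian polynomial averages taken over adversarially chosen subsets; once that bound is in hand, the rest of the proof is a sequence of algebraic expansions.

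The first step would handle part~1. The paragraph preceding the lemma already shows, via the fact that $\hat{U}$ minimizes $\sum_{i \in U} f_i(\hat{\btheta})$ among size-$(n-2k)$ subsets of $\tilde{S}$, that $\xi_1 \leq \frac{1}{n}\sum_{i \in U^* \setminus \hat{U}}(\inner{\bx_i}{\hat{\btheta}}^2 - y_i)^2$; the crucial gain is that $\eta_i$ no longer appears on the right-hand side. Writing $\bm{u} := \hat{\btheta} - \btheta^*$, factoring $\inner{\bx_i}{\hat{\btheta}}^2 - \inner{\bx_i}{\btheta^*}^2 = \inner{\bx_i}{\bm{u}}\inner{\bx_i}{\hat{\btheta} + \btheta^*}$, and squaring yields
\begin{align*}
(\inner{\bx_i}{\hat{\btheta}}^2 - y_i)^2 = \inner{\bx_i}{\bm{u}}^4 + 4\inner{\bx_i}{\bm{u}}^3\inner{\bx_i}{\btheta^*} + 4\inner{\bx_i}{\bm{u}}^2\inner{\bx_i}{\btheta^*}^2.
\end{align*}
A completely parallel expansion for $\xi_2$ gives $(\bx_i^{\top}\hat{\btheta})^2(\bx_i^{\top}\bm{u})^2$ as the same three monomials with smaller integer coefficients. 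Thus both $\xi_1$ and $\xi_2$ are linear combinations of sums of the form $\frac{1}{n}\sum_{i \in T}\inner{\bx_i}{\bm{u}}^a \inner{\bx_i}{\btheta^*}^b$ for $(a,b) \in \{(4,0),(3,1),(2,2)\}$, where $T$ is an adversarial subset of size at most $2k$ for $\xi_1$ and at most $k$ for $\xi_2$. Part~3 is then a one-line consequence: the Cauchy--Schwarz inequality already stated in the lead-in gives $\xi \leq \sqrt{\xi_1 \xi_2}$, so $\xi_1 \leq \upsilon$ and $\xi_2 \leq \upsilon$ together imply $\xi \leq \upsilon$.

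The core technical step is therefore the uniform concentration statement
\begin{align*}
\sup_{\substack{T \subset [n]\\ |T| \leq 2k}} \frac{1}{n}\sum_{i \in T} \inner{\bx_i}{\bm{u}}^{a}\inner{\bx_i}{\btheta^*}^{b} \leq C\, \Delta\, \|\bm{u}\|^{a}\|\btheta^*\|^{b},
\end{align*}
holding simultaneously for every $\bm{u} \in \real^d$ and every $(a,b) \in \{(4,0),(3,1),(2,2)\}$ with probability at least $1 - \delta - \calO(n^{-1})$. I would establish this through (i) truncation of each Gaussian polynomial at a logarithmic threshold, using chi-squared tail bounds to handle the complement, a step that interacts cleanly with the algorithm's preprocessing, which has already discarded the $k$ largest $y_i$; (ii) Bernstein's inequality for a fixed $T$, with careful tracking of the variance term since the quartic monomial has heavy tails; (iii) a union bound over $\binom{n}{2k}$ subsets, whose log-cardinality contributes the $\sqrt{\log\epsilon^{-1}}$ factor of $\Delta$; and (iv) a covering argument on the unit sphere in $\real^d$ at a suitable resolution, whose log-cardinality is responsible for the $d\,\mathrm{polylog}(d)$ portion of the sample complexity. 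Substituting this bound back into the expansions of $\xi_1$ and $\xi_2$, and absorbing $\|\btheta^*\|$ into the absolute constants $C_1, C_2, C_3$, recovers precisely the three-term form of $\upsilon(\hat{\btheta}, \btheta^*, n, k)$.

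The principal obstacle is the reassembly of steps~(i) and~(ii) that yields a leading coefficient of exactly $\Delta = \epsilon\sqrt{\log\epsilon^{-1}}\log^2(\epsilon n)$, rather than the cruder $\epsilon\log\epsilon^{-1}$ produced by a naive union bound: this requires a delicate level-set decomposition that separates the heavy-tail contribution of the truncated quartic polynomial from the subset-selection complexity, and it is the step that forces the preprocessing to interact quantitatively with the subset-supremum. This is precisely where the quartic nature of the phase retrieval loss complicates the concentration picture compared with robust linear regression, and it is what ties the admissible corruption regime to $\calK$ and the sample requirement to $n = \Omega(d\,\mathrm{polylog}(d)/(\epsilon^2 \log\epsilon^{-1}))$.
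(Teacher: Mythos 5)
Your proposal follows essentially the same route as the paper: the minimality of $\hat{U}$ removes $\bmeta$ from $\xi_1$, both terms are expanded into the monomials $\inner{\bx_i}{\bz}^p x_{i1}^q$ with $(p,q)\in\{(4,0),(3,1),(2,2)\}$, a uniform concentration bound over adversarially chosen subsets of size $\calO(k)$ (logarithmic truncation via the max of Gaussian polynomials, a Hoeffding/Bernstein bound, a union bound over subsets, and an $\varepsilon$-net over the unit sphere) supplies the $\Delta$ factors, and Cauchy--Schwarz yields part 3. The only cosmetic difference is that the paper derives the small-subset concentration by writing it as the full-sample average minus the large-subset average (its auxiliary lemmas for uncorrupted and corrupted index sets) rather than bounding the small subset directly with Bernstein, which changes nothing substantive.
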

Now we substitute $\gamma = 2 \sqrt{L(\hat{\btheta}, \btheta^*, \Delta, \bmeta)} \epsilon$ and combine  Lemmas~\ref{lem: lower bound on term 1} and~\ref{lem: upper bound on xi} which finally leads to  Theorem~\ref{thm:main result}.

\section{Constructing \oracle{}}
\label{sec: constructing an oracle}

Our theoretical results so far assume the existence of an oracle in Algorithm~\ref{alg: ORACLE}. Note that the optimization problem addressed by the \oracle{} is inherently nonconvex, even in the absence of corruption. However, in the absence of corruption, this problem can be efficiently solved due to its ``benign'' loss landscape. When arbitrary corruption is introduced under the strong corruption model, this benign geometry may not be preserved. To address this, we introduce an additional assumption in our corruption model to maintain the favorable geometric properties necessary for efficient optimization, at least with high probability.
\begin{assumption}
\label{assum: independent corruption}
    For each $i \in C^*$, the adversary draws $\eta_i \sim \mathcal{P}_{\eta}$ independently of $\bx_i$.
\end{assumption}
The corruption model under
Assumption~\ref{assum: independent corruption} is also known as the sparse arbitrary outliers model~\citep{huang2023robust}. Figure~\ref{fig: geometry of loss} illustrates the loss landscape of \eqref{eq: nonconvex opt with corruption} under Assumption~\ref{assum: independent corruption} for various levels of corruption. The solution provided by \oracle{}, when used in isolation, can be far from the ground truth $\btheta^*$. Therefore, to filter out the corrupted measurements, an alternating minimization procedure is necessary even with an oracle. While many approaches mentioned in Section~\ref{sec: solving phase retrieval} can be extended to work for our setting, we opt to use the gradient descent approach with random initialization \citep{chen2019gradient} as the oracle.

\subsection{Random Initialized Gradient Descent For Corrupted Measurements}

\citet{chen2019gradient} demonstrated that by analyzing the dynamics of the approximate state evolution of fixed step gradient descent updates, it is possible to show that randomly initialized gradient descent algorithm with a fixed step size converges linearly to the true solution, $\btheta^*$, in the uncorrupted case.  They carefully use a leave-one-out approach to handle the dependence between $\bx_i$ and the iterates. We argue that their method can also be extended to the corrupted case under Assumption \ref{assum: independent corruption}. We employ a modified version of their leave-one-out approach, with an adjusted initialization, and extend their results by examining various corruption scenarios separately. We present a modified version of \citet{chen2019gradient}'s algorithm in  Algorithm~\ref{alg: grad descent}.
Under Assumption~\ref{assum: independent corruption}, the geometry of the loss in equation~\eqref{eq:nonconvex formulation} is influenced by the average corruption, defined as $\bar{\eta} = \frac{1}{n} \sum_{i=1}^n \eta_i$. Analyzing the loss $F(\btheta, \bmeta)$, we observe that when $\bar{\eta} > -3 \| \btheta^* \|^2$, the loss landscape maintains a similar geometry to the no-corruption case but with a displaced global minimum occurring at $\pm \kappa \btheta^*$ where $\kappa \coloneqq \sqrt{1 + \frac{\bar{\eta}}{3 \| \btheta^* \|^2}}$. Conversely, when $\bar{\eta} \leq -3 \| \btheta^* \|^2$, the loss becomes convex with its minimum occurring at $\bm{0} \in \real^d$.
Algorithm~\ref{alg: grad descent} computes $\kappa_{\mathrm{sq}}$ which, in expectation, is equal to $\kappa^2$. A negative value of $\kappa_{\mathrm{sq}}$ indicates that $\bar{\eta} \leq -3 \| \btheta^* \|^2$. Therefore, Algorithm~\ref{alg: grad descent} returns $\bm{0}$ in this scenario. In the alternative case where $\bar{\eta} > -3 \| \btheta^* \|^2$,
we present the following convergence result:
\begin{theorem}
    \label{thm: convergence of gradient descent}
    Under Assumption~\ref{assum: independent corruption}, if $n = \Omega\big(d\, \mathrm{polylog}(d)\big)$ and $\bar{\eta} > -3 \| \btheta^* \|^2$, then there exists  $\tilde{T} = \calO(\log d)$ such that with probability at least $1 - \calO\big(n^2 \exp(-1.5 d)\big) - \calO\big( n^{-9} \big)$, the iterates $\tilde{\btheta}^t$ of Algorithm~\ref{alg: grad descent} satisfy
    \begin{itemize}
        \item $\tilde{\btheta}^t$ converges linearly to $\kappa \btheta^*$ for all $t \geq \tilde{T}$, i.e.,
        \begin{align}
            \label{eq: linear convergence of gd}
            d(\tilde{\btheta}^t, \kappa \btheta^*) \leq \Big(1 - \frac{\mu}{2 } \| \btheta^* \|^2 \Big)^{t - \tilde{T}}  \| \btheta^* \|, \quad \forall t \geq \tilde{T}~.
        \end{align}
        \item The   ratio of the signal component $a_t \coloneqq | \inner{\btheta^t}{\kappa \btheta^*}|$ to the orthogonal component $b_t \coloneqq \| \btheta^t - \frac{\inner{\btheta^t}{\kappa \btheta^*}}{\| \btheta^* \|} \btheta^* \|$ obeys
        \begin{align}
            \frac{ a_t }{b_t} \geq \frac{c_2}{\sqrt{d\log d}} (1 + c_1 \mu^2)^t, \quad t = 0,1,\ldots
        \end{align}
        for some  universal constants $c_1, c_2 > 0$.
    \end{itemize}
\end{theorem}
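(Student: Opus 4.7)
The overall strategy is to reduce the corrupted problem to an effective ``rescaled'' uncorrupted phase-retrieval problem with target $\kappa\btheta^*$ and then transplant the randomly initialized gradient-descent framework of \citet{chen2019gradient}. The first step is a direct Isserlis/Wick moment computation which shows that under Assumption~\ref{assum: independent corruption} the population gradient takes the form
\begin{align}
\mathbb{E}[\nabla f(\btheta)] = \big(3\|\btheta\|^2 - \|\btheta^*\|^2 - \bar{\eta}\big)\btheta - 2\inner{\btheta^*}{\btheta}\btheta^*,
\end{align}
whose non-zero stationary points along the $\pm\btheta^*$ directions sit exactly at $\pm\kappa\btheta^*$, with $\kappa^2 = 1 + \bar{\eta}/(3\|\btheta^*\|^2) > 0$ by hypothesis. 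This identifies $\kappa\btheta^*$ as the object the iterates should converge to and explains the $\kappa$ appearing in the statement.

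Next, decompose each iterate (after sign alignment) as $\tilde\btheta^t = (a_t/\|\btheta^*\|)\btheta^* + \tilde\btheta^t_\perp$, with $b_t = \|\tilde\btheta^t_\perp\|$. Working in this basis, the expected update rule gives the approximate state evolution $a_{t+1} \approx (1 + 3\mu[\kappa^2\|\btheta^*\|^2 - \|\tilde\btheta^t\|^2])\,a_t$ and $b_{t+1} \approx (1 - 2\mu\|\btheta^*\|^2 + 3\mu[\kappa^2\|\btheta^*\|^2 - \|\tilde\btheta^t\|^2])\,b_t$, plus higher-order deviation terms. I would then partition the trajectory into a Phase~I of length $\tilde{T} = \calO(\log d)$ during which $\|\tilde\btheta^t\|$ stays below $\kappa\|\btheta^*\|$ so that the ratio $a_t/b_t$ inflates by a multiplicative factor $1 + c_1\mu^2$ per step, starting from the random-initialization lower bound $a_0/b_0 = \Theta(1/\sqrt{d\log d})$; and a Phase~II beginning once $\tilde\btheta^t$ enters a neighborhood of $\kappa\btheta^*$ where the restricted strong convexity and smoothness of the quartic loss, used via the descent lemma, yield the linear contraction $(1 - \mu\|\btheta^*\|^2/2)^{t-\tilde T}$. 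Both phases structurally mirror \citet{chen2019gradient} once the target is taken to be $\kappa\btheta^*$.

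To make each step rigorous in the presence of the statistical dependence between $\tilde\btheta^t$ and the samples, I would introduce leave-one-out trajectories $\tilde\btheta^{t,(l)}$ obtained by replacing $\bx_l$ with an independent fresh copy while keeping $\eta_l$ fixed, which is legitimate precisely because $\eta_l \perp \bx_l$ under Assumption~\ref{assum: independent corruption}. An induction on $t$ then bounds $\|\tilde\btheta^t - \tilde\btheta^{t,(l)}\|$ using spectral concentration of $n^{-1}\sum_i\bx_i\bx_i^\top$ together with concentration of the corruption-weighted covariance $n^{-1}\sum_i\eta_i\bx_i\bx_i^\top$ around $\bar{\eta}\,I$; the latter is obtained by matrix Bernstein after invoking the preprocessing bound $\|\bmeta\|_\infty = \calO(\log n)$. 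Gathering all concentration events and the good initialization event, $n = \Omega(d\,\mathrm{polylog}(d))$ absorbs all polynomial-in-$d$ slack and the remaining failure probability is $\calO(n^2 e^{-1.5 d}) + \calO(n^{-9})$ as claimed.

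\textbf{Main obstacle.} The principal difficulty is the leave-one-out control of the extra gradient term $n^{-1}\sum_i\eta_i\inner{\bx_i}{\tilde\btheta^t}\bx_i$: one needs concentration that is uniform along the random trajectory, not merely pointwise in $\btheta$. The independence $\eta_i \perp \bx_i$ is essential here because it lets us decouple the adversarial weights from the Gaussian vectors and apply matrix Bernstein with a deterministic truncation, and because it keeps the population Hessian a small additive perturbation of the uncorrupted one. Without this independence, the corruption could rotate the population landscape and destroy either the Phase~I signal-amplification rate $1+c_1\mu^2$ or the Phase~II contraction factor $1-\mu\|\btheta^*\|^2/2$; this is precisely why Assumption~\ref{assum: independent corruption} cannot be dropped at this stage.
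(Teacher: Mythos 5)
Your proposal is correct and follows essentially the same route as the paper: compute the population gradient under Assumption~\ref{assum: independent corruption} to show the landscape is the uncorrupted one with global minima shifted to $\pm\kappa\btheta^*$, analyze the approximate state evolution of the signal component $a_t$ and orthogonal component $b_t$ in two phases (saddle-escaping ratio growth at rate $1+c_1\mu^2$ from the $\Theta(1/\sqrt{d\log d})$ random-initialization level, then local linear contraction), and justify the near-independence of iterates and covariates via leave-one-out sequences à la \citet{chen2019gradient}, controlling the extra corruption terms using $\eta_i\perp\bx_i$ together with the $\calO(\log n)$ bound on the surviving corruptions. The only differences are cosmetic (you replace $\bx_l$ by a fresh copy rather than dropping the $l$-th term and omit the random-sign auxiliary sequences, and you fold the concentration of the initialization scale $\kappa_{\mathrm{sq}}$ around $1+\bar{\eta}/(3\|\btheta^*\|^2)$ into a generic ``good initialization event''), which is at the same level of detail as the paper's own sketch.
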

The second point of Theorem  \ref{thm: convergence of gradient descent} implies that the ratio of the signal strength to the strength of its orthogonal component grows as the iteration count increases. This ensures that the signal can be identified eventually as $t\to\infty$.
The proof of Theorem~\ref{thm: convergence of gradient descent} builds on \citet{chen2019gradient} but some changes are needed to handle the corruptions. 

\begin{algorithm}[t]
\caption{\textsc{Gradient Descent with Random Init}}
\label{alg: grad descent}
\textbf{Input}: $U \subseteq [n], S = \{ (\bx_1, y_1), \ldots, (\bx_n, y_n) \}$\\
\textbf{Parameters}: $\mu = \frac{c}{ \| \btheta^* \|^2}$ for  small $c > 0$ , $T = \Omega(\log d)$\\
\textbf{Output}: $\tilde{\btheta}$
\begin{algorithmic}[1] 
\STATE  \textbf{Initialization:}
\STATE $\kappa_{\mathrm{sq}} = \frac{1}{3} \left( \sqrt{2} \sqrt{ \frac{1}{m}\sum_{i=1}^m y_i^2 - \left( \frac{1}{m} \sum_{i=1}^m y_i  \right)^2 }  + \frac{1}{m} \sum_{i=1}^m y_i\right) $
\IF{$\kappa_{\mathrm{sq}} \leq 0$}
\STATE \textbf{return} $\tilde{\btheta} = 0$
\ELSE
\STATE $\tilde{\btheta}^0 = \sqrt{ \kappa_{\mathrm{sq}} } \bm{u}$ (where $\bm{u}$ is uniformly distributed over the unit sphere).
\FOR{$t=1,\ldots, T$}
\STATE $\tilde{\btheta}^{t+1} = \tilde{\btheta}^t - \mu \nabla_{\btheta} f_U(\tilde{\btheta}^t)$.
\ENDFOR
\STATE \textbf{return} $\tilde{\btheta} = \tilde{\btheta}^{T+1}$.
\ENDIF
\end{algorithmic}
\end{algorithm}
\vspace{-.1in}

\section{Conclusion}
\label{sec: conclusion}
In this paper, we derived convergence rate guarantees for  \altmin{}, which is specifically designed for the phase retrieval problem under a strong corruption model. Our methodology facilitates signal recovery when $k = \mathcal{O}(n^{1 - p})$ for any $p \in (0, 1]$, with only $\Omega(d \, \mathrm{polylog}(d))$ measurements. Moreover, we provide an efficient construction of \oracle{} under a slightly less stringent corruption model.
Future research could explore extending our analysis to regimes with constant corruption proportions under Assumption~\ref{assum: independent corruption}. It would also be interesting to investigate oracles that do not require Assumption~\ref{assum: independent corruption}.

\bibliographystyle{apalike}
\bibliography{arxivphase}

\appendix

\section{Proof of Theorem~\ref{thm:main result}}
\label{sec: proof of thm: main result}


\thmmain*
In this section, we provide the detailed proofs for the lemmas discussed in Section~\ref{sec: proof sketch}. Due to the rotational invariance of the Gaussian distribution, it is sufficient to demonstrate the results for $\btheta^* = [ 1, 0 , \ldots, 0]^{\top}$. This is similar to the approaches employed by \citet{chen2019gradient} and \citet{sun2018geometric}.

For a fixed $U \subset [n]$ with $\abs{U} = (1 - 2 \epsilon) n$, we restate the following definitions (along with the Hessian) from Section~\ref{sec: proof sketch}:
\begin{align}
    \label{eq: f_U grad and hessian}
    f_U(\btheta) &= \frac{1}{4 |U|}\sum_{i \in U} \Big(\inner{\bx_i}{\btheta}^2 - y_i\Big)^2 \\
    \nabla f_U(\btheta) &= \frac{1}{|U|}\sum_{i \in U} \Big(\inner{\bx_i}{\btheta}^2 - y_i \Big) \bx_i \bx_i^{\top} \btheta \\
    \nabla^2 f_U(\btheta) &= \frac{1}{|U|}\sum_{i \in U} \Big(3 \inner{\bx_i}{\btheta}^2 - y_i \Big) \bx_i \bx_i^{\top}~.
\end{align}

For ease of notation, when it does not introduce ambiguity, we reindex the elements in $U$ to $\{1, \ldots, m\}$, where $m = (1 - 2\epsilon) n$.

\subsection{Preprocessing Step}
Initially, observe that, by Corollary~\ref{cor: max of chi-squared random variables}, at most $k$ of the $y_i$'s can attain values of $\omega(\log n)$ with probability at least $1 - \newO{\frac{1}{n}}$. Consequently, the preprocessing step not only removes any $y_i$'s with negative values but also eliminates $y_i$'s that are of the order $\omega(\log n)$. This ensures that the remaining $\eta_i$'s are constrained to be of the order $\newO{\log n}$. Next, we prove Lemma~\ref{lem: spectral property of Hessian}.

\subsection{Proof of Lemma~\ref{lem: spectral property of Hessian}}
\lemspectralhessian*
\begin{proof}
    We define $\bz = \frac{ \hat{\btheta} - \btheta^*}{ \| \hat{\btheta - \btheta^*} \| }$. Let $\bar{\btheta}$ be a point in the line-segment connecting $\hat{\btheta}$ and $\btheta^*$. This implies that there exists a $\bar{\lambda} \in [0, 1]$ such that
\begin{align}
    \bar{\btheta} = (1 - \bar{\lambda} ) \hat{\btheta} + \bar{\lambda} \btheta^*~.
\end{align}

Therefore,
\begin{align}
    \label{eq: bar theta and hat theta}
    \bar{\btheta} - \btheta^* = (1 - \bar{\lambda}) (\hat{\btheta} - \btheta^*)~.
\end{align}

Using the expression from \eqref{eq: f_U grad and hessian} and substituting $y_i$ from data generation model~\ref{def: strong corruption model}, we can write:

\begin{align}
    \bz^{\top} \nabla^2 f_{\hat{U}}(\bar{\btheta}) \bz &= \frac{1}{m} \sum_{i=1}^m \big( 3\inner{\bx_i}{\bar{\btheta}}^2 - \inner{\bx_i}{\btheta^*}^2  - \eta_i   \big) \bz^{\top} \bx_i \bx_i^{\top} \bz
\end{align}
After some algebraic manipulation, this can be rewritten as:
\begin{align}
   \bz^{\top} \nabla^2 f_{\hat{U}}(\bar{\btheta}) \bz &= \frac{1}{m} \sum_{i=1}^m \big( 3(\inner{\bx_i}{\bar{\btheta}} -  \inner{\bx_i}{\btheta^*})^2 + 6 (\inner{\bx_i}{\bar{\btheta}} -  \inner{\bx_i}{\btheta^*})\inner{\bx_i}{\btheta^*}) + 2 \inner{\bx_i}{\btheta^*}^2  - \eta_i   \big) \bz^{\top} \bx_i \bx_i^{\top} \bz
\end{align}
Substituting the definition of $\bz$ and the result from \eqref{eq: bar theta and hat theta}, we get:
\begin{align}
   \bz^{\top} \nabla^2 f_{\hat{U}}(\bar{\btheta}) \bz &= \frac{1}{m} \sum_{i=1}^m \big( 3 (1 - \bar{\lambda})^2\inner{\bx_i}{\bz}^2 \| \hat{\btheta} - \btheta^* \|^2 + 6 (1 - \bar{\lambda}) \inner{\bx_i}{\bz} x_{i1} \| \hat{\btheta} - \btheta^*\| + 2 x_{i1}^2  - \eta_i   \big) \bz^{\top} \bx_i \bx_i^{\top} \bz \\
   &= \frac{1}{m} \sum_{i=1}^m \big( 3 (1 - \bar{\lambda})^2 \inner{\bx_i}{\bz}^4 \| \hat{\btheta} - \btheta^* \|^2 + 6 (1 - \bar{\lambda}) \inner{\bx_i}{\bz}^3 x_{i1} \| \hat{\btheta} - \btheta^*\| + 2 \inner{\bx_i}{\bz}^2 x_{i1}^2  - \eta_i \inner{\bx_i}{\bz}^2  \big)
\end{align}
Recall that $\Delta = \epsilon \sqrt{\log \epsilon^{-1}} \log^2(\epsilon n)$. Utilizing the results from Lemma~\ref{lem: concentration for uncorrupted} and Cauchy-Schwartz inequality, with probability at least $1 - \delta - \calO(\frac{1}{n})$:
\begin{align}
    \bz^{\top} \nabla^2 f_{\hat{U}}(\bar{\btheta}) \bz &\leq (1 - \bar{\lambda})^2 (C_{40} + \Delta) \| \hat{\btheta} - \btheta^* \|^2 + (1 - \bar{\lambda}) (C_{31} + \Delta) \| \hat{\btheta} - \btheta^* \| + (C_{22} + \Delta) + \max_i |\eta_i| (1 + \Delta)  \\
    &\leq (C_{40} + \Delta) \| \hat{\btheta} - \btheta^* \|^2 + (C_{31} + \Delta) \| \hat{\btheta} - \btheta^* \| + (C_{22} + \Delta) + \max_i |\eta_i| (1 + \Delta)  \\
    &= 2 L(\hat{\btheta}, \btheta^*, \Delta, \bmeta)~.
\end{align}

\end{proof}

\subsection{Proof of Lemma~\ref{lem: descent lemma}}
\lemdescent*
\begin{proof}
        Consider a $\btheta = (1 - \lambda) \hat{\btheta} + \lambda \btheta^*$, where $\lambda \in [0, 1]$. Note that
    $\btheta - \hat{\btheta} = \lambda (\btheta^* - \hat{\btheta} )$. Using Taylor's theorem, we can write
    \begin{align}
        f_{\hat{U}}(\btheta) = f_{\hat{U}}(\hat{\btheta}) + \inner{\nabla f_{\hat{U}}(\hat{\btheta})}{\btheta - \hat{\btheta}} + \frac{1}{2} (\btheta - \hat{\btheta})^{\top} \nabla^2 f_{\hat{U}}(\bar{\btheta})  (\btheta - \hat{\btheta})~,
    \end{align}
    for some $\bar{\btheta}$ in the line-segment joining $\btheta$ and $\hat{\btheta}$. It follows that $\bar{\btheta}$ also lies in the line-segment joining $\hat{\btheta}$ and $\theta^*$. Thus,
    \begin{align}
        f_{\hat{U}}(\btheta) &\leq f_{\hat{U}}(\hat{\btheta}) - \lambda \gamma \| \hat{\btheta} - \btheta^* \| + L(\hat{\btheta}, \btheta^*, \Delta, \bmeta) \lambda^2 \| \hat{\btheta} - \btheta^* \|^2 \\
        &\leq f_{\hat{U}}(\hat{\btheta}) - \frac{\gamma^2}{4L(\hat{\btheta}, \btheta^*, \Delta, \bmeta)}~.
    \end{align}
    The final step follows by picking $\lambda = \frac{\gamma}{2L(\hat{\btheta}, \btheta^*, \Delta, \bmeta) \| \hat{\btheta} - \btheta^* \|}$.
\end{proof}

As outlined in Section~\ref{sec: proof sketch}, Lemma~\ref{lem: convergence to approximate stationary point} follows from combining the results of Lemma~\ref{lem: spectral property of Hessian} and Lemma~\ref{lem: descent lemma}. This establishes that $\hat{\btheta}$ is an $2 \sqrt{ L(\hat{\btheta}, \btheta^*, \Delta, \bmeta) } \epsilon$-stationary point of $f_{\hat{U}}(\btheta)$ with probability at least $1 - \delta - \calO(\frac{1}{n})$. We now proceed to analyze the proximity of $\hat{\btheta}$ to $\btheta^*$. For simplicity, we consider $d(\btheta, \btheta^*)$ as $\| \btheta - \btheta^* \|$, without loss of generality, by potentially flipping the sign of $\btheta^*$. Following the arguments from Section~\ref{sec: proof sketch}, our first task is to prove Lemma~\ref{lem: lower bound on term 1}.

\subsection{Proof of Lemma~\ref{lem: lower bound on term 1}}
\lemlowerbound*
\begin{proof}
    Recall that
    \begin{align}
        \zeta(\hat{\btheta}, \btheta^*, n, k) = \frac{1}{n} \sum_{i \in U^* \cap \hU} \big( \inner{\bx_i}{\hat{\btheta}}^2 - \inner{\bx_i}{\btheta^*}^2    \big)  \bx_i^{\top} \hat{\btheta} (\hat{\btheta} - \btheta^*)^{\top} \bx_i
    \end{align}
    We rewrite it in the following way.
    \begin{align}
        &\frac{1}{n} \sum_{i \in U^* \cap \hU} \big( \inner{\bx_i}{\hat{\btheta}}^2 - \inner{\bx_i}{\btheta^*}^2    \big)  \bx_i^{\top} \hat{\btheta} (\hat{\btheta} - \btheta^*)^{\top} \bx_i \\
        &= \frac{1}{n} \sum_{i \in U^* \cap \hU} \Bigg( \inner{\bx_i}{ \hat{\btheta} - \btheta^* }^4 + 4 \inner{\bx_i}{ \hat{\btheta} - \btheta^*  }^2 \inner{\bx_i}{\btheta^*}^2 +  4 \inner{\bx_i}{ \hat{\btheta} - \btheta^*  }^3 \inner{\bx_i}{\btheta^*}  \Bigg)\\
        &= \frac{1}{n} \sum_{i \in U^* \cap \hU} \Bigg( \inner{\bx_i}{\bz}^4 \| \btheta^* - \hat{\btheta} \|^4 + 4 \inner{\bx_i}{ \bz}^2 x_{i1}^2 \| \btheta^* - \hat{\btheta} \|^2 +  4 \inner{\bx_i}{ \bz}^3 x_{i1} \| \btheta^* - \hat{\btheta} \|^3  \Bigg)~,
    \end{align}
    where the last equality is by defining $\bz = \frac{\hat{\btheta} - \btheta^*}{\| \hat{\btheta} - \btheta^* \|}$. Note that $|U^* \cap \hat{U}| \geq (1 - 3\epsilon) n$ and using the result from Lemma~\ref{lem: concentration for uncorrupted}, we can write that with probability at least $1 - \delta - \newO{\frac{1}{n}}$,
    \begin{align}
        \label{eq: lower bound on LHS}
        \zeta(\hat{\btheta}, \btheta^*, n, k) \geq (1 - 3\epsilon) \Big( \big( C_{40} - \Delta \big) \| \btheta^* - \hat{\btheta} \|^4 + \big( C_{22} - \Delta \big) \| \btheta^* - \hat{\btheta} \|^2 + \big( C_{31} - \Delta \big) \| \btheta^* - \hat{\btheta} \|^3 \Big)
    \end{align}
\end{proof}

\subsection{Proof of Lemma~\ref{lem: upper bound on xi}}

Recall that
\begin{align}
    \xi(\hat{\btheta}, \btheta^*, n, k, \bmeta) = - \frac{1}{n}\sum_{i \in \hat{U} \cap C^*} \Big(\inner{\bx_i}{\hat{\btheta}}^2 - y_i \Big) \bx_i^{\top} \hat{\btheta} \big( \hat{\btheta} - \btheta^*\big)^{\top} \bx_i
\end{align}

Using the Cauchy-Schwartz inequality,
\begin{align}
    \xi(\hat{\btheta}, \btheta^*, n, k, \bmeta) &\leq \Big( \underbrace{\frac{1}{n}\sum_{i \in \hat{U} \cap C^*} \big(\inner{\bx_i}{\hat{\btheta}}^2 - y_i \big)^2}_{\xi_1(\hat{\btheta}, \btheta^*, n, k, \bmeta) } \Big)^{\frac{1}{2}} \times \Big( \underbrace{\frac{1}{n}\sum_{i \in \hat{U} \cap C^*}
    \big( \bx_i^{\top} \hat{\btheta} \big( \hat{\btheta} - \btheta^*\big)^{\top} \bx_i \big)^2}_{\xi_2(\hat{\btheta}, \btheta^*, n, k) } \Big)^{\frac{1}{2}}~.
\end{align}
For a fixed $\hat{\btheta}$, Algorithm~\ref{alg:alt min algorithm} outputs $\hat{U}$ that yields the smallest loss. This implies that
\begin{align}
    \frac{1}{n}\sum_{i \in \hat{U} } \big(\inner{\bx_i}{\hat{\btheta}}^2 - y_i \big)^2 \leq \frac{1}{n}\sum_{i \in U^*  } \big(\inner{\bx_i}{\hat{\btheta}}^2 - y_i \big)^2~.
\end{align}
By removing the measurements belonging to $\hat{U} \cap U^*$ from both $\hat{U}$ and $U^*$, we obtain 
\begin{align}
    \label{eq: upper bound on xi_1_1}
    \xi_1(\hat{\btheta}, \btheta^*, n, k, \bmeta) = \frac{1}{n}\sum_{i \in \hat{U} \cap C^*} \big(\inner{\bx_i}{\hat{\btheta}}^2 - y_i \big)^2 \leq \frac{1}{n}\sum_{i \in U^* \setminus \hat{U} } \big(\inner{\bx_i}{\hat{\btheta}}^2 - y_i \big)^2.
\end{align}
Note that the right-hand side of~\eqref{eq: upper bound on xi_1_1} contains terms that represent uncorrupted measurements. This allows us to provide an upper bound on $\xi_1(\hat{\btheta}, \btheta^*, n, k, \bmeta)$ that is independent of $\bmeta$. Similarly, $\xi_2(\hat{\btheta}, \btheta^*, n, k)$ does not involve any $y_i$ and thus remains unaffected by the corrupted measurements. Now we are ready to prove Lemma~\ref{lem: upper bound on xi}.
\lemupperbound*
\begin{proof}
We start by bounding $\xi_1(\hat{\btheta}, \btheta^*, n, k, \bmeta)$.

\subsubsection{Upper bound $\xi_1$.}
We showed in \eqref{eq: upper bound on xi_1_1} that,
\begin{align}
    \frac{1}{n} \sum_{i \in \hU \cap C^* } \big( \inner{\bx_i}{\hat{\btheta}}^2 - y_i   \big)^2 &\leq \frac{1}{n} \sum_{i \in U^* \setminus \hU} \big( \inner{\bx_i}{\hat{\btheta}}^2 - \inner{\bx_i}{\btheta^*}^2   \big)^2 \\
    &= \frac{1}{n} \sum_{i \in U^* \setminus \hU} \Big( \inner{\bx_i}{ \hat{\btheta} - \btheta^*  }^4 + 4 \inner{\bx_i}{ \hat{\btheta} - \btheta^*  }^2 \inner{\bx_i}{\btheta^*}^2  +  4 \inner{\bx_i}{ \hat{\btheta} - \btheta^* }^3 \inner{\bx_i}{\btheta^*}   \Big) \\
    &= \frac{1}{n} \sum_{i \in U^* \setminus \hU} \Big( \inner{\bx_i}{\bz}^4 \| \btheta^* - \hat{\btheta} \|^4 + 4 \inner{\bx_i}{ \bz}^2 x_{i1}^2 \| \btheta^* - \hat{\btheta} \|^2 +  4 \inner{\bx_i}{\bz}^3 x_{i1} \| \btheta^* - \hat{\btheta} \|^3  \Big)
\end{align}
where we substitute $\bz = \frac{\hat{\btheta} - \btheta^*}{\| \hat{\btheta} - \btheta^* \|}$ in the last equation. Note that $\abs{U^* \setminus \hU} \leq \epsilon n$. Using Lemma~\ref{eq: concentration for corrupted}, we get
\begin{align}
    \label{eq: bound on xi_1 1}
    \xi_1(\hat{\btheta}, \btheta^*, n, k, \bmeta) &\leq
    D_{40} \Delta \| \btheta^* - \hat{\btheta} \|^4 + D_{22} \Delta \| \btheta^* - \hat{\btheta} \|^2 + D_{31} \Delta \| \btheta^* - \hat{\btheta} \|^3~,
\end{align}
with probability at least $1 - \calO(\delta) - \newO{\frac{1}{n}}$ for some absolute constants $D_{40}, D_{22}, D_{31} > 0$. Next, we provide an upper bound on $\xi_2(\hat{\btheta}, \btheta^*, n, k)$.

\paragraph{Upper bound on $\xi_2(\hat{\btheta}, \btheta^*, n, k)$.}
By simple algebraic manipulation, we can write:
\begin{align}
    \xi_2(\hat{\btheta}, \btheta^*, n, k)
    &= \frac{1}{n} \sum_{i \in C^* \cap \hU} \Big( \inner{\bx_i}{ \hat{\btheta} - \btheta^*  }^4 +  \inner{\bx_i}{ \hat{\btheta} - \btheta^*  }^2 \inner{\bx_i}{\btheta^*}^2  +  2 \inner{\bx_i}{ \hat{\btheta} - \btheta^* }^3 \inner{\bx_i}{\btheta^*}   \Big) \\
    &= \frac{1}{n} \sum_{i \in C^* \cap \hU} \Big( \inner{\bx_i}{\bz}^4 \| \btheta^* - \hat{\btheta} \|^4 +  \inner{\bx_i}{ \bz}^2 \bx_{i1}^2 \| \btheta^* - \hat{\btheta} \|^2 +  2 \inner{\bx_i}{\bz}^3 x_{i1} \| \btheta^* - \hat{\btheta} \|^3  \Big)~,
\end{align}
where the last equation again uses $\bz = \frac{\hat{\btheta} - \btheta^*}{\| \hat{\btheta} - \btheta^* \|}$. Observe that $| C^* \cap \hU | \leq \epsilon n$. Using Lemma~\ref{eq: concentration for corrupted}, we get
\begin{align}
    \label{eq: bound on xi_2 1}
    \xi_2(\hat{\btheta}, \btheta^*, n, k) &\leq
    E_{40} \Delta \| \btheta^* - \hat{\btheta} \|^4 + E_{22} \Delta \| \btheta^* - \hat{\btheta} \|^2 + E_{31} \Delta \| \btheta^* - \hat{\btheta} \|^3~,
\end{align}
with probability at least $1 - \calO(\delta) - \newO{\frac{1}{n}}$ for some absolute constants $E_{40}, E_{22}, E_{31} > 0$.

The final upper bound on $\xi(\hat{\btheta}, \btheta^*, n, k, \bmeta)$ simply combines \eqref{eq: bound on xi_1 1} and \eqref{eq: bound on xi_2 1}.
\end{proof}

We are now ready to put everything together. Recall that,
\begin{align}
    \zeta(\hat{\btheta}, \btheta^*, n, k) \leq \gamma \| \hat{\btheta} - \btheta^* \| + \xi(\hat{\btheta}, \btheta^*, n, k, \bmeta)
\end{align}

Recall from Lemma~\ref{lem: convergence to approximate stationary point} that $\gamma = 2 \sqrt{L(\hat{\btheta}, \btheta^*, \Delta, \bmeta)} \epsilon$ and
\begin{align}
     2 L(\hat{\btheta}, \btheta^*, \Delta, \bmeta) &= (C_{40} + \Delta) \| \hat{\btheta} - \btheta^* \|^2 + (C_{31} + \Delta) \| \hat{\btheta} - \btheta^* \| + (C_{22} + \Delta) + \max_i |\eta_i| (1 + \Delta)~.
\end{align}

Substituting the lower bound on $\zeta(\hat{\btheta}, \btheta^*, n, k)$ and upper bound on $\xi(\hat{\btheta}, \btheta^*, n, k, \bmeta)$, we get:
\begin{align}
     &(1 - 3\epsilon) \Big( \big( C_{40} - \Delta \big) \| \btheta^* - \hat{\btheta} \|^3 + \big( C_{22} - \Delta \big) \| \btheta^* - \hat{\btheta} \| + \big( C_{31} - \Delta \big) \| \btheta^* - \hat{\btheta} \|^2 \Big) \\
     &\quad\quad\quad \leq   \sqrt{ 2 \big( (C_{40} + \Delta) \| \hat{\btheta} - \btheta^* \|^2 + (C_{31} + \Delta) \| \hat{\btheta} - \btheta^* \| + (C_{22} + \Delta) + \max_i |\eta_i| (1 + \Delta)  \big)} \epsilon  \\
     &\quad\quad\quad\quad + F_{40} \Delta \| \btheta^* - \hat{\btheta} \|^3 + F_{22} \Delta \| \btheta^* - \hat{\btheta} \| + F_{31} \Delta \| \btheta^* - \hat{\btheta} \|^2~,
\end{align}
where $F_{40}, F_{22}, F_{31} > 0$ are some appropriately chosen constants. To simplify this expression,  we consider two regimes.
\begin{enumerate}
    \item When $\| \btheta^* - \hat{\btheta} \| \leq 1$, for some absolute constants $H, G, I > 0$
    \begin{align}
        (1 - 3\epsilon) \big( I - \Delta \big) \| \btheta^* - \hat{\btheta} \|  &\leq \sqrt{ (H  + \max_i | \eta_i | ) (1 + \Delta) } \epsilon  + G \Delta \| \btheta^* - \hat{\btheta} \|~.
    \end{align}
    We can further simplify this to
    \begin{align}
        \| \btheta^* - \hat{\btheta} \| &\leq  \frac{ \sqrt{ (H  + \max_i | \eta_i | ) (1 + \Delta) }   }{(1 - 3\epsilon)(I - \Delta) - G\Delta} \epsilon
    \end{align}

    \item Similarly, when $\| \btheta^* - \hat{\btheta} \| \geq 1$,
    \begin{align}
        (1 - 3\epsilon) \big( C_{22} - \Delta \big) \| \btheta^* - \hat{\btheta} \|^3   &\leq \sqrt{ 2 \big( (C_{40} + \Delta)  + (C_{31} + \Delta)  + (C_{22} + \Delta) + \max_i |\eta_i| (1 + \Delta)  \big)} \| \hat{\btheta} - \btheta^* \| \epsilon  \\
        &\quad + G \Delta \| \btheta^* - \hat{\btheta} \|^3
    \end{align}

Upon further simplification and for some absolute constant $H, G, I > 0$, we get:
    \begin{align}
        &(1 - 3\epsilon) \big( I - \Delta \big) \| \btheta^* - \hat{\btheta} \|^3   \leq \sqrt{ 2 \big( (H + \max_i |\eta_i|) (1 + \Delta)  \big)} \| \hat{\btheta} - \btheta^* \| \epsilon  + G \Delta \| \btheta^* - \hat{\btheta} \|^3
    \end{align}
This leads to:
\begin{align}
    \| \btheta^* - \hat{\btheta} \| \leq \sqrt{ \frac{\sqrt{ 2 \big( (H + \max_i |\eta_i|) (1 + \Delta)  \big)} }{ (1 - 3\epsilon) (I - \Delta) - G\Delta } } \sqrt{\epsilon}~.
\end{align}
\end{enumerate}

Combining the results from both the regimes, we get
\begin{align}
    d(\hat{\btheta}, \btheta^*) \leq
    1.2  \max\Big\{\big( \psi(k, n, \bmeta) \big)^{\frac{1}{2}}, \psi(k, n, \bmeta)  \Big\} \sqrt{\epsilon}~,
\end{align}
where $\psi(k, n, \bmeta) = \frac{ \sqrt{ (H  + \max_i | \eta_i | ) (1 + \Delta) }   }{(1 - 3\epsilon)(I - \Delta) - G \Delta}$.

\section{Auxiliary Lemmas}
\label{sec: auxiliary lemmas}

In this section, we collect several auxiliary lemmas that are utilized throughout various parts of this paper.

\begin{lemma}[Concentration of the max of Gaussian random variables]
\label{lem: concentration of max of gaussians}
    Let $a_i \sim \normal(0, 1), i \in [n]$ be the $n$ i.i.d. Gaussian random variables. Define $a \coloneqq \max_{i \in [n]} a_i$. Then the following results hold:
    \begin{enumerate}
        \item The expected maximum of $a$, $\mathbb{E}[a]$ is $\Theta(\sqrt{\log n})$~\citep{kamath2015bounds}:
        \begin{align}
            \sqrt{\frac{\log n}{\pi \log 2}} \leq \E{a} \leq \sqrt{2 \log n}
        \end{align}
        \item Borell-TIS inequality: the maximum of Gaussian is well-concentrated \citep{adler1990introduction}:
        \begin{align}
            \prob{ \abs{a - \E{a}} \geq \sqrt{2 \log n} } \leq \frac{2}{n}
        \end{align}
        \item Consequently, $\abs{a} \leq \sqrt{8\log n}$ with probability at least $1 - \frac{2}{n}$.
    \end{enumerate}
\end{lemma}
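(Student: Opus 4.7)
The plan is to treat this as three self-contained sub-claims, the first two of which are classical and can be invoked from the cited references, while the third is a direct combination of the first two via the triangle inequality. Because the lemma is essentially a restatement of textbook concentration results for the maximum of i.i.d.\ standard Gaussians, the proof will be short; the only genuine work is to set up the Lipschitz structure needed to invoke Borell--TIS and to calibrate the deviation parameter so that the three claims line up.

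For part 1, I would handle the two sides separately. For the upper bound $\E{a} \le \sqrt{2 \log n}$, I would use the standard sub-Gaussian moment-generating-function argument: for any $t > 0$, Jensen's inequality gives $\exp\bigl(t\,\E{a}\bigr) \le \E{\exp(t a)} \le \sum_{i=1}^n \E{\exp(t a_i)} = n\,e^{t^2/2}$, so $\E{a} \le (\log n)/t + t/2$, and optimizing at $t = \sqrt{2 \log n}$ yields the claim. For the matching lower bound $\E{a} \ge \sqrt{(\log n)/(\pi \log 2)}$, I would cite \citet{kamath2015bounds} directly, since deriving a sharp lower bound on the expected maximum requires a dedicated argument not needed elsewhere in the paper.

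For part 2, I would invoke the Borell--TIS inequality \citep{adler1990introduction} applied to the function $g(a_1,\dots,a_n) = \max_{i \in [n]} a_i$. The key observation is that $g$ is $1$-Lipschitz with respect to the Euclidean norm on $\mathbb{R}^n$: for any $\bm{a},\bm{a}' \in \mathbb{R}^n$, $|g(\bm{a}) - g(\bm{a}')| \le \max_i |a_i - a_i'| \le \|\bm{a} - \bm{a}'\|_2$. Borell--TIS then yields $\prob{ |a - \E{a}| \ge t } \le 2 \exp(-t^2/2)$ for all $t \ge 0$. Choosing $t = \sqrt{2 \log n}$ and simplifying $2 e^{-\log n} = 2/n$ gives the stated bound.

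For part 3, I would simply chain parts 1 and 2 using the triangle inequality: on the event where $|a - \E{a}| \le \sqrt{2 \log n}$ (which has probability at least $1 - 2/n$), we have
\begin{equation*}
|a| \le |a - \E{a}| + |\E{a}| \le \sqrt{2 \log n} + \sqrt{2 \log n} = 2\sqrt{2\log n} = \sqrt{8 \log n},
\end{equation*}
using the upper bound from part 1 for $|\E{a}|$. I do not anticipate any real obstacle: the only mildly delicate point is verifying the Lipschitz constant of the max before applying Borell--TIS, and being careful that the constants on the two $\sqrt{2\log n}$ terms combine correctly to give $\sqrt{8 \log n}$ rather than a looser bound.
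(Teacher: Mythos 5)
Your proposal is correct and matches the paper's treatment: the paper gives no separate proof of this lemma, simply invoking the classical bounds from \citet{kamath2015bounds} and the Borell--TIS inequality from \citet{adler1990introduction}, and your part 3 chaining is exactly the intended deduction. Your added details (the MGF/Jensen upper bound, the $1$-Lipschitz verification of the max, and noting $\E{a}\ge 0$ so $\abs{\E{a}}\le\sqrt{2\log n}$) are accurate and fill in what the citations leave implicit.
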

The results of Lemma~\ref{lem: concentration of max of gaussians} naturally lead to several corollaries that will be utilized extensively throughout this work.

\begin{corollary}
    \label{cor: max of chi-squared random variables}
    Let $a_i \sim \normal(0, 1), i \in [n]$ be the $n$ i.i.d. Gaussian random variables. Define $b_i \coloneqq a_i^2$, and $b \coloneqq \max_{i \in [n]} b_i$. Then,
    \begin{align}
        \prob{ b \geq 8 \log n }\leq \frac{2}{n}~.
    \end{align}
\end{corollary}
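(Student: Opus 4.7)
The plan is to reduce this directly to Lemma~\ref{lem: concentration of max of gaussians}. Observe that $b_i = a_i^2$ is a monotonically increasing function of $|a_i|$, so
\[
\{b \geq 8 \log n\} = \Big\{\max_{i \in [n]} |a_i| \geq \sqrt{8 \log n}\,\Big\}.
\]
Thus it suffices to bound the probability that $\max_{i} |a_i| \geq \sqrt{8 \log n}$.

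The key point is that Lemma~\ref{lem: concentration of max of gaussians} controls $|\max_i a_i|$ rather than $\max_i |a_i|$, so some small care is needed. I would handle this by applying the lemma to the function $a \coloneqq \max_{i \in [n]} a_i$ together with the expectation bound $\mathbb{E}[a] \leq \sqrt{2 \log n}$ (part~1) and the Borell-TIS concentration inequality (part~2), which together yield
\[
\max_{i \in [n]} a_i \;\leq\; \mathbb{E}[a] + \sqrt{2 \log n} \;\leq\; 2\sqrt{2 \log n} \;=\; \sqrt{8 \log n}
\]
with probability at least $1 - 2/n$, as recorded in part~3. Since $(-a_1, \ldots, -a_n)$ has exactly the same joint distribution as $(a_1, \ldots, a_n)$, the same bound applies to $\max_i (-a_i)$. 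Combining the two using a symmetry argument (or, equivalently, by noting that $\bm{a} \mapsto \max_i |a_i|$ is $1$-Lipschitz in $\bm{a}$ and has expectation of the same order, allowing a single direct application of Borell-TIS), we obtain
\[
\max_{i \in [n]} |a_i| \;\leq\; \sqrt{8 \log n}
\]
with probability at least $1 - 2/n$.

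Squaring both sides of this inequality gives $b = \max_i a_i^2 \leq 8 \log n$, and the corollary follows. The only subtle step is the passage from $|\max_i a_i|$ to $\max_i |a_i|$; everything else is a direct invocation of the preceding lemma. There is no real obstacle here beyond being careful with absolute values versus signed maxima, so I expect this to be a short proof.
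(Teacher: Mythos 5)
Your proposal is correct and takes essentially the same route as the paper: the paper's proof is just the identity $\max_{i} a_i^2 = \big(\max_{i} |a_i|\big)^2$ followed by an appeal to Lemma~\ref{lem: concentration of max of gaussians}, which is exactly your reduction. The only difference is that you explicitly patch the $\max_i |a_i|$ versus $|\max_i a_i|$ mismatch that the paper glosses over; just note that to land exactly on the constant $2/n$ the symmetry step should use the one-sided Gaussian tail ($e^{-t^2/2} = 1/n$ per side), since naively unioning two copies of the two-sided bound from part~2 of the lemma would only give $4/n$.
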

\begin{proof}
    The result is a direct consequence of Lemma~\ref{lem: concentration of max of gaussians}.
    \begin{align}
        \prob{ b \geq 8 \log n } &= \prob{ \max_{i \in [n]} a_i^2 \geq 8 \log n } = \prob{ \big(\max_{i \in [n]} \abs{a_i} \big)^2 \geq 8 \log n }\\
        &= \prob{ \max_{i \in [n]} \abs{a_i}  \geq \sqrt{8 \log n} } \leq \frac{2}{n}.
    \end{align}
\end{proof}

\begin{corollary}
    \label{cor: max of <x_i, z>^p x_i1^q}
    Let $p, q \geq 0$ such that $p + q = 4$. Consider Gaussian random vectors $\bx_i \in \real^d, i \in [n]$ such that $x_{ij} \mathop{\sim}\limits_{iid} \normal(0, 1), \, \forall i \in [n],\, j \in [d]$, and a fixed $\bz \in \real^d$ such that $\| \bz \| = 1$. Then,
    \begin{align}
        \max_{i \in [n]} \abs{ \inner{\bx_i}{\bz}^p x_{ij}^q } \leq 64 \log^2 n
    \end{align}
    with probability at least $1 - \calO(\frac{1}{n})$.
\end{corollary}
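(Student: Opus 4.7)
The plan is to reduce the statement to two applications of the max-Gaussian tail bound from Lemma~\ref{lem: concentration of max of gaussians} followed by a simple product bound. The first observation is that the two factors appearing in $\inner{\bx_i}{\bz}^p x_{ij}^q$ are each marginally standard Gaussian: since $\bz \in \real^d$ is fixed with $\|\bz\|=1$ and $\bx_i$ has i.i.d.\ $\normal(0,1)$ entries, rotational invariance gives $\inner{\bx_i}{\bz} \sim \normal(0,1)$, and by construction $x_{ij} \sim \normal(0,1)$. (Note the two are not independent in general, but we will only need marginal control.)

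Next, I would apply part~3 of Lemma~\ref{lem: concentration of max of gaussians} separately to the two sequences $\{\inner{\bx_i}{\bz}\}_{i=1}^n$ and $\{x_{ij}\}_{i=1}^n$. Each sequence consists of $n$ standard Gaussians, so each maximum in absolute value is bounded by $\sqrt{8\log n}$ with probability at least $1 - 2/n$. A union bound then yields
\begin{align}
\max_{i\in[n]} \abs{\inner{\bx_i}{\bz}} \leq \sqrt{8\log n}, \qquad \max_{i\in[n]} \abs{x_{ij}} \leq \sqrt{8\log n},
\end{align}
simultaneously with probability at least $1 - 4/n = 1 - \calO(1/n)$.

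Finally, on this high-probability event, for every $i \in [n]$ we have the pointwise product bound
\begin{align}
\abs{\inner{\bx_i}{\bz}^p x_{ij}^q} \leq \big(\sqrt{8\log n}\big)^p \big(\sqrt{8\log n}\big)^q = \big(\sqrt{8\log n}\big)^{p+q} = (8\log n)^2 = 64 \log^2 n,
\end{align}
using $p+q = 4$ and $p,q \geq 0$. Taking the maximum over $i$ preserves the bound, which proves the claim.

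There is essentially no obstacle here — the argument is a straightforward corollary of Lemma~\ref{lem: concentration of max of gaussians}. The only mild subtlety is that $\inner{\bx_i}{\bz}$ and $x_{ij}$ are dependent random variables (they share the vector $\bx_i$), but since we control each maximum separately and combine via a deterministic product inequality on the intersection event, no independence is needed.
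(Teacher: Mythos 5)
Your proof is correct and follows essentially the same route as the paper's (very terse) argument: note that $\inner{\bx_i}{\bz}$ and $x_{ij}$ are each marginally standard Gaussian, bound both maxima by $\sqrt{8\log n}$ via Lemma~\ref{lem: concentration of max of gaussians}, and multiply using $p+q=4$. Your explicit union bound and remark about the dependence between the two factors simply make precise what the paper leaves implicit.
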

\begin{proof}
    The result follows by noting that $\inner{\bx_i}{\bz} \sim \normal(0, 1)$ and $\abs{ \inner{\bx_i}{\bz}^p x_{ij}^q } = \abs{ \inner{\bx_i}{\bz}}^p \abs{ x_{ij} }^q$.
\end{proof}

\begin{corollary}
    \label{cor: conditional on max}
    Consider the following event defined using the notations from Corollary~\ref{cor: max of <x_i, z>^p x_i1^q}:
    \begin{align}
        \mathfrak{A} \coloneqq \Big\{ \max_{i \in [n]} \abs{\inner{\bx_i}{\bz}^p x_{ij}^q}  \leq 64 \log^2 n \Big\}~.
    \end{align}
    Then for any event $\mathfrak{B}$,
    \begin{align}
        \prob{\mathfrak{B}} \leq \prob{ \mathfrak{B} \given \mathfrak{A} }  + \calO\left(\frac{1}{n}\right)~.
    \end{align}
\end{corollary}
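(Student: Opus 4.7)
The plan is to apply the law of total probability with respect to the event $\mathfrak{A}$ and its complement, and then invoke Corollary~\ref{cor: max of <x_i, z>^p x_i1^q} to control $\prob{\mathfrak{A}^c}$. Concretely, I would start by writing
\begin{align}
    \prob{\mathfrak{B}} = \prob{\mathfrak{B} \cap \mathfrak{A}} + \prob{\mathfrak{B} \cap \mathfrak{A}^c} \leq \prob{\mathfrak{B} \cap \mathfrak{A}} + \prob{\mathfrak{A}^c}~,
\end{align}
where the inequality uses the trivial monotonicity $\prob{\mathfrak{B} \cap \mathfrak{A}^c} \leq \prob{\mathfrak{A}^c}$.

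Next, I would bound $\prob{\mathfrak{B} \cap \mathfrak{A}} = \prob{\mathfrak{B} \given \mathfrak{A}} \prob{\mathfrak{A}} \leq \prob{\mathfrak{B} \given \mathfrak{A}}$ (if $\prob{\mathfrak{A}}=0$ the statement is vacuous since $\prob{\mathfrak{A}^c} = 1$ dominates). Finally, Corollary~\ref{cor: max of <x_i, z>^p x_i1^q} gives $\prob{\mathfrak{A}} \geq 1 - \calO(1/n)$, hence $\prob{\mathfrak{A}^c} \leq \calO(1/n)$. Combining these yields the claim $\prob{\mathfrak{B}} \leq \prob{\mathfrak{B} \given \mathfrak{A}} + \calO(1/n)$. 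There is no real obstacle here; the statement is a boilerplate conditioning inequality whose sole content is the high-probability bound on $\mathfrak{A}$ supplied by the preceding corollary, and no additional concentration or geometric argument is needed.
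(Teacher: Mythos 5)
Your proposal is correct and follows essentially the same route as the paper: decompose $\prob{\mathfrak{B}}$ via the law of total probability over $\mathfrak{A}$ and $\neg\mathfrak{A}$, bound the conditional term trivially, and control $\prob{\neg\mathfrak{A}} = \calO(1/n)$ using Corollary~\ref{cor: max of <x_i, z>^p x_i1^q}. No gaps.
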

\begin{proof}
    Note that
    \begin{align}
        \prob{\mathfrak{B}} &= \prob{\mathfrak{B} \given \mathfrak{A}} \,  \prob{\mathfrak{A}} + \prob{\mathfrak{B} \given \neg \mathfrak{A} }\,  \prob{ \neg \mathfrak{A}} \\
        &\leq \prob{ \mathfrak{B} \given \mathfrak{A} }  + \calO\left(\frac{1}{n}\right)
    \end{align}
\end{proof}

\begin{lemma}
    \label{lem: hoeffding for <x_i, z>^p x_i1^q}
    Let $p, q \geq 0$ such that $p + q = 4$. Consider Gaussian random vectors $\bx_i \in \real^d, i \in [n]$ such that $x_{ij} \mathop{\sim}\limits_{iid} \normal(0, 1), \, \forall i \in [n],\, j \in [d]$. Then, $\forall \bz \in \real^d$ such that $\| \bz \| = 1$ and for any $t > 0$,
    \begin{align}
        \prob{ \abs{\frac{1}{n} \sum_{i=1}^n \inner{\bx_i}{\bz}^p x_{i1}^q - \E{ \frac{1}{n} \sum_{i=1}^n \inner{\bx_i}{\bz}^p x_{i1}^q } } \geq t } \leq 2 \exp\Big( - \frac{nt^2}{ C \log^4 n} + D d \Big) + \calO\Big(\frac{1}{n}\Big)~,
    \end{align}
    for a sufficiently large absolute constants $C, D > 0$.
\end{lemma}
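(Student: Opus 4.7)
The plan is to combine a truncation argument with Hoeffding's inequality, then upgrade to a uniform bound over the unit sphere via a standard $\delta$-net argument, which is what produces the $\exp(Dd)$ factor in the stated bound.

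First I would truncate. Define the good event $\mathfrak{A} = \{\max_{i \in [n]} \abs{\inner{\bx_i}{\bz}^p x_{i1}^q} \leq 64 \log^2 n\}$ and let $\tilde Z_i = \inner{\bx_i}{\bz}^p x_{i1}^q \cdot \indicator[\abs{\inner{\bx_i}{\bz}^p x_{i1}^q} \leq 64 \log^2 n]$. By Corollary~\ref{cor: max of <x_i, z>^p x_i1^q} together with Corollary~\ref{cor: conditional on max}, $\prob{\neg \mathfrak{A}} = \calO(1/n)$, and on $\mathfrak{A}$ the original sum coincides with the truncated sum $\tfrac{1}{n}\sum_i \tilde Z_i$. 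The truncation bias $\abs{\E{\tilde Z_i} - \E{\inner{\bx_i}{\bz}^p x_{i1}^q}}$ is controlled by Gaussian moment tails: since the truncated-off tails of a degree-$4$ polynomial in standard Gaussians decay faster than any inverse polynomial in $n$, this bias is negligible relative to any $t$ of interest.

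Second I would apply Hoeffding's inequality to the independent, bounded random variables $\tilde Z_1,\dots,\tilde Z_n$, each of which lies in an interval of length at most $128 \log^2 n$. For each fixed unit $\bz$ this yields
\begin{align*}
\prob{ \abs{ \tfrac{1}{n}\sum_{i=1}^n \tilde Z_i - \E{\tilde Z_i} } \geq t/2 } \leq 2 \exp\Big( -\tfrac{n t^2}{C'\log^4 n} \Big),
\end{align*}
for some absolute $C' > 0$. Combining this with the bound on $\prob{\neg \mathfrak{A}}$ and absorbing the truncation bias produces a fixed-$\bz$ concentration of the desired shape (without the $Dd$ term).

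Third, to obtain the $\exp(Dd)$ factor I would cover the unit sphere $S^{d-1}$ by a $\delta$-net $\mathcal{N}$ of cardinality at most $(1+2/\delta)^d \leq e^{Dd}$ with $D = \log(1+2/\delta)$, and take a union bound of the pointwise estimate over all $\bz_0 \in \mathcal{N}$. For any unit $\bz$ I lift to the nearest $\bz_0 \in \mathcal{N}$; the discretization error $\abs{\inner{\bx_i}{\bz}^p x_{i1}^q - \inner{\bx_i}{\bz_0}^p x_{i1}^q}$ is bounded via the mean value theorem by $p \,\delta\, \|\bx_i\|\, (\max\{\abs{\inner{\bx_i}{\bz}}, \abs{\inner{\bx_i}{\bz_0}}\})^{p-1} \abs{x_{i1}}^q$. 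On $\mathfrak{A}$, together with a standard $\|\bx_i\|^2 = O(d)$ control that holds with probability at least $1 - e^{-\Omega(d)}$, this is a polylog-in-$n$ factor times $\delta$, so picking $\delta = 1/n$ makes the error negligible and changes $D$ only by a fixed factor.

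The main obstacle is keeping the Hoeffding range at $\log^2 n$ rather than, say, $\sqrt{d}\,\mathrm{polylog}(n)$, so that the exponent is $nt^2/(C\log^4 n)$ and not $nt^2/(Cd\log^2 n)$: this forces truncation precisely at the level $64\log^2 n$ supplied by Corollary~\ref{cor: max of <x_i, z>^p x_i1^q}, and requires a careful check that the Lipschitz control used in the net-lift does not drag a $\sqrt{d}$ factor back into the effective range. Because polynomials in standard Gaussians have super-polynomially decaying tails, the remaining bookkeeping is routine once the truncation level and the net granularity are synchronized.
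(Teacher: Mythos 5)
Your overall route is the same as the paper's: bound the summands at level $64\log^2 n$ via Corollaries~\ref{cor: max of <x_i, z>^p x_i1^q} and~\ref{cor: conditional on max} (the paper conditions on $\mathfrak{A}$ where you truncate, which is a harmless and in fact cleaner variant since you also control the truncation bias, something the paper leaves implicit), apply Hoeffding to the resulting bounded independent variables to get the $\exp(-nt^2/(C\log^4 n))$ factor, and then pass to all unit $\bz$ by a net-plus-union-bound argument that produces the $\exp(Dd)$ factor.

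There is, however, a concrete gap in your third step. You take a $\delta$-net with $\delta = 1/n$ and claim this "changes $D$ only by a fixed factor"; but the cardinality $(1+2/\delta)^d$ then equals $(1+2n)^d$, so the union-bound cost in the exponent is $d\log(2n+1)$, i.e.\ $D = \Theta(\log n)$ rather than an absolute constant. As written your argument therefore proves only $2\exp\big(-nt^2/(C\log^4 n) + D\,d\log n\big) + \calO(1/n)$, which is weaker than the lemma's statement with absolute constants $C,D$ (though it would still suffice, up to polylog factors, for the sample-complexity bound in Lemma~\ref{lem: concentration for uncorrupted}). The obstruction is structural: your additive mean-value-theorem control of the discretization error scales like $\delta\sqrt{d}\,\mathrm{polylog}(n)$ per term, so it forces $\delta$ to shrink with $n$ (or $\sqrt d$), and any such choice drags $\log(1/\delta)$ into $D$. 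To get an absolute $D$ one should instead use a constant-radius net and a multiplicative (absorption) argument: the centered quantity is a homogeneous polynomial of degree $p\le 4$ in $\bz$, so its supremum over the sphere is bounded by a constant multiple of its maximum over a fixed-radius net, with no additive error to tune. Relatedly, be aware that the truncation event $\mathfrak{A}$ is $\bz$-dependent, so when you lift from the true $\bz$ to a net point you must not pay the $\calO(1/n)$ failure probability once per net point (that would cost $e^{Dd}/n$); the paper is silent on this as well, but a careful write-up should incur that cost only once, at the data-dependent $\bz$, and handle the net points through the bounded (truncated) variables alone.
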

\begin{proof}
    Consider a fixed $\bz \in \real^d$ such that $\| \bz \| = 1$. Let $\mathfrak{A}$ be the event defined in Corollary~\ref{cor: conditional on max}. Then using the result from Corollary~\ref{cor: conditional on max},
    \begin{align}
        \label{eq:cor 3 extension}
        \prob{ \abs{\frac{1}{n} \sum_{i=1}^n \inner{\bx_i}{\bz}^p x_{i1}^q - \E{ \frac{1}{n} \sum_{i=1}^n \inner{\bx_i}{\bz}^p x_{i1}^q } } \geq t } \leq \prob{\abs{\frac{1}{n} \sum_{i=1}^n \inner{\bx_i}{\bz}^p x_{i1}^q - \E{ \frac{1}{n} \sum_{i=1}^n \inner{\bx_i}{\bz}^p x_{i1}^q } } \geq t \given \mathfrak{A}} + \calO\left(\frac{1}{n}\right)~.
    \end{align}
    Using Hoeffding inequality~\citep{hoeffding1994probability} for the bounded-random variables:
    \begin{align}
    \label{eq: bounded hoeffding}
        \prob{\abs{\frac{1}{n} \sum_{i=1}^n \inner{\bx_i}{\bz}^p x_{i1}^q - \E{ \frac{1}{n} \sum_{i=1}^n \inner{\bx_i}{\bz}^p x_{i1}^q } } \geq t \given \mathfrak{A}} \leq 2 \exp\Big( - \frac{nt^2}{ C \log^4 n} \Big)~,
    \end{align}
    where $C > 0$ is an absolute constant. Equation~\eqref{eq: bounded hoeffding} holds for a fixed $\bz$. We can extend these to hold for any $ \bz \in \real^d$ such that $\| \bz \| = 1$ by using an $\varepsilon$-net argument and using a union bound across $\calO(2^d)$ points in the net. Therefore,  $\forall \bz \in \real^d$ such that $\| \bz \| = 1$ and for any $t > 0$:
    \begin{align}
    \label{eq: bounded hoeffding1}
        \prob{\abs{\frac{1}{n} \sum_{i=1}^n \inner{\bx_i}{\bz}^p x_{i1}^q - \E{ \frac{1}{n} \sum_{i=1}^n \inner{\bx_i}{\bz}^p x_{i1}^q } } \geq t \given \mathfrak{A}} \leq 2 \exp\Big( - \frac{nt^2}{ C \log^4 n}  + D d \Big)~,
    \end{align}
    where $D > 0$ is a sufficiently large constant. We complete the proof by combining the results from \eqref{eq:cor 3 extension} and \eqref{eq: bounded hoeffding1}.
\end{proof}

\begin{lemma}
\label{lem: bound on expectations}
    Let $p , q \geq 0$ such that $p + q = 4$ and $p \in \{2, 3, 4\}$. Consider Gaussian random vectors $\bx_i \in \real^d, i \in [n]$ such that $x_{ij} \mathop{\sim}\limits_{iid} \normal(0, 1), \, \forall i \in [n],\, j \in [d]$. Then, $\forall \bz \in \real^d$ such that $\| \bz \| = 1$,
    \begin{align}
        \E{ \frac{1}{n} \sum_{i=1}^n \inner{\bx_i}{\bz}^p x_{i1}^q } = C_{pq},
    \end{align}
    where $C_{pq} > 0$ is an absolute constant.
\end{lemma}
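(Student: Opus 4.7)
My plan is to invoke linearity of expectation and the i.i.d.\ structure of $\bx_1,\dots,\bx_n$ to reduce the claim to computing $\mu_{p,q}(\bz) \coloneqq \E{\inner{\bx}{\bz}^p x_1^q}$ for a single $\bx \sim \normal(\bm{0},I_d)$, and then to show that this quantity is bounded uniformly over $\|\bz\|=1$. The natural tool is the orthogonal decomposition $\bz = z_1 e_1 + \bz_\perp$, which gives $Y \coloneqq \inner{\bx}{\bz} = z_1 x_1 + v$ with $v \coloneqq \inner{\bx}{\bz_\perp}$ independent of $x_1$ and distributed as $\normal(0, 1 - z_1^2)$. Expanding $Y^p$ by the binomial theorem and using independence,
\begin{align*}
\mu_{p,q}(\bz) = \sum_{k=0}^{p} \binom{p}{k} z_1^k \, \E{x_1^{k+q}} \, \E{v^{p-k}},
\end{align*}
and only the terms with $k+q$ and $p-k$ both even survive; the surviving Gaussian moments are $\E{x_1^{2j}} = (2j-1)!!$ and $\E{v^{2j}} = (1-z_1^2)^{j}(2j-1)!!$, both of which are uniformly bounded for $|z_1| \le 1$.

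Then I would evaluate the three relevant cases $(p,q) \in \{(4,0),(3,1),(2,2)\}$, or equivalently apply Isserlis' theorem to the jointly Gaussian pair $(x_1, Y)$ whose covariance has unit diagonal and off-diagonal entry $z_1$. Enumerating the Gaussian pairings yields $\mu_{4,0}(\bz) = 3$, $\mu_{3,1}(\bz) = 3 z_1$, and $\mu_{2,2}(\bz) = 1 + 2 z_1^2$. Each is a polynomial in $z_1$ of degree at most two, and since $|z_1| \le \|\bz\| = 1$, each is bounded by an absolute constant depending only on $(p,q)$; taking $C_{p,q}$ to be this uniform bound (or its absolute value, in the $(3,1)$ case) completes the argument.

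The main subtlety -- and really the only obstacle -- is not in the computation itself but in interpreting the statement: for $(p,q) \in \{(3,1),(2,2)\}$, the mean $\mu_{p,q}(\bz)$ genuinely depends on $z_1$, so the literal reading ``$\mu_{p,q}(\bz) = C_{p,q}$'' cannot hold for every unit $\bz$. The correct reading is that $\mu_{p,q}(\bz)$ is uniformly bounded over $\{\bz : \|\bz\| = 1\}$ by an absolute constant $C_{p,q}$, which is all that is needed downstream: $C_{p,q}$ appears only as a centering value in the Hoeffding-type concentration of Lemma~\ref{lem: hoeffding for <x_i, z>^p x_i1^q} and as a coefficient in the spectral-norm bounds used in Lemma~\ref{lem: spectral property of Hessian} and Lemma~\ref{lem: lower bound on term 1}. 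Once this interpretation is fixed, the remainder of the proof is a routine Gaussian moment calculation, short enough that the decomposition-plus-binomial argument above essentially constitutes a complete proof.
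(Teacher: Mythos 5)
Your computation is correct and, in substance, it is the same argument the paper has in mind: the paper's own ``proof'' is a one-line appeal to ``straightforward verification using bounded moments of Gaussian random variables,'' so your decomposition $\bz = z_1 e_1 + \bz_\perp$ plus Isserlis/binomial expansion simply carries out what the paper leaves implicit, and your values $\E{\inner{\bx}{\bz}^4} = 3$, $\E{\inner{\bx}{\bz}^3 x_{1}} = 3z_1$, $\E{\inner{\bx}{\bz}^2 x_{1}^2} = 1 + 2z_1^2$ are right. Your observation that the lemma as literally stated is only true for $(p,q)=(4,0)$ is also apt: for $(3,1)$ and $(2,2)$ the expectation depends on $z_1$, and for $(3,1)$ it can vanish or be negative, so ``absolute constant $C_{pq}>0$'' is an overstatement in the paper. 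The one caveat is your claim that a uniform bound ``is all that is needed downstream'': in Lemma~\ref{lem: concentration for uncorrupted} the quantity $C_{pq}$ is used as the exact centering value, and in Lemma~\ref{lem: lower bound on term 1} the term $(C_{31}-\Delta)$ is implicitly taken positive, so a mere uniform bound does not literally suffice there; one must either carry the $\bz$-dependent expectation $\mu_{p,q}(\bz)$ through those lemmas (it is uniformly bounded, so the concentration argument survives) or bound the three monomials jointly rather than term by term. That is a defect of the paper's bookkeeping rather than a gap in your moment calculation, but if you adopt the ``uniform bound'' reading you should flag how the downstream statements are to be repaired.
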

\begin{proof}
    The result follows from a straightforward verification using bounded moments of Gaussian random variables.
\end{proof}

Next part of our analysis establishes concentration results for sets of covariates $\bx_i$'s of sizes $(1 - \epsilon) n$ and $\epsilon n$. The core idea of our approach is inspired by the methodology outlined in the work of \citet{jambulapati2020robust}.

\begin{lemma}
    \label{lem: concentration for uncorrupted}
    Consider Gaussian random vectors $\bx_i \in \real^d$ for $i \in [n]$, where each $x_{ij} \mathop{\sim}\limits_{iid} \normal(0, 1)$ for all $i \in [n]$ and $j \in [d]$. Let $p, q \geq 0$ such that $p + q = 4$ with $p \in \{2, 3, 4\}$. For any unit vector $\bz \in \real^d$ (i.e., $| \bz | = 1$), and for any $0 < \epsilon < \frac{1}{2}$, $\delta > 0$, and subset $S \subseteq [n]$ with $\abs{S} = (1 - \epsilon) n$, provided that $n = \Omega\left( \frac{d + \log \frac{1}{\delta}}{\epsilon^2 \log \frac{1}{\epsilon}} \right)$, the following result holds:
    \begin{align}
        \label{eq: concentration for uncorrupted}
        \prob{ \abs{\frac{1}{(1 - \epsilon)n} \sum_{i \in S}  \inner{\bx_i}{\bz}^p x_{i1}^q - C_{pq} } \geq \epsilon \sqrt{\log \frac{1}{\epsilon}} \log^2(\epsilon n)  } \leq \calO\left(\delta\right) + \calO\left(\frac{1}{n}\right)~.
    \end{align}
\end{lemma}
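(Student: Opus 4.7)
The plan is to reduce to the already-established concentration for a fixed direction $\bz$ (Lemma~\ref{lem: hoeffding for <x_i, z>^p x_i1^q}) by splitting off the ``subset contribution'' explicitly and controlling it uniformly with a truncation-plus-union-bound argument. Writing $X_i \coloneqq \inner{\bx_i}{\bz}^p x_{i1}^q$, I would start from the identity
\begin{align}
\frac{1}{(1-\epsilon)n}\sum_{i \in S} X_i - C_{pq} \;=\; \frac{1}{(1-\epsilon)n}\sum_{i=1}^n (X_i - C_{pq}) \;-\; \frac{1}{(1-\epsilon)n}\sum_{i \in S^c} (X_i - C_{pq}) \;+\; \frac{\epsilon}{1-\epsilon}\,C_{pq}.
\end{align}
The last term is $\calO(\epsilon)$ and is absorbed into the target deviation $\epsilon\sqrt{\log \epsilon^{-1}}\log^2(\epsilon n)$; the first term concentrates to $0$ on an event of probability $1 - \calO(\delta) - \calO(1/n)$ via Lemma~\ref{lem: hoeffding for <x_i, z>^p x_i1^q} (with $\E{X_i}=C_{pq}$ guaranteed by Lemma~\ref{lem: bound on expectations}). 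The only nontrivial piece is the subset sum over $S^c$, which has size exactly $\epsilon n$ and which, for the purpose of the algorithmic application in Theorem~\ref{thm:main result}, must be controlled uniformly over $S$.

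For the subset sum, I would truncate using the event $\mathfrak{A} \coloneqq \{\max_{i \in [n]} \abs{X_i} \le M\}$ supplied by Corollary~\ref{cor: max of <x_i, z>^p x_i1^q}, and then combine Hoeffding with a double union bound. Condition on $\mathfrak{A}$; for fixed $S^c$ and fixed $\bz$ in an $\eta$-net $\mathcal{N}$ of the unit sphere of $\real^d$, Hoeffding gives
\begin{align}
\prob{\abs{\frac{1}{\epsilon n}\sum_{i \in S^c}(X_i-C_{pq})} \ge t \given \mathfrak{A}} \;\le\; 2\exp\!\left(-\frac{c\,\epsilon n\, t^2}{M^2}\right).
\end{align}
Union-bounding over $\binom{n}{\epsilon n}\le (e/\epsilon)^{\epsilon n}$ choices of $S^c$ and over $\abs{\mathcal{N}}\le (3/\eta)^d$ net points, and then extending to all unit $\bz$ via the obvious Lipschitz property of $\bz\mapsto\inner{\bx_i}{\bz}^p$ on $\mathfrak{A}$, produces a total failure probability of the form $2\exp\bigl(-c\epsilon n t^2/M^2 + \epsilon n\log(e/\epsilon) + d\log(3/\eta)\bigr)$. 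Setting $t$ of order $M\sqrt{\log \epsilon^{-1}}$ and invoking the hypothesis $n=\Omega\bigl(\frac{d+\log(1/\delta)}{\epsilon^2\log(1/\epsilon)}\bigr)$ drives this bound below $\delta$; after renormalizing by $n$ rather than $\epsilon n$, the subset-sum deviation is at most $\epsilon M\sqrt{\log \epsilon^{-1}}$.

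The main obstacle is to replace the coarse truncation level $M=\calO(\log^2 n)$ (which Corollary~\ref{cor: max of <x_i, z>^p x_i1^q} directly provides) by the sharper $M=\calO(\log^2(\epsilon n))$ demanded by the target bound. For this I would carry out a dyadic peeling of the order statistics of $\{\abs{X_i}\}_{i=1}^n$: on the bands $B_\ell = \{i: 2^{\ell-1} < \abs{X_i} \le 2^\ell\}$, the sub-Weibull tail of the quartic polynomial $X_i$ yields $\abs{B_\ell} \lesssim n\exp(-c\,2^{\ell/2})$ with high probability, so only levels up to $\ell^\star$ with $2^{\ell^\star/2}\lesssim \log(\epsilon n)$ can contribute to a subset of size $\epsilon n$. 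Summing the contributions level by level, the top $\epsilon n$ order statistics of $\abs{X_i}$ are collectively bounded by $\calO(\epsilon n\log^2(\epsilon n))$, which plays the role of the effective truncation inside Hoeffding and completes the passage from $\log^2 n$ to $\log^2(\epsilon n)$. Assembling the three pieces yields the stated deviation bound with probability $1 - \calO(\delta) - \calO(1/n)$.
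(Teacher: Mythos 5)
Your proposal follows the same route as the paper's proof in all essentials: split the average over $S$ into the full-sample average minus the average over the complement of size $\epsilon n$, control the full-sample term by the fixed-$\bz$ Hoeffding bound of Lemma~\ref{lem: hoeffding for <x_i, z>^p x_i1^q} (with $\E{X_i}=C_{pq}$ from Lemma~\ref{lem: bound on expectations}), and control the complement term by conditioning on the truncation event $\mathfrak{A}$, applying Hoeffding for a fixed subset and a fixed net point, union bounding over the $\binom{n}{\epsilon n}$ subsets (cost $\epsilon n\log\frac{1}{\epsilon}$) and over an $\varepsilon$-net of the sphere (cost $O(d)$), and then invoking $n=\Omega\big(\frac{d+\log(1/\delta)}{\epsilon^2\log(1/\epsilon)}\big)$. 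Two remarks. First, a harmless slip: your opening identity is exact \emph{without} the extra $\frac{\epsilon}{1-\epsilon}C_{pq}$ term (since $|S|=(1-\epsilon)n$, centering each term by $C_{pq}$ already accounts for it), so nothing needs to be absorbed; the paper's uncentered version of the same decomposition makes this explicit. Second, your dyadic-peeling step is where you genuinely depart from the paper: the paper simply conditions on $\max_i|X_i|\le 64\log^2 n$ and then writes the conditional Hoeffding exponent with a $\log^4(\epsilon n)$ denominator, i.e.\ it implicitly uses the sharper truncation you are trying to justify, so your instinct that this needs an argument is sound and is a point in your favor. However, as sketched the peeling claim is imprecise: for sub-Weibull$(1/2)$ variables the $(\epsilon n)$-th order statistic sits at level $\asymp\log^2(1/\epsilon)$ (with the very top few reaching $\asymp\log^2 n$), so the collective bound on the top $\epsilon n$ values is $O\big(\epsilon n\log^2(1/\epsilon)+\log^2 n\big)$, which matches your claimed $O(\epsilon n\log^2(\epsilon n))$ only after invoking the sample-size hypothesis, which forces $\epsilon n\gtrsim 1/(\epsilon\log\frac{1}{\epsilon})$ and hence $\log(\epsilon n)\gtrsim\log(1/\epsilon)$; you should say this explicitly. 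Moreover, a bound on the \emph{sum} of the top order statistics is not a per-term truncation level that can be fed into Hoeffding; either use it directly as a deterministic bound on the worst subset of size $\epsilon n$ (and then handle uniformity in $\bz$ separately, since the order statistics depend on $\bz$), or run a Bernstein/Hoeffding bound band by band. With those repairs your argument is correct and in fact somewhat more careful than the paper's treatment of the complement term.
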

\begin{proof}
    For any fixed $S \subseteq [n]$ such that $\abs{ S } = (1 - \epsilon) n$,
    \begin{align}
        \label{eq: decompose for uncorrupted}
        \frac{1}{(1 - \epsilon) n} \sum_{i\in S}  \inner{\bx_i}{\bz}^p x_{i1}^q = \frac{1}{1 - \epsilon} \Big( \frac{1}{n} \sum_{i=1}^n \inner{\bx_i}{\bz}^p \bx_{i1}^q \Big) -  \frac{\epsilon}{1 - \epsilon} \Big( \frac{1}{\epsilon n} \sum_{i \in [n] \setminus S} \inner{\bx_i}{\bz}^p x_{i1}^q \Big)
    \end{align}
Following the result from Lemma~\ref{lem: hoeffding for <x_i, z>^p x_i1^q},
\begin{align}
    \label{eq: concentration all}
    \prob{ \abs{ \frac{1}{n} \sum_{i=1}^n \inner{\bx_i}{\bz}^p x_{i1}^q - C_{pq} } \geq t  } \leq 2 \exp\left( \frac{-nt^2}{D_1 \log^4 n} + D_2 d \right) + \newO{\frac{1}{n}} ~,
    \end{align}
    for some absolute constants $D_1, D_2 > 0$. We take $t = \frac{1 - \epsilon}{2} \epsilon \sqrt{\log \frac{1}{\epsilon}} \log^2(\epsilon n)$ and $n = \Omega\big( \frac{d + \log \frac{1}{\delta}}{\epsilon^2 \log \frac{1}{\epsilon}} \big)$. This leads to,
    \begin{align}
        \label{eq: concentration all in terms of delta}
        \prob{ \abs{ \frac{1}{n} \sum_{i=1}^n \inner{\bx_i}{\bz}^p x_{i1}^q - C_{pq} } \geq  \frac{1 - \epsilon}{2} \epsilon \sqrt{\log \frac{1}{\epsilon}} \log^2(\epsilon n) } \leq \frac{\delta}{2} +  \newO{\frac{1}{n}}~,
    \end{align}
    Similarly, we can show that for some absolute constant $D_3, D_4 > 0$:
    \begin{align}
        \label{eq: concentration Sc}
        \prob{ \abs{ \frac{1}{\epsilon n} \sum_{i \in [n] \setminus S} \inner{\bx_i}{\bz}^p x_{i1}^q - C_{pq} } \geq t \given \mathfrak{A} } \leq 2 \exp\left( \frac{- \epsilon nt^2}{D_3 \log^4 (\epsilon n)} + D_4 d \right) ~,
    \end{align}
    where the event $\mathfrak{A}$ is defined in Corollary~\ref{cor: conditional on max}. We need \eqref{eq: concentration Sc} to hold across any choice of $S \subseteq [n]$. Thus, we take a union bound across $\binom{n}{\epsilon n}$ choices. Note that $\log \binom{n}{\epsilon n} \leq n \epsilon \log \frac{1}{\epsilon} $.  Therefore, for any choice of $S \subseteq [n]$,
    \begin{align}
        \label{eq: concentration Sc union}
       \prob{ \abs{ \frac{1}{\epsilon n} \sum_{i \in [n] \setminus S} \inner{\bx_i}{\bz}^p x_{i1}^q - C_{pq} } \geq t \given \mathfrak{A} } \leq 2 \exp\left( \frac{- \epsilon nt^2}{D_3 \log^4 (\epsilon n)} + D_4 d ) + n \epsilon \log \frac{1}{\epsilon} \right) ~,
    \end{align}
    We take $t = \frac{1 - \epsilon}{2}  \sqrt{\log \frac{1}{\epsilon}} \log^2(\epsilon n)$ and $n = \Omega\big( \frac{d + \log \frac{1}{\delta}}{\epsilon^2 \log \frac{1}{\epsilon}} \big)$ and this leads to
    \begin{align}
        \label{eq: concentration Sc union in terms of delta}
        \prob{ \abs{ \frac{1}{\epsilon n} \sum_{i \in [n] \setminus S} \inner{\bx_i}{\bz}^p x_{i1}^q - C_{pq} } \geq \frac{1 - \epsilon}{2}  \sqrt{\log \frac{1}{\epsilon}} \log^2(\epsilon n) \given \mathfrak{A} } \leq \frac{\delta}{2}~.
    \end{align}
    Following Corollary~\ref{cor: conditional on max}, and substituting the results of ~\eqref{eq: concentration all in terms of delta} and \eqref{eq: concentration Sc union in terms of delta} in ~\eqref{eq: decompose for uncorrupted}, we get
    \begin{align}
       \prob{ \abs{\frac{1}{(1 - \epsilon)n} \sum_{i \in S}  \inner{\bx_i}{\bz}^p x_{i1}^q - C_{pq} } \geq \epsilon \sqrt{\log \frac{1}{\epsilon}} \log^2(\epsilon n)  } \leq \calO\left(\delta\right) + \calO\left(\frac{1}{n}\right)~.
    \end{align}
\end{proof}

\begin{lemma}
\label{lem: concentration for corrupted}
    Adopting the notation from Lemma~\ref{lem: concentration for uncorrupted}, and for any choice of $0 < \epsilon < \frac{1}{2}$, $\delta > 0$, and $S \subseteq [n]$ such that $|S| = (1 - \epsilon) n$ and $n = \Omega\big( \frac{d + \log \frac{1}{\delta}}{\epsilon^2 \log \frac{1}{\epsilon}} \big)$, the following result holds:
    \begin{align}
        \label{eq: concentration for corrupted}
        \abs{ \frac{1}{n} \sum_{i \in [n]\setminus S}  \inner{\bx_i}{\bz}^p x_{i1}^q } \leq \newO{ \epsilon \sqrt{\log \frac{1}{\epsilon}} \log^2(\epsilon n) }
    \end{align}
    with probability at least $1 - \newO{\frac{1}{n}} - \newO{\delta}$.
\end{lemma}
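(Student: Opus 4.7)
The plan is to reduce the statement to the bound that was already established in the proof of Lemma~\ref{lem: concentration for uncorrupted}, specifically Equation~\eqref{eq: concentration Sc union in terms of delta}. First, since $\abs{[n]\setminus S} = \epsilon n$, I would rewrite the quantity of interest as
\begin{align}
\frac{1}{n}\sum_{i\in[n]\setminus S}\inner{\bx_i}{\bz}^p x_{i1}^q \;=\; \epsilon \cdot \frac{1}{\epsilon n}\sum_{i\in[n]\setminus S}\inner{\bx_i}{\bz}^p x_{i1}^q.
\end{align}
By Lemma~\ref{lem: bound on expectations}, each summand has expectation $C_{pq}$, a universal constant. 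So after multiplication by $\epsilon$ the ``centered'' target is $\epsilon C_{pq}$, which is itself $\calO\bigl(\epsilon \sqrt{\log \epsilon^{-1}}\log^2(\epsilon n)\bigr)$ since $\sqrt{\log\epsilon^{-1}}\log^2(\epsilon n)\geq 1$ in the regime of interest. Hence it suffices to bound the centered deviation at the scale $\epsilon \cdot \calO\bigl(\sqrt{\log\epsilon^{-1}}\log^2(\epsilon n)\bigr)$.

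Next, I would invoke precisely the bound derived as Equation~\eqref{eq: concentration Sc union in terms of delta} in the proof of Lemma~\ref{lem: concentration for uncorrupted}. That inequality, which was obtained by conditioning on the boundedness event $\mathfrak{A}$ from Corollary~\ref{cor: conditional on max} and applying the Hoeffding-type estimate of Lemma~\ref{lem: hoeffding for <x_i, z>^p x_i1^q} followed by (i) an $\varepsilon$-net union bound of complexity $\calO(2^d)$ over unit $\bz$ and (ii) a union bound over the $\binom{n}{\epsilon n}$ choices of the complementary subset (contributing $\exp(n\epsilon \log \epsilon^{-1})$), shows that with $t=\frac{1-\epsilon}{2}\sqrt{\log\epsilon^{-1}}\log^2(\epsilon n)$ and $n = \Omega\bigl((d + \log\delta^{-1})/(\epsilon^2 \log\epsilon^{-1})\bigr)$,
\begin{align}
\Bigl|\tfrac{1}{\epsilon n}\sum_{i\in[n]\setminus S}\inner{\bx_i}{\bz}^p x_{i1}^q - C_{pq}\Bigr| \;\leq\; \tfrac{1-\epsilon}{2}\sqrt{\log\epsilon^{-1}}\log^2(\epsilon n)
\end{align}
holds simultaneously for every $S\subset[n]$ with $|S|=(1-\epsilon)n$ with probability at least $1-\delta/2$ conditional on $\mathfrak{A}$. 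Undoing the conditioning via Corollary~\ref{cor: conditional on max} costs an additional $\calO(1/n)$.

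Combining with the constant offset $\epsilon C_{pq}$ and the factor $\epsilon$ in front yields
\begin{align}
\Bigl|\tfrac{1}{n}\sum_{i\in[n]\setminus S}\inner{\bx_i}{\bz}^p x_{i1}^q\Bigr| \;\leq\; \epsilon\abs{C_{pq}} + \tfrac{\epsilon(1-\epsilon)}{2}\sqrt{\log\epsilon^{-1}}\log^2(\epsilon n) \;=\; \calO\!\bigl(\epsilon\sqrt{\log\epsilon^{-1}}\log^2(\epsilon n)\bigr),
\end{align}
with the claimed probability. There is no real obstacle distinct from the proof of Lemma~\ref{lem: concentration for uncorrupted}: the only step that requires mild care is verifying that the sample-size condition $n=\Omega((d+\log\delta^{-1})/(\epsilon^2\log\epsilon^{-1}))$ is enough to absorb both the $\varepsilon$-net exponent $\calO(d)$ and the combinatorial exponent $n\epsilon\log\epsilon^{-1}$ from the union bound over subsets, which it does by construction. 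The final probability bound $1 - \calO(\delta) - \calO(1/n)$ follows by collecting the failure probabilities from Hoeffding, the union bounds, and the unconditioning step.
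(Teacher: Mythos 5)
Your proposal is correct and delivers the same bound with the same probability guarantee, but it takes a slightly different decomposition than the paper. You bound the complement average directly: you pull out the factor $\epsilon$, center by $C_{pq}$ via Lemma~\ref{lem: bound on expectations}, and invoke the subset-uniform bound \eqref{eq: concentration Sc union in terms of delta} (Hoeffding conditional on $\mathfrak{A}$, $\varepsilon$-net over unit $\bz$, union bound over the $\binom{n}{\epsilon n}$ complements), then uncondition via Corollary~\ref{cor: conditional on max}; the leftover mean contribution $\epsilon\abs{C_{pq}}$ is absorbed into $\newO{\epsilon\sqrt{\log\frac{1}{\epsilon}}\log^2(\epsilon n)}$. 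The paper instead writes $\frac{1}{n}\sum_{i\in[n]\setminus S} = \frac{1}{n}\sum_{i\in[n]} - \frac{1}{n}\sum_{i\in S}$ and applies the triangle inequality, combining the full-sample concentration \eqref{eq: concentration all in terms of delta} with the statement of Lemma~\ref{lem: concentration for uncorrupted}, which produces the same three pieces: two deviation terms of order $\epsilon\sqrt{\log\frac{1}{\epsilon}}\log^2(\epsilon n)$ and the same $\epsilon\abs{C_{pq}}$ offset. The trade-off is minor: your route is more direct and avoids the complement algebra, but it reaches inside the proof of Lemma~\ref{lem: concentration for uncorrupted} for an intermediate display, whereas the paper's argument only leans on previously stated conclusions; your explicit remark that $\sqrt{\log\frac{1}{\epsilon}}\log^2(\epsilon n)\gtrsim 1$ is needed to absorb $\epsilon\abs{C_{pq}}$ is actually a point the paper glosses over, and both arguments share the same (harmless, nondegenerate-regime) caveat there.
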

\begin{proof}
    The result follows from the result of Lemma~\ref{lem: concentration for uncorrupted}.
    \begin{align}
         \frac{1}{n} \sum_{i \in [n]\setminus S}  \inner{\bx_i}{\bz}^p x_{i1}^q  &= \frac{1}{n} \sum_{i \in [n]}  \inner{\bx_i}{\bz}^p x_{i1}^q - \frac{1}{n} \sum_{i \in  S}  \inner{\bx_i}{\bz}^p x_{i1}^q \\
         &= \frac{1}{n} \sum_{i \in [n]}  \inner{\bx_i}{\bz}^p x_{i1}^q - (1- \epsilon)\frac{1}{(1 - \epsilon)n} \sum_{i \in  S}  \inner{\bx_i}{\bz}^p x_{i1}^q \\
         &= \frac{1}{n} \sum_{i \in [n]}  \inner{\bx_i}{\bz}^p x_{i1}^q - C_{pq} \notag \\
         &\quad - \Big( (1- \epsilon)\frac{1}{(1 - \epsilon)n} \sum_{i \in  S}  \inner{\bx_i}{\bz}^p x_{i1}^q - C_{pq} \Big) \\
         \abs{ \frac{1}{n} \sum_{i \in [n]\setminus S}  \inner{\bx_i}{\bz}^p x_{i1}^q  } &\leq \abs{ \frac{1}{n} \sum_{i \in [n]}  \inner{\bx_i}{\bz}^p x_{i1}^q - C_{pq} }  + (1- \epsilon) \abs{ \Big( \frac{1}{(1 - \epsilon)n} \sum_{i \in  S}  \inner{\bx_i}{\bz}^p x_{i1}^q - C_{pq} \Big) } + \epsilon \abs{ C_{pq} } \\
         &\leq \newO{ \epsilon \sqrt{\log \frac{1}{\epsilon}} \log^2(\epsilon n) }
    \end{align}
    with probability at least $1 - \newO{\frac{1}{n}} - \newO{\delta}$.
\end{proof}

\section{Proof of Proposition~\ref{lem:impossibility with constant corruption}}
\propimpossibility*
\begin{proof}
We show that proof in one dimension as an extension to $d$-dimension is straightforward. We consider the following phase retrieval model:
\begin{align}
y = (x \theta)^2 + \eta_{\theta, x}
\end{align}
where $\eta_{\theta, x}$ denotes the adversarial corruption added by a strong adversary who has access to both $x$ and $\theta$. We draw $x$ from a standard normal distribution. Consider two parameters $\theta_1 > 0$ and $\theta_2 > 0$ with $| \theta_1 - \theta_2 | > \delta$ for some $\delta > 0$.

Let $D_1(x,y)$ and $D_2(x,y)$ be distributions over $\mathbb{R} \times \mathbb{R}$ corresponding to quadratic models $y = (x\theta_1)^2 + \eta_{\theta_1, x}$ and $y = (x\theta_2)^2 + \eta_{\theta_2, x}$ respectively. Since the adversary can only change $\epsilon$ fraction of inputs, we assume the following conditional distribution for $y$ conditioned on $x$ for $i \in \{1, 2\}$ for some $\sigma > 0$:
\begin{align}
D_i(y | x) = \begin{cases} 1 - \epsilon, \text{ when } y = (x\theta_i)^2 \\ \frac{\epsilon}{\sigma}, \text{ when } y \in [ \sigma,  2\sigma] \\ 0, \text{ otherwise} \end{cases}
\end{align}
We want to be able to differentiate between $D_1$ and $D_2$ based on the measurements $(x, y)$ drawn from either $D_1$ or $D_2$. By reduction to a hypothesis testing problem and using the Neyman--Pearson lemma:
\begin{align}
\inf_{\hat{\theta}}\sup_{\theta \in \{\theta_1, \theta_2\}}\mathbb{P}_{\theta}\big[|\hat{\theta} - \theta| > \delta \big] \geq \frac{1}{2}(1 - \mathrm{TV}(D_1, D_2))
\end{align}
where $\mathrm{TV}(D_1, D_2)$ is the total variation distance between distributions $D_1$ and $D_2$. Next, we compute an upper bound on $\mathrm{TV}(D_1, D_2)$.
\begin{align}
\mathrm{TV}(D_1, D_2) &= \frac{1}{2}\int_{\mathbb{R} \times \mathbb{R}} |D_1(x, y) - D_2(x, y)| \, \mathrm{d}x \, \mathrm{d}y \\
&= \frac{1}{2}\int_{\mathbb{R} \times \mathbb{R}} D_1(x) |D_1(y|x) - D_2(y|x)|\, \mathrm{d}x \, \mathrm{d}y
\end{align}
Notice that $D_1(y|x)$ and $D_2(y|x)$ can only differ when $(x\theta_1)^2 \ne (x\theta_2)^2$ and contribute $|D_1(y|x) - D_2(y|x)| \leq 2(1 - \epsilon)$ correspondingly. Overall,
\begin{align}
\mathrm{TV}(D_1, D_2) \leq 1 - \epsilon
\end{align}
It follows that,
\begin{align}
\inf_{\hat{\theta}}\sup_{\theta \in \{\theta_1, \theta_2\}}\mathbb{P}_{\theta}\big[|\hat{\theta} - \theta| > \delta \big] \geq \frac{\epsilon}{2}~.
\end{align}
\end{proof}

\section{Constructing \oracle{} and Proof of Theorem~\ref{thm: convergence of gradient descent}}
\label{sec:constructing oracle}

In this section, we analyze the behavior of the loss function in~\eqref{eq:nonconvex formulation} in the presence of corruption. Our discussion is framed within the context of Assumption~\ref{assum: independent corruption}, where we assume that the corruption $\eta_i$ in the response $y_i$ is independent of the covariates $\bx_i$. For this part of the analysis, by possibly reindexing the measurements, we define
\begin{align}
    \label{eq: f_U grad and hessian oracle 1}
    f_U(\btheta) &= \frac{1}{4 m}\sum_{i=1}^m \Big(\inner{\bx_i}{\btheta}^2 - \inner{\bx_i}{\btheta^*}^2 - \eta_i\Big)^2 \\
    \nabla f_U(\btheta) &= \frac{1}{m}\sum_{i=1}^m \Big(\inner{\bx_i}{\btheta}^2 - \inner{\bx_i}{\btheta^*}^2 - \eta_i \Big) \bx_i \bx_i^{\top} \btheta \\
    \nabla^2 f_U(\btheta) &= \frac{1}{m}\sum_{i=1}^m \Big(3 \inner{\bx_i}{\btheta}^2 - \inner{\bx_i}{\btheta^*}^2 - \eta_i \Big) \bx_i \bx_i^{\top}~.
\end{align}
Note that up to $k$ out of $m$ measurements may have corrupted responses $y_i$. Without loss of generality, we assume that $\btheta^* = [1, 0, \ldots, 0]^{\top}$. Taking the expectation over $\bx_1,\ldots,\bx_m$, we derive the following expected quantities:
\begin{align}
    \label{eq: population F_U grad and hessian oracle 1}
    F_U(\btheta) &= \frac{1}{4} \left(3  \| \btheta \|^4 + 3   - 4 \inner{\btheta}{\btheta^*}^2 - 2 \| \btheta \|^2   -  2 \|\btheta\|^2 \bar{\eta} + 2  \bar{\eta} + \frac{1}{m }\sum_{i=1}^m \eta_i^2 \right) \\
    \nabla F_U(\btheta) &= (3 \| \btheta \|^2 - 1) \btheta - 2 \inner{\btheta}{\btheta^*} \btheta^* - \bar{\eta} \btheta \\
    \nabla^2 F_U(\btheta) &= 6 \btheta \btheta^{\top} + 3 \| \btheta \|^2 - I - 2 \btheta^* {\btheta^*}^{\top} - \bar{\eta} I~,
\end{align}
where $\bar{\eta} = \frac{1}{m} \sum_{i=1}^m \eta_i$.

\subsection{Geometry of $F_U$}

\citet{sun2018geometric} investigated the geometry of the expected loss function in the absence of corruption. We extend this analysis to show that when the $\eta_i$'s are independent of the $\bx_i$'s, the geometry of $F_U(\btheta)$ also exhibits a benign structure. This is formalized by characterizing the critical points of $F_U(\btheta)$. At critical points,
\begin{align}
    \label{eq: critical points of F_U}
    \nabla F_U(\btheta) &= \bm{0} \\
    (3 \| \btheta \|^2 - 1) \btheta - 2 \inner{\btheta}{\btheta^*} \btheta^* - \bar{\eta} \btheta &= \bm{0}~.
\end{align}
We end up with three possible scenarios:
\begin{enumerate}
    \item $\btheta = \bm{0}$ is always a stationary point, but it behaves differently for different amount of average corruption.
    \begin{enumerate}
        \item When $\bar{\eta} \geq - 1$, $\bm{0}$ is the local maxima (technically, it can also be considered a strict saddle point).
        \item When $-3 < \bar{\eta} < -1 $, $\bm{0}$ is a strict saddle point.
        \item When $\bar{\eta} \leq -3$, $\bm{0}$ becomes the local (also global) minima due to the convexity of the $F_U(\btheta)$.
    \end{enumerate}
    \item When $\bar{\eta} \geq -1$, we can characterize a second set of critical points by a set
    \begin{align}
    \label{eq: strict saddle points}
    \mathcal{X} = \left\{ \btheta \given 3 \| \btheta \|^2 - 1 - \bar{\eta} = 0,\, {\btheta^*}^{\top} \btheta = 0 \right\}~.
    \end{align}
    They lead to strict saddle points.
    \item Finally, when $\bar{\eta} \geq -3$, we get another set of critical points.
    \begin{align}
    \label{eq: optimal points}
    \mathcal{X}^* = \left\{ \btheta \given \btheta = \pm \sqrt{1 + \frac{\bar{\eta}}{3}} \btheta^* \right\}~.
    \end{align}
    The points in $\mathcal{X}^*$ are the local (and global) minima.
\end{enumerate}

Notably, all critical points of $F_U(\btheta)$ are either strict saddle points or global minima. This suggests that the algorithms discussed in Section~\ref{sec: solving phase retrieval} are applicable for solving problem~\ref{eq:nonconvex formulation}, even in the presence of corrupted measurements. For our analysis, we employed gradient descent with random initialization, as proposed by \citet{chen2019gradient}.

\subsection{Gradient descent updates with $F_U$}

To gain intuition, we can study the dynamics of gradient descent with the (rather unrealistic) assumption that the gradient descent iterates $\tilde{\btheta}^t$ are independent of covariates $\bx_i, i \in [m]$. This leads to the following update rule:
\begin{align}
    \label{eq: population gd}
    \tilde{\btheta}^{t+1} = \tilde{\btheta}^t - \mu (3 \| \btheta \|^2 - 1) \btheta - 2 \inner{\btheta}{\btheta^*} \btheta^* - \bar{\eta} \btheta
\end{align}
where $\mu > 0$ is the chosen step size. We define the following two quantities:
\begin{align}
    \label{eq: alpha beta}
    \alpha_t =  \tilde{\theta}^t_1, \quad \beta_t = \sqrt{\sum_{i=2}^d (\tilde{\theta}^t_i)^2}
\end{align}
Without loss of generality, we can assume that $\alpha_0 > 0$. Equation \eqref{eq: population gd} leads to following dynamics for $\alpha_t$ and $\beta_t$:
\begin{align}
    \label{eq: population dynamics alpha beta}
    \alpha_{t+1}  &= \Big(1 + \mu \big( 3 + \bar{\eta} - 3 (\alpha_t^2 + \beta_t^2) \big) \Big) \alpha_t \\
    \beta_{t+1} &= \Big( 1 + \mu \big( 1 + \bar{\eta} - 3 (\alpha_t^2 + \beta_t^2)  \big) \Big) \beta_t
\end{align}
We observe that \eqref{eq: population dynamics alpha beta} has three fixed points $(\alpha, \beta)$:
\begin{enumerate}
    \item $(\alpha, \beta) = (0, 0)$ corresponds to $\btheta = \bm{0}$.
    \item When $\bar{\eta} \geq -1 $, then $(\alpha, \beta) = (0, \sqrt{\frac{1 + \bar{\eta}}{3}})$ corresponds to points in $\mathcal{X}$, defined in \eqref{eq: strict saddle points}.
    \item When $\bar{\eta} \geq -3 $, then $(\alpha, \beta) = (\sqrt{1 + \frac{\bar{\eta}}{3}}, 0)$ corresponds to points in $\mathcal{X}^*$, defined in \eqref{eq: optimal points}.
\end{enumerate}

In the absence of corruption, \citet{chen2019gradient} developed a ``leave-one-out'' technique to demonstrate that an approximately similar dynamic to \eqref{eq: population dynamics alpha beta} can be achieved using updates based on $f_U(\btheta)$, despite the gradient descent iterates $\tilde{\btheta}^t$ being dependent on the covariates $\bx_i$ for all $i \in [m]$. This framework is also applicable to our setting.

\subsection{Proof Sketch for Theorem~\ref{thm: convergence of gradient descent}}

In this subsection, we outline the key proof ideas for Theorem~\ref{thm: convergence of gradient descent}. The proof builds directly on the approach used in Theorem 2 of \citet{chen2019gradient}, allowing us to focus on the novel aspects that differentiate our work from theirs. Full details are omitted here to highlight the distinctions.

Consider the following dynamics for $\alpha_t$ and $\beta_t$ defined in \eqref{eq: alpha beta}:
\begin{align}
    \label{eq: sample dynamics alpha beta}
    \alpha_{t+1}  &= \Big(1 + \mu \big( 3 + \bar{\eta} - 3 (\alpha_t^2 + \beta_t^2) \big) + \mu \zeta_t \Big) \alpha_t~, \\
    \beta_{t+1} &= \Big( 1 + \mu \big( 1 + \bar{\eta} - 3 (\alpha_t^2 + \beta_t^2)  \big) + \mu \rho_t  \Big) \beta_t~,
\end{align}
where $\zeta_t$ and $\rho_t$ are the perturbation terms. Next, we discuss the major parts of the proof.

\subsubsection{When $f_U$ is convex} The first part deals with the case when $\bar{\eta} < -3$. In this scenario,  $f_U(\btheta)$ can be shown to be a convex function with high probability. To that end, we prove the following result.
\begin{lemma}
    \label{lem:convex loss landscape}
    Let $n = \Omega\big( \frac{d \, \mathrm{polylog}(d) + \log(\frac{1}{\delta})}{\epsilon^2 \log (\frac{1}{\epsilon})} \big)$ and $k \in \mathcal{K}$. If $\bar{\eta} = - 3  - \varepsilon $ for some $\varepsilon > 0$, then $f_U(\btheta)$  is a convex function with probability at least $1 - \delta - \newO{\frac{1}{n}}$.
\end{lemma}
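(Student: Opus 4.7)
The plan is to establish $\nabla^2 f_U(\btheta) \succeq 0$ uniformly in $\btheta \in \real^d$ with high probability, which immediately gives convexity. Under Assumption~\ref{assum: independent corruption}, the expected Hessian from \eqref{eq: population F_U grad and hessian oracle 1} is
\[
\nabla^2 F_U(\btheta) = 6\btheta\btheta^{\top} + \big(3\|\btheta\|^2 - 1 - \bar{\eta}\big) I - 2\btheta^*{\btheta^*}^{\top}.
\]
Substituting $\bar{\eta} = -3-\varepsilon$ and using $\|\btheta^*\|=1$, the minimum eigenvalue of the right-hand side is at least $3\|\btheta\|^2 + 2 + \varepsilon - 2 \geq \varepsilon$, so $\nabla^2 F_U(\btheta) \succeq \varepsilon I$ uniformly in $\btheta$. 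Thus it suffices to show that with high probability, $\sup_{\btheta}\big\|\nabla^2 f_U(\btheta) - \nabla^2 F_U(\btheta)\big\|_{\mathrm{op}} < \varepsilon$.

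I would decompose the deviation into three pieces, one for each term of $\nabla^2 f_U(\btheta)$: a clean fourth-order piece $\frac{3}{m}\sum_i \inner{\bx_i}{\btheta}^2 \bx_i\bx_i^{\top} - (6\btheta\btheta^{\top} + 3\|\btheta\|^2 I)$; a clean fixed-direction piece $\frac{1}{m}\sum_i \inner{\bx_i}{\btheta^*}^2 \bx_i\bx_i^{\top} - (2\btheta^*{\btheta^*}^{\top} + I)$; and a corruption piece $\frac{1}{m}\sum_i \eta_i \bx_i\bx_i^{\top} - \bar{\eta} I$. For the first two, the fourth-moment Gaussian concentration tools developed in Appendix~\ref{sec: auxiliary lemmas}, combined with an $\varepsilon$-net over the direction of $\btheta$ on the unit sphere, give a uniform deviation of order $\widetilde{\calO}(\sqrt{d/m})$. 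For the third, Assumption~\ref{assum: independent corruption} lets me condition on $\{\eta_i\}$, after which $\sum_i \eta_i \bx_i \bx_i^{\top}$ is a sum of independent random matrices with fixed weights. A Matrix Bernstein bound (using truncation on the high-probability event of Corollary~\ref{cor: max of chi-squared random variables}) yields deviation of order $\widetilde{\calO}(\max_i |\eta_i| \sqrt{d/m})$, and because the preprocessing step of Algorithm~\ref{alg:alt min algorithm} guarantees $\max_i |\eta_i| = \calO(\log n)$, the sample complexity $m = \Omega(d\,\mathrm{polylog}(d))$ suffices to drive each of the three deviations below $\varepsilon/3$ with probability at least $1 - \delta - \calO(1/n)$.

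The main obstacle is that uniform control over all $\btheta \in \real^d$ cannot hold via a single $\varepsilon$-net, since the first piece scales quadratically in $\|\btheta\|$. I would handle this by splitting on $\|\btheta\|$: inside a fixed ball $\{\btheta : \|\btheta\| \leq R\}$ the net argument applies directly; outside the ball, I would lower bound $\frac{3}{m}\sum_i \inner{\bx_i}{\btheta}^2 \bx_i \bx_i^{\top}$ using the standard Wishart lower bound $\frac{1}{m}\sum_i \bx_i \bx_i^{\top} \succeq \tfrac{1}{2} I$ together with $\inner{\bx_i}{\btheta}^2 \geq 0$, and verify that this quadratic-in-$\|\btheta\|$ term dominates the remaining contributions (which grow at most quadratically with strictly smaller constants, since $|\eta_i|$ and $\inner{\bx_i}{\btheta^*}^2$ do not depend on $\btheta$). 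Combining both regimes yields $\nabla^2 f_U(\btheta) \succ 0$ for every $\btheta \in \real^d$ on the good event, which is the desired convexity of $f_U$.
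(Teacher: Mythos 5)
Your overall strategy---show $\nabla^2 F_U(\btheta) \succeq \varepsilon I$ when $\bar{\eta} = -3-\varepsilon$ and transfer this to the empirical Hessian by concentration---is the same as the paper's, and your three-piece decomposition with $\max_i|\eta_i| = \calO(\log n)$ from preprocessing mirrors the paper's use of Lemma 14 of Chen et al.\ plus a Bernstein bound for the subexponential terms $\eta_i \inner{\bx_i}{\bz}^2$. The genuine gap is in your regime $\|\btheta\| > R$. From $\frac{1}{m}\sum_i \bx_i\bx_i^{\top} \succeq \frac{1}{2}I$ and $\inner{\bx_i}{\btheta}^2 \geq 0$ you cannot deduce any lower bound of order $\|\btheta\|^2$ on the weighted sum $\frac{3}{m}\sum_i \inner{\bx_i}{\btheta}^2 \bx_i\bx_i^{\top}$: nonnegativity of the weights only yields positive semidefiniteness. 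In a fixed direction $\bz$ the quadratic form is $\frac{3}{m}\sum_i \inner{\bx_i}{\btheta}^2 \inner{\bx_i}{\bz}^2$, and nothing in those two facts prevents the weights $\inner{\bx_i}{\btheta}^2$ from being small precisely on the indices where $\inner{\bx_i}{\bz}^2$ is large. The bound you actually need (a lower bound $\gtrsim \|\btheta\|^2$ uniformly over unit $\bz$ and over $\btheta$ outside the ball) is true with high probability under Gaussian design with $m = \Omega(d\,\mathrm{polylog}(d))$, but it is itself a uniform fourth-moment concentration statement of exactly the kind you were trying to avoid outside the ball, so the split as justified does not close the argument. (Also note the ``remaining contributions'' do not grow with $\|\btheta\|$ at all---they are $\btheta$-independent---so there is nothing quadratic for the quartic term to beat.)

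The repair is easy, and in fact simpler than your ball-splitting. Either (i) observe that the quartic piece $\frac{3}{m}\sum_i \inner{\bx_i}{\btheta}^2 \bx_i\bx_i^{\top}$ is positive semidefinite for \emph{every} $\btheta$ and simply drop it; what remains, $-\frac{1}{m}\sum_i (x_{i1}^2 + \eta_i)\bx_i\bx_i^{\top}$, does not depend on $\btheta$, and your own clean fixed-direction and corruption concentration bounds show it is within $\varepsilon/2$ (in operator norm) of its mean $(2+\varepsilon)I - 2\btheta^*{\btheta^*}^{\top} \succeq \varepsilon I$, which proves the lemma with no uniformity in $\btheta$ required at all; or (ii) do what the paper does and use a \emph{relative-error} uniform bound for the quartic piece (Lemma 14 of Chen et al., or equivalently your net argument applied to $\btheta/\|\btheta\|$ and rescaled by $\|\btheta\|^2$), giving a deviation $\lesssim \sqrt{d\log^3 m/m}\,\|\btheta\|^2$ that is absorbed by the population term $3\|\btheta\|^2$ simultaneously for all $\btheta$, again removing the need for any ball split.
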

\begin{proof}
    \label{proof: convex loss landscape}
    We proceed with studying the spectral properties of $\nabla^2 f_U(\btheta)$. We want to show that $\nabla^2 f_U(\btheta) \succeq 0$ with high probability. Recall that
    \begin{align}
        \nabla^2 f_U(\btheta) &= \frac{1}{m}\sum_{i=1}^m \Big(3 \inner{\bx_i}{\btheta}^2 - \inner{\bx_i}{\btheta^*}^2 - \eta_i \Big) \bx_i \bx_i^{\top} \notag
    \end{align}
    For any $\bz \in \real^d$ such that $\| \bz \| = 1$,
    \begin{align}
        \bz^{\top} \nabla^2 f_U(\btheta) \bz &= \bz^{\top} \nabla^2 f_U(\btheta) \bz - \bz^{\top} \nabla^2 F_U(\btheta)  \bz  + \bz^{\top} \nabla^2 F_U(\btheta) \bz~.
    \end{align}
    Observe that,
    \begin{align}
        \bz^{\top} \nabla^2 f_U(\btheta) \bz &= \frac{1}{m}\sum_{i=1}^m \Big(3 \inner{\bx_i}{\btheta}^2 \inner{\bx_i}{\bz}^2 - \inner{\bx_i}{\btheta^*}^2 \inner{\bx_i}{\bz}^2 - \eta_i \inner{\bx_i}{\bz}^2\Big)  \notag\\
        &= \frac{1}{m}\sum_{i=1}^m \Big(3 \inner{\bx_i}{\btheta}^2 \inner{\bx_i}{\bz}^2 - x_{i1}^2 \inner{\bx_i}{\bz}^2 - \eta_i \inner{\bx_i}{\bz}^2\Big)~.
    \end{align}
    and,
    \begin{align}
        \bz^{\top} \nabla^2 F_U(\btheta) \bz = 6 \inner{\btheta}{\bz}^2 + 3 \| \btheta \|^2 - 1 - 2 z_{1}^2 - \bar{\eta}~.
    \end{align}
    Using Lemma 14 from \citep{chen2019gradient}, if $n = \Omega(d\, \mathrm{polylog}(d))$, then for some absolute constant $c_0 > 0$, the following results hold with probability at least $1 - \calO(n^{-10})$:
    \begin{align}
        \frac{1}{m} \sum_{i=1}^m \Big(3 \inner{\bx_i}{\btheta}^2 \inner{\bx_i}{\bz}^2 - 6 \inner{\btheta}{\bz}^2 - 3 \| \btheta \|^2 \Big) &\geq - c_0 \sqrt{\frac{d \log^3 m}{m}} \| \btheta \|^2\\
        \frac{1}{m} \sum_{i=1}^m \Big( x_{i1}^2 \inner{\bx_i}{\bz}^2 -  1 - 2z_1^2 \Big) &\geq - c_0 \sqrt{\frac{d \log^3 m}{m}}
    \end{align}
    Next, notice that $\eta_i \inner{\bx_i}{\bz}$ is a subexponential random variable with parameters $(2 \eta_i, 4\eta_i)$.
    Using the Bernstein-type inequality~\citep{vershynin2010introduction}, we can write:
    \begin{align}
        \prob{ \abs{\frac{1}{m} \sum_{i=1}^m \big( \eta_i \inner{\bx_i}{\bz}^2 - \eta_i \big)} \geq  2 t \max_{i\in [m]} \abs{\eta_i} }  \leq 2\exp\bigg(-c m t^2\bigg)
    \end{align}
    for some constant $c > 0$ and $t \in [0, 1]$. By taking, $t = \sqrt{\frac{d \log m}{m}}$ and using a covering argument similar to \citet{chen2019gradient}, we can write $\forall \bz \in \real^d$ and $\| \bz \| = 1$,
    \begin{align}
        \frac{1}{m} \sum_{i \in [m]} \big( \eta_i \inner{\bx_i}{\bz}^2 - \eta_i \big) \geq - 2 \sqrt{\frac{d \log m}{m}} \max_{i\in U} \abs{\eta_i}
    \end{align}
    with probability at least $1 - \newO{\frac{1}{m}}$.
    Combining all the results, we have
    \begin{align}
        \bz^{\top} \nabla^2 f_U(\btheta) \bz &\geq -c_0 \sqrt{\frac{d \log^3 m}{m}} \| \btheta \|^2 -c_0 \sqrt{\frac{d \log^3 m}{m}} -  2 \sqrt{\frac{d \log m}{m}} \max_{i\in U} |\eta_i| + 6 \inner{\btheta}{\bz}^2 + 3 \| \btheta \|^2 - 1 - 2 z_{1}^2 - \bar{\eta} \\
        &\geq -c_0 \sqrt{\frac{d \log^3 m}{m}} \| \btheta \|^2 -c_0 \sqrt{\frac{d \log^3 m}{m}}  -  2 \sqrt{\frac{d \log m}{m}} \max_{i\in U} |\eta_i|  + 3 \| \btheta \|^2 + \varepsilon
    \end{align}

    By noticing that $\max_{i\in [m]} |\eta_i| = \calO(\log n)$, $m = (1 - 2\epsilon)n$ and taking $n = \Omega(\frac{d\, \mathrm{polylog}(d)}{\varepsilon^2})$, we show that $\forall \bz \in \real^d$ and with probability at least $1 - \delta - \newO{\frac{1}{n}}$
    \begin{align}
        \bz^{\top} \nabla^2 f_U(\btheta) \bz &\geq \frac{\varepsilon}{2}~.
    \end{align}
\end{proof}

Moreover, the global minimum of $f_U(\btheta)$ is attained at $\bm{0}$. Algorithm~\ref{alg: grad descent} leverages this property to return $\bm{0}$ when it estimates that the average corruption is less than $-3$. However, since the algorithm does not have direct access to the true value of $\bar{\eta}$, it requires a method to estimate the average corruption. For this purpose, we define the following quantity:
\begin{align}
    \label{eq: kappa sq}
    \kappa_{\mathrm{sq}} = \frac{1}{3|U|} \sum_{i \in U} \big( y_i z_i - (d - 1) y_i \big)~,
\end{align}
where $z_i = \sum_{j=1}^d x_{ij}^2, \forall i \in [m]$.

In the remaining part of the proof sketch, we assume that $\bar{\eta} \geq -3$. Next, we show that $\kappa_{\mathrm{sq}}$ provides a good estimation of $1 - \frac{\bar{\eta}}{3}$ with high probability.

We discuss the setting of $\kappa_{\mathrm{sq}}$ below:
\begin{align}
    \kappa_{\mathrm{sq}} = \frac{1}{3} \left( \sqrt{2} \sqrt{ \frac{1}{m}\sum_{i=1}^m y_i^2 - \left( \frac{1}{m} \sum_{i=1}^m y_i  \right)^2 }  + \frac{1}{m} \sum_{i=1}^m y_i\right)
\end{align}

Note that
\begin{align}
    \frac{1}{m}\sum_{i=1}^m y_i^2 = \frac{1}{m}\sum_{i=1}^m \left( \inner{\bx_i}{\btheta^*}^4 + \eta_i^2 + 2 \eta_i \inner{\bx_i}{\btheta^*}^2  \right)
\end{align}

Using the same argument as Lemma~\ref{lem: hoeffding for <x_i, z>^p x_i1^q} $\forall j \in [d]$,
\begin{align}
    \prob{\abs{\frac{1}{m} \sum_{i=1}^m \inner{\bx_i}{\btheta^*}^4 - \E{ \frac{1}{m} \sum_{i=1}^m \inner{\bx_i}{\btheta^*}^4} } \geq \varepsilon } \leq 2 \exp\left( - \frac{m\varepsilon^2}{ C \log^4 m} \right) + \newO{\frac{1}{m}}~,
\end{align}
Using the Bernstein-type inequality~\citep{vershynin2010introduction} for subexponential random variables $\eta_i \inner{\bx_i}{\btheta^*}^2, \forall i \in [m]$:
\begin{align}
    \prob{ \abs{\frac{1}{m} \sum_{i=1}^m \big( \eta_i \inner{\bx_i}{\btheta^*}^2 - \eta_i \big)} \geq  \varepsilon \max_{i \in [m]} \abs{\eta_i} }  \leq 2 \exp\left(-c m \varepsilon^2 \right)~,
\end{align}
for $\varepsilon \in (0, 1)$ and some absolute constant $c > 0$. Similarly, $\frac{1}{m} \sum_{i=1}^m y_i$ concentrates sharply around $\| \btheta^* \|^2 + \bar{\eta}$. Combining the above results together, we can show that
\begin{align}
    \left( 1 + \frac{\bar{\eta}}{3} \right) - \newO{\varepsilon} \leq \kappa_{\mathrm{sq}} \leq  \left( 1 + \frac{\bar{\eta}}{3} \right) + \newO{\varepsilon} + \newO{\epsilon \log^2 m}~,
\end{align}
with probability at least $1 - \newO{\delta} - \newO{\frac{1}{m}}$.

\subsubsection{When approximate dynamics for $\alpha_t$ and $\beta_t$ holds}

Following a similar line of reasoning as in \citet{chen2019gradient}, the subsequent part of the proof demonstrates that if the dynamics in \eqref{eq: sample dynamics alpha beta} hold for $\zeta_t = \newO{\frac{1}{\log d}}$ and $\rho_t = \newO{\frac{1}{\log d}}$, then there exists some $\nu \in (0, 1)$ and a corresponding $T_0 = T_0(\nu) = \newO{\log d}$ such that:
\begin{align}
    \abs{\alpha_{T_0} - \kappa} \leq \frac{\nu}{2}, \quad  \beta_{T_0} \leq \frac{\nu}{2}~.
\end{align}
This result implies that $d(\tilde{\btheta}^{T_0}, \kappa \btheta^*) \leq \nu$. Achieving this bound relies on an effective initialization, which is attained by setting $\tilde{\btheta}^0 = \sqrt{\kappa_{\mathrm{sq}}}\bm{u}$, where $\bm{u}$ is uniformly distributed on the unit sphere. The arguments closely follow the reasoning presented in the proof of Theorem 3 in \citet{chen2019gradient}.

\subsubsection{Justification for approximate dynamics of $\alpha_t$ and $\beta_t$}

\citet{chen2019gradient} employ a variant of leave-one-out arguments to demonstrate that the dynamics described in \eqref{eq: sample dynamics alpha beta} hold, with $\zeta_t = \newO{\frac{1}{\log d}}$ and $\rho_t = \newO{\frac{1}{\log d}}$. Their approach is based on constructing three specific leave-one-out sequences: the $l$-th leave-one-out sequence, the random sign sequence, and the $l$-th leave-one-out with random sign sequence. These sequences are instrumental in establishing a form of \emph{near-independence} between the iterates $\tilde{\btheta}^t$ and the covariates $\bx_i$ for all $i \in [m]$. Below, we provide formal definitions of these four sequences of iterates (including the original sequence) and outline their respective update rules:
\begin{tabular}{ll}
     Original sequence: &  $\nabla f(\btheta) = \frac{1}{m}\sum_{i=1}^m \Big(\inner{\bx_i}{\btheta}^2 - \inner{\bx_i}{\btheta^*}^2 - \eta_i \Big) \bx_i \bx_i^{\top} \btheta$  \\
     & $\tilde{\btheta}^{t+1} = \tilde{\btheta}^t - \mu \nabla f(\tilde{\btheta}^t)$\\
     $l$-th leave-one-out sequence: & $\nabla f^{(l)}(\btheta) = \frac{1}{m}\sum_{i=1, i \ne l}^m \Big(\inner{\bx_i}{\btheta}^2 - \inner{\bx_i}{\btheta^*}^2 - \eta_i \Big)  \bx_i \bx_i^{\top} \btheta  $ \\
     & $\tilde{\btheta}^{t+1, (l)} = \tilde{\btheta}^{t, (l)} - \mu \nabla f^{(l)}(\tilde{\btheta}^{t,(l)})$ \\
     Random sign sequence: & $\nabla f^{\mathrm{sgn}}(\btheta) = \frac{1}{m}\sum_{i=1}^m \Big(\inner{\bx_i^{\mathrm{sgn}}}{\btheta}^2 - \inner{\bx_i^{\mathrm{sgn}}}{\btheta^*}^2 - \eta_i \Big)  \bx_i^{\mathrm{sgn}} {\bx_i^{\mathrm{sgn}}}^{\top} \btheta$ \\
     & $\tilde{\btheta}^{t+1, \mathrm{sgn}} = \tilde{\btheta}^{t, \mathrm{sgn}} - \mu \nabla f^{\mathrm{sgn}}(\tilde{\btheta}^{t,\mathrm{sgn}})$ \\
     $l$-th leave-one-out and random sign sequence: & $\nabla f^{(l), \mathrm{sgn}}(\btheta) = \frac{1}{m}\sum_{i=1, i \ne l}^m \Big(\inner{\bx_i^{ \mathrm{sgn}}}{\btheta}^2 - \inner{\bx_i^{ \mathrm{sgn}}}{\btheta^*}^2 - \eta_i \Big) \bx_i^{\mathrm{sgn}} {\bx_i^{\mathrm{sgn}}}^{\top} \btheta   $ \\
     &  $\tilde{\btheta}^{t+1, \mathrm{sgn}, (l)} = \tilde{\btheta}^{t , \mathrm{sgn}, (l)} - \mu \nabla f^{(l), \mathrm{sgn}}(\tilde{\btheta}^{t, \mathrm{sgn},(l)})$
\end{tabular}
The notations used here follow closely from \citet{chen2019gradient}. Specifically, for a given $\bx_i^{\mathrm{sgn}}$, we define $x_{ij}^{\mathrm{sgn}} = x_{ij}$ for $j \neq 1$ and $x_{i1}^{\mathrm{sgn}} = w_i x_{i1}$, where $w_i$ is a Rademacher random variable. All sequences are initialized with $\tilde{\btheta}^0$, and a constant step size $\mu > 0$ is employed.

It is important to note that the analysis involves additional concentration inequalities due to the presence of corruption. Specifically, we need to ensure that the absence of terms such as $\mu \frac{1}{m} \bx_l \bx_l^{\top} \btheta^{t, (l)}$ does not cause the gradient $\nabla f^{(l)}(\btheta)$ to deviate significantly from $\nabla f(\btheta)$. We demonstrate that such deviations remain controlled, ensuring the robustness of the overall analysis.
\begin{align}
    \| \mu \frac{1}{m} \eta_i \bx_l \bx_l^{\top} \btheta^{t, (l)} \| &\mathop{\lesssim}\limits_{(i)} \mu \frac{1}{m} \log m \abs{\bx_l^{\top} \btheta^{t, (l)}} \| \bx_l \| \\
    &\mathop{\lesssim}\limits_{(ii)} \mu \frac{1}{m} \log m \sqrt{\log m} \| \btheta^{t, (l)}  \| \sqrt{d} \\
    &\lesssim \mu \frac{\sqrt{d\log^3 m}}{m} \| \btheta^{t, (l)}  \|~.
\end{align}
In the above derivation, step $(i)$ follows from the fact that $\eta_i = \newO{\log m}$, while step $(ii)$ leverages the tail bounds for the maximum of a standard Gaussian variable and the norm of a Gaussian vector. A similar reasoning applies to other corruption-related terms that emerge in the analysis. The rest of the analysis is similar to \citet{chen2019gradient}.

\section{Experimental Comparisons}

We evaluated the performance of our method against established approaches for robust phase retrieval, specifically comparing with Median RWF~\citep{zhang2016provable} and PhaseLift~\citep{hand2017phaselift}. Median RWF employs spectral initialization, while PhaseLift relies on a convex SDP formulation. The covariates $\bx_i$ were sampled from a standard normal distribution, and the corruption was uniformly distributed within the range $[-5, 5]$. We conducted experiments with $k = n^{\frac{2}{3}}$ and $n = 10 d \log d$ for $d \in \{50, 500, 1000\}$. Algorithm~\ref{alg: grad descent} served as \oracle{} in our method. All the first-order updates (inner loop in our method and gradient descent type updates in Median RWF) were run for $500$ iterations. Performance metrics included relative error, defined as $\frac{d(\btheta, \btheta^*)}{\| \btheta^* \|}$, and runtime, with results averaged over 5 independent runs. All methods were implemented in MATLAB and tested on a MacBook Pro with macOS 14.4.1, 32 GB memory, and an Apple M2 Max chip. For PhaseLift, CVX was employed as the SDP solver, and experiments were manually terminated if no solution was found within 5 minutes.

\begin{table}[!ht]
\centering
\caption{Comparison of the performance of different methods across various values of $d$. Runtime is measured in seconds, with all values rounded to three decimal places.}
\begin{small}
\begin{tabular}{*7c}
\toprule
Method &  \multicolumn{2}{c}{$d = 50$} & \multicolumn{2}{c}{$d=500$} & \multicolumn{2}{c}{$d=1000$}\\
\midrule
{}   & Rel Error   & Run time   & Rel Error   & Run time & Rel Error   & Run time\\
\altmin{}   &  $0.003 \pm 0.002$ & $0.553 \pm 0.01$   & $0.000 \pm 0.000$  & $13.060 \pm 0.303$ & $0.000 \pm 0.000$ & $56.519 \pm 0.690$\\
Median RWF   &  $0.000 \pm 0.000$ & $0.404 \pm 0.015 $   & $0.000 \pm 0.000$  & $ 24.907 \pm 0.518 $ & $ 0.000 \pm 0.000 $ & $ 139.219 \pm 0.926 $\\
PhaseLift   &  $0.000 \pm 0.000 $  &  $102.093 \pm 6.945 $   & NA  & $> 300$& NA & $> 300$\\
\bottomrule
\end{tabular}
\end{small}
\end{table}

For $d = 50$, all methods demonstrated comparable performance, though PhaseLift exhibited the highest runtime due to its SDP-based approach. As $d$ increased, PhaseLift failed to produce results within the 5-minute threshold. For larger dimensions, $d \in \{500, 1000\}$, our method showed performance comparable to Median RWF in terms of relative error, but it significantly outperformed Median RWF in terms of runtime efficiency.


\end{document}